\documentclass{article}

\usepackage{microtype}
\usepackage{graphicx}
\usepackage{subcaption}
\usepackage{booktabs} 

\usepackage{hyperref}
\usepackage{makecell}
\usepackage{comment}
\usepackage{pifont}
\newcommand{\cmark}{\ding{51}} 
\newcommand{\xmark}{\ding{55}}

\usepackage[preprint]{icml2026}

\usepackage{amsmath}
\usepackage{amssymb}
\usepackage{mathtools}
\usepackage{amsthm}

\usepackage{hyperref}   

\usepackage[capitalize,noabbrev]{cleveref}

\theoremstyle{plain}
\newtheorem{theorem}{Theorem}[section]

\newtheorem{lemma}[theorem]{Lemma}

\theoremstyle{definition}
\newtheorem{definition}[theorem]{Definition}
\newtheorem{assumption}[theorem]{Assumption}
\theoremstyle{remark}
\newtheorem{remark}[theorem]{Remark}
\definecolor{rowgray}{gray}{0.95}
\definecolor{crimson}{RGB}{220, 20, 60}
\definecolor{royalblue}{RGB}{65, 105, 225}
\definecolor{darkgrey}{RGB}{169, 169, 169}
\newtheorem{innercustomthm}{Theorem}
\newenvironment{customthm}[1]{\renewcommand\theinnercustomthm{#1}\innercustomthm}{\endinnercustomthm}
\newtheorem{innercustomlem}{Lemma}
\newenvironment{customlem}[1]{\renewcommand\theinnercustomlem{#1}\innercustomlem}{\endinnercustomlem}

\usepackage[textsize=tiny]{todonotes}

\icmltitlerunning{FlexDOME: Near-Constant Strong Violation and Last-Iterate Convergence}

\begin{document}

\twocolumn[
  \icmltitle{Near-Constant Strong Violation and Last-Iterate Convergence\\ for Online CMDPs via Decaying Safety Margins}



  \icmlsetsymbol{equal}{*}

\begin{icmlauthorlist}
    \icmlauthor{Qian Zuo}{inf}
    \icmlauthor{Zhiyong Wang}{math}
    \icmlauthor{Fengxiang He}{inf}
\end{icmlauthorlist}

\icmlaffiliation{inf}{School of Informatics, University of Edinburgh, Edinburgh, United Kingdom}
\icmlaffiliation{math}{School of Mathematics, University of Edinburgh, Edinburgh, United Kingdom}

\icmlcorrespondingauthor{Zhiyong Wang}{zhiyongwangwzy@gmail.com}

\icmlkeywords{Machine Learning, ICML}

\vskip 0.3in
]



\printAffiliationsAndNotice{}  

\begin{abstract}
  We study safe online reinforcement learning in Constrained Markov Decision Processes (CMDPs) under strong regret and violation metrics, which forbid error cancellation over time. Existing primal-dual methods that achieve sublinear strong reward regret inevitably incur growing strong constraint violation or are restricted to average-iterate convergence due to inherent oscillations. To address these limitations, we propose the \textbf{Flex}ible safety \textbf{D}omain \textbf{O}ptimization via \textbf{M}argin-regularized \textbf{E}xploration (FlexDOME) algorithm, the first to provably achieve near-constant $\tilde{O}(1)$ strong constraint violation alongside sublinear strong regret and non-asymptotic last-iterate convergence. FlexDOME incorporates time-varying safety margins and regularization terms into the primal-dual framework. Our theoretical analysis relies on a novel \textit{term-wise asymptotic dominance strategy}, where the safety margin is rigorously scheduled to asymptotically majorize the functional decay rates of the optimization and statistical errors, thereby clamping cumulative violations to a near-constant level. Furthermore, we establish non-asymptotic last-iterate convergence guarantees via a policy-dual Lyapunov argument. Experiments corroborate our theoretical findings.
\end{abstract}

\section{Introduction}

Reinforcement Learning (RL) has achieved remarkable successes in recent years~\citep{liu2024rl,ruiz2025quantum,milani2024explainable}. It formulates sequential decision making as a Markov Decision Process (MDP), where an agent learns a policy to maximize cumulative reward~\citep{sutton1998reinforcement}. However, classical MDPs lack sufficient mechanisms to ensure safety, hindering deployment in safety-critical environments~\citep{garcia2015comprehensive}
. Constrained Markov Decision Processes (CMDPs) address this limitation by incorporating constraints on cumulative costs~\citep{altman1999constrained}.

In online CMDPs, an agent needs to learn in an unknown environment while satisfying safety constraints per episode, which makes safety particularly challenging. Classic reward regret and constraint violation allow cancellations over time~\citep{qiu2020upper,ding2020natural}, obscuring prolonged unsafe behavior, which is unacceptable in safety-critical settings~\citep{fisac2019hierarchical}. This motivates strong metrics, i.e., strong reward regret (sum of positive per-episode suboptimality) and strong constraint violation (sum of positive per-episode violations), with no cancellation \citep{efroni2020exploration}. Such strong safety guarantees naturally arise in settings like power-grid regulation, where cumulative violations induce mechanical or thermal stress, and in clinical control (e.g., automated anesthesia), where even a few severe threshold breaches may trigger irreversible harm~\citep{su2025review,cai2023towards}. 
In these cases, harms cannot be `averaged out,' making strong metrics more suitable than classic ones. In this setting, a fundamental trilemma emerges among (i) stringent safety, (ii) Strong no-regret guarantees, and (iii) last-iterate convergence. Existing approaches are forced to compromise: on the one hand, primal-dual methods achieve last-iterate convergence but their strong violations grow with the number of episodes $T$~\citep{muller2024truly,kitamura2024policy}; on the other hand, methods with tighter regrets often sacrifice last-iterate convergence by applying only to averaged policies~\textcolor{blue}{\citep{stradi2024learning,stradi2025optimal,zhu2025an}}. While achieving stringent safety (e.g., near-constant or even zero violation) is well-studied under the classic regret paradigm~\citep{liu2021learning,bai2022achieving, ma2024learn}, these assurances vanish under the more demanding strong metrics.
This naturally raises a pivotal question: 
\begin{quote}
\centering
    \textit{Can we design a CMDP algorithm that achieves (i) near-constant strong constraint violation, (ii) sublinear strong regret, and (iii) last-iterate convergence?}
\end{quote}
We answer this question affirmatively. To address this challenge, we propose \textbf{Flex}ible safety \textbf{D}omain \textbf{O}ptimization via \textbf{M}argin-regularised \textbf{E}xploration (FlexDOME). FlexDOME advances the standard primal-dual framework by introducing a \textit{dual-dynamic mechanism}: it simultaneously employs a time-decaying safety margin to tighten the feasible set and time-varying regularization terms to stabilize the learning trajectory. Intuitively, the safety margin creates a proactive buffer against uncertainty. 
At early stages of learning, when the uncertainty is high, we take a large margin, steering the agent away from high-risk regions; as information accrues, the margin decays, progressively relaxing conservatism and enabling the pursuit of (possibly) higher-reward policies. The success of this dynamic margin hinges on a stable learning process. Standard primal-dual methods are often plagued by oscillatory dynamics~\citep{efroni2020exploration}. We tame these oscillations by introducing time-varying entropy and $L_2$ regularization to ensure a strongly convex-concave optimization landscape.

We prove that FlexDOME attains $\tilde{O}(1)$ strong constraint violation and $\tilde{O}(T^{5/6})$ strong reward regret, alongside non-asymptotic last-iterate convergence. To the best of our knowledge, this is the first primal-dual algorithm to achieve all three guarantees; see Table \ref{tab:comparison}. Our core theoretical analysis hinges on a novel \textit{term-wise asymptotic dominance strategy}. This is a fundamental departure from the use of safety margins in weak regret analyses, where the total accumulated error can be simply offset by the sum of safety margins~\citep{liu2021learning,kalagarla2025a}. However, we identify that this global compensation strategy fails under strong metrics, where error cancellations are strictly forbidden. Instead, our analysis adopts a \textit{functional perspective}: we treat the per-episode optimization and statistical errors as time-dependent functions and explicitly construct the safety margin to possess a decay rate that is asymptotically slower than or equal to these error functions. By ensuring this functional dominance at the step-wise level, we guarantee that the sequence of positive violations remains summable, thereby clamping the cumulative strong violation to $\tilde{O}(1)$.

Furthermore, the specific decay schedules for the learning rate, regularization terms, and safety margins were \textit{not heuristically pre-determined}. Instead, they emerge as the rigorous analytical solutions to a dynamic optimization problem: simultaneously minimizing the upper bounds for both strong regret and strong violation. Based on Lemma \ref{lem:error bounds}, we identify that these two objectives are conflicting. Thus, the derived decay rates constitute the necessary conditions to resolve this conflict and represent the optimal balance achievable within our framework. Finally, we conduct experiments on tabular CMDPs with both fixed and stochastic thresholds. The results fully corroborate our theoretical claims, demonstrating that FlexDOME maintains near-zero instantaneous violations and achieves near-constant strong violations. We anticipate that our framework can provide new insights for provably safe reinforcement learning.



\paragraph{Related Work.}
Under weak regret metrics, primal-dual methods establish $\tilde{O}(\sqrt{T})$ regret and constraint violation guarantees~\citep{efroni2020exploration}. To enhance safety, subsequent works introduce a safety margin to primal-dual methods, attaining $\tilde{O}(\sqrt{T})$ weak regret and $\tilde{O}(1)$ weak constraint violation guarantees~\citep{liu2021learning,kalagarla2025a}. However, their analysis relies on using the cumulative safety margin to offset the cumulative constraint violation; consequently, the underlying primal-dual dynamics are still prone to the oscillations that preclude guarantees for strong regret or last-iterate convergence.
The allowance for error cancellation makes weak regret an inadequate metric for safety-critical tasks. To address this,~\citet{efroni2020exploration} introduce the more stringent strong regret metric, which accumulates only positive deviations of reward and constraint.~\citet{muller2023cancellation} propose an augmented Lagrangian
method which attains sublinear strong regret/violation with a strictly known safe policy. Relaxing the requirement of a strictly safe policy,~\citet{muller2024truly} and \citet{kitamura2024policy} propose a regularized primal-dual framework to achieve the last-iterate convergence guarantee with strong constraint violation, achieving rates of $\tilde{O}(T^{0.93})$ and $\tilde{O}(T^{6/7})$, respectively.
In parallel,~\citet{stradi2024learning} study adversarial loss with stochastic hard constraints that achieves $\tilde{O}(\sqrt{T})$ weak regret and near-constant strong violation with average-iterate convergence.~\citet{stradi2025optimal} and~\citet{zhu2025an} attain a tighter $\tilde{O}(\sqrt{T})$ strong regret and violation under the guarantee of average-iterate convergence. Table \ref{tab:comparison} summarizes the theoretical results from our work and the most relevant existing methods under strong regret and strong violation metrics.
\begin{table*}[t]
\centering
\setlength{\tabcolsep}{6pt} 
\renewcommand{\arraystretch}{1.1}
\begin{tabular}{lcccccc}
\toprule
\textbf{Algorithm} & \makecell{\textbf{Strong} \\ \textbf{Regret}} & \makecell{\textbf{Strong} \\ \textbf{Violation}} & \makecell{\textbf{Last-iterate} \\ \textbf{Convergence}} &\makecell{\textbf{Unknown} \\ \textbf{Safe Policy}} &\makecell{\textbf{Stochastic} \\ \textbf{Threshold}} \\
\midrule
\hyperlink{cite.muller2023cancellation}{OPTAUG-CMDP} & $\tilde{O}(\sqrt{T})$   & $\tilde{O}(\sqrt{T})$ & \xmark & \xmark & \xmark \\
\hyperlink{cite.stradi2025optimal}{CPD-PO} & $\tilde{O}(\sqrt{T})$   & $\tilde{O}(\sqrt{T})$ & \xmark& \cmark     & \xmark \\
\hyperlink{cite.zhu2025an}{OMDPD} & $\tilde{O}(\sqrt{T})$   & $\tilde{O}(\sqrt{T})$ & \xmark& \cmark     & \xmark \\
\hyperlink{cite.muller2024truly}{RPD-OE} & $\tilde{O}(T^{0.93})$ & $\tilde{O}(T^{0.93})$ & \cmark& \cmark & \xmark \\
\hyperlink{cite.kitamura2024policy}{UOpt-RPGP} & $\tilde{O}(T^{6/7})$  & $\tilde{O}(T^{6/7})$ & \cmark& \cmark & \xmark\\ 
\midrule
\textbf{FlexDOME} &  $\tilde{O}(T^{5/6})$ &  $\tilde{O}(1)$ &  \cmark& \cmark &  \cmark \\
\bottomrule
\end{tabular}
\caption{Comparison between FlexDOME and related work under strong regret and violation metrics. For clarity, dependencies on the state space ($S$), action space ($A$), and horizon ($H$) are omitted here.}
\label{tab:comparison}
\end{table*}
\section{Preliminaries}\label{sec:preliminary}
\paragraph{Notation.} For any $x \in \mathbb{R}$, we define the operation $[x]_{+}:=\max\{0,x\}$ to be the positive truncation of $x$. We use $O(\cdot)$ and $\Omega(\cdot)$ to denote asymptotic upper and lower bounds, respectively, and $\Theta(\cdot)$ when a bound is asymptotically tight. The symbol $\tilde O(\cdot)$ hides polylogarithmic factors, and $\lesssim$ denotes inequality up to constants and polylogarithmic factors.
\paragraph{Constrained Markov decision process (CMDP).}
We consider a finite-horizon Markov decision process (MDP), where the state space is denoted by $\mathcal{S}$ (with finite cardinality $S$), the action space by $\mathcal{A}$ (with finite cardinality $A$), and the horizon by $H$.  
At step $h\in[H]$, the agent occupies state $s_h\!\in\!\mathcal{S}$, takes action $a_h\!\in\!\mathcal{A}$, and the subsequent state $s_{h+1}$ is sampled from the transition probability
$p:\mathcal{S}\times\mathcal{A}\times\mathcal{S}\to[0,1]$.  
$r_h:\mathcal{S}\times\mathcal{A}\to[0,1]$ represents the reward function at each step $h$. A policy $\pi=(\pi_1,\dots,\pi_H)\in\Pi$ specifies a distribution $\pi_h(\cdot\mid s)\in\Delta(\mathcal{A})$ for every state–step pair, where $\Pi:=\{(\pi_1,\cdots, \pi_H)\mid\forall h,s:\pi_h(\cdot\mid s)\in\Delta(\mathcal{A})\}$. A Constrained MDP augments this setting with $m$ constraints.  
For constraint $i\in[m]$ and step $h$, a constraint $d_{i,h}(s,a)\in[0,1]$ is incurred; the cumulative expectation must not fall below a given threshold $\alpha_i\in[0,H]$. Thus, a CMDP can be fully characterized by $\mathcal{M}=(\mathcal{S},\mathcal{A},H,p,r,d,\alpha)$.

In this work, we adopt a generalized online learning setting where the safety thresholds are stochastic rather than fixed constants. In this setting, the agent must estimate the expected value of rewards, constraints, and thresholds simultaneously from interactions. Specifically, at each interaction $(s,a)$ for step $h$ and episode $t$, the agent observes stochastic samples: a reward $\tilde{r}_h^t(s, a)$, constraints $\{\tilde{d}_{i,h}^t(s, a)\}_{i=1}^m$, and thresholds $\{\tilde{\alpha}_{i,h}^t\}_{i=1}^m$ which are state-action independent. These are drawn from stationary but hidden distributions $\mathcal{R}$, $\{\mathcal{G}_i\}_{i=1}^m$ and $\{\mathcal{L}_i\}_{i=1}^m$, respectively. The formal definition of this setting is detailed in Appendix~\ref{app:distionction}.

\begin{remark}
    This formulation generalizes the standard CMDP setting. The fixed-threshold scenario can be recovered as a special instance where the threshold distribution is a Dirac delta function centered at a constant. Consequently, our theoretical analyzes and results apply directly to the standard fixed-threshold setting without loss of generality.
\end{remark}

\paragraph{Value and objective functions.}
For any vector $v\in[0,1]^{\mathcal{S}\times\mathcal{A}}$ and policy $\pi\in \Pi$, consider the value functions  
\begin{align*}
 V_{v,h}^{\pi}(s)&=\mathbb{E}_{\pi,p}\!\bigl[\sum_{h'=h}^{H} v_{h'}(s_{h'},a_{h'}) \mid s_h=s\bigr],\\
  Q_{v,h}^{\pi}(s,a)&=\mathbb{E}_{\pi,p}\!\bigl[\sum_{h'=h}^{H} v_{h'}(s_{h'},a_{h'}) \mid s_h=s,a_h=a\bigr],   
\end{align*}
where $V_{v,h}^{\pi}(s)$ denotes the expected sum of $v$ from step $h$ onward given $s_h = s$, and $Q_{v,h}^{\pi}(s,a)$ denotes the same expectation further conditioned on $a_h = a$. For notational brevity,
set $V_v^\pi:=V_{v,1}^\pi(s_1)$.  
The objective is to find a policy solution $\pi^\star$ to the following policy optimization problem,
\begin{equation}
\max_{\pi \in \Pi}\;V_r^\pi
\quad\text{subject to}\quad
V_{d_i}^\pi\;\ge\; \alpha_i\quad (\forall i\in [m]),
\quad
\label{eq:CMDP-problem}
\end{equation}
which identifies a policy that maximizes the expected cumulative reward while ensuring that the expected cumulative value of each constraint signal satisfies its threshold.

\paragraph{Training protocol.}
Across $T$ episodes, a policy $\pi_t$ is selected at the beginning of episode $t$ and executed for $H$ steps. The goal is to simultaneously minimize its strong reward regret and strong constraint violation,
\begin{align*}
    \mathcal{R}_T(r) &:= \sum_{t=1}^{T} \Big[ V_r^{\pi^\star} - V_r^{\pi_t}\Big]_+,\\\text{      }
    \mathcal{R}_T(d) &:= \max_{i \in [m]} \sum_{t=1}^{T} \Big[ \alpha_i-V_{d_i}^{\pi_t}\Big]_+.
\end{align*}
These expressions measure the cumulative sum of only the positive deviations, capturing how much the reward underperforms the optimal or how much the constraints are violated in each episode. 
Each positive error contributes its full amount to the total, and no future episode can offset it. Throughout, we assume the following Slater condition, which is mild as it holds when there exists some (unknown) strictly feasible policy~\citep{efroni2020exploration, qiu2020upper,ying2022dual,ding2023last,kitamura2024policy}.

\begin{assumption}\label{assumption} 
There exists an unknown policy $\pi^0 \in \Pi$ such that $V_{d_i}^{\pi^0}\ge d_i^0$, where $d_i^0>\alpha_i$ for all $i\in[m]$. Set the Slater gap $\Xi:=\min_{i\in[m]}\{d_i^0-\alpha_i\}$.
\end{assumption}

\section{FlexDOME}\label{sec:primal-dual scheme}

This section introduces our algorithm, and the designs behind to ensure the near-constant strong violation, sublinear strong regret and last-iterate convergence.

\subsection{The Primal-Dual Scheme in FlexDOME}
To address the fundamental trilemma, FlexDOME employs a dual-dynamic mechanism: it deploys a decaying safety margin to proactively buffer against uncertainty, in tandem with time-varying regularization to induce the geometric stability essential for last-iterate convergence.

\paragraph{Decaying Safety margin.} Our core idea is to proactively establish a `margin of safety' to mitigate the effects of uncertainty in guaranteeing safety. We translate this idea into a formal mechanism by first introducing a time-decaying safety margin $\epsilon_{i,t}$ for each episode $t$ and constraint $i$ into the original optimization problem (\ref{eq:CMDP-problem}):
\begin{equation}\label{eq:CMDP-2}
\max_{\pi\in \Pi}\;V_{r}^\pi
\quad
\text{s.t.}
\quad
V_{d_i}^\pi\ge
\alpha_{i}+\epsilon_{i,t} \quad(\forall i\in [m]),
\end{equation}
where the constraints are tightened by the safety margins 
to enhance safety during learning. The corresponding Lagrangian function is defined as follows:
\begin{equation*}
    \mathcal{L}_t(\pi,\lambda):=
V_{r}^\pi+\sum_{i=1}^m
\lambda_i\left(V_{d_i}^\pi-\epsilon_{i,t}-\alpha_{i}\right),
\end{equation*}
where $\lambda=[\lambda_1,\dots,\lambda_m]^\top\in\mathbb{R}_+^m$ is the vector of non-negative dual variables (or Lagrange multipliers), with each $\lambda_i$ corresponding to the $i$-th constraint.

\paragraph{Time-Varying Regularizations.} While the safety margin creates a dynamic buffer, standard primal-dual CMDP formulations lack strong convexity-concavity, which can cause oscillatory dynamics~\citep{stooke2020responsive}. These oscillations can breach a simple safety buffer, and thus fail to achieve stringent safety guarantees~\citep{moskovitz2023reload,muller2024truly}.
To overcome this limitation, we introduce a time-varying regularization framework that provides the geometric stability necessary for the safety margin to be effective. By augmenting the Lagrangian with dynamically scaled entropy and $\ell_2$-norm penalties, we reshape the optimization landscape. Entropy regularization, $\mathcal{H}(\pi)$, ensures the primal objective is strongly concave, preventing extreme policy updates. The $\ell_2$ penalty, guarantees the dual objective is strongly convex, reducing gradient oscillations. Together, these components create a strongly convex-concave structure. The resulting regularized Lagrangian for regularization parameter $\tau_t>0$ at episode $t$ is formulated as:
\begin{align}
        \mathcal{L}_{\tau_{t},t}(\pi, \lambda) 
        :=&V_{r}^\pi+
\lambda^\top\Bigl(V_{d}^\pi-\epsilon_t-\alpha\Bigr)\nonumber\\&+ \tau_{t}\left(\mathcal{H}(\pi) + \frac{1}{2} \|\lambda\|^2\right),
\end{align}
where $\mathcal{H}(\pi) := -\mathbb{E}_{\pi}\left[\sum_{h=1}^{H} \log(\pi_h(a_h|s_h)) \right]$ is the policy entropy and $\epsilon_t$ denotes the vector of safety margins. The objective is to find the saddle point of this regularized problem over the policy space $\Pi$ and a compact dual domain $\mathcal{C}:=[0,4H/\Xi]^m$:
\begin{equation}\label{eq:opt-ques}
    \max_{\pi \in \Pi} \min_{\lambda \in \mathcal{C}} \mathcal{L}_{\tau_{t},t}(\pi, \lambda).
\end{equation}
The strongly convex-concave structure guarantees that this problem has a unique saddle point, $\left(\pi_{\tau_{t},\epsilon}^{\star},\lambda_{\tau_{t},\epsilon}^\star\right)$, which we define as the regularized optimizer for episode $t$.

\subsection{Estimates}\label{sec:estimate}
FlexDOME employs a hybrid estimation strategy to navigate the unknown environment. It constructs optimistic estimates for rewards, constraints, and the entropy term to encourage exploration, while the transition model and thresholds are unbiasedly estimated directly from empirical data. Let $(s_h^l, a_h^l)$ denote the state-action pair visited in episode $l$ at step $h$. The term $\mathbf{1}_{\{\cdot\}}$ is the indicator function; thus, $N_h^{t-1}(s,a)=\sum_{l=1}^{t-1}\mathbf{1}_{\{s_h^l=s,a_h^l=a\}}$ is the total number of visits to $(s,a)$ at step $h$ before episode $t$. Then, the empirical averages for rewards, constraints, thresholds and transition probabilities can be calculated as follows: 
\begin{equation}\label{eqs:empirical}
\begin{aligned}
    \hat{r}_h^{t-1}(s,a) &:= \frac{\sum_{l=1}^{t-1} \tilde{r}_{h}^l(s,a)\,\mathbf{1}_{\{s_h^l=s,a_h^l=a\}}}{\max\{1, N_h^{t-1}(s,a)\}},\\
    \quad \hat{d}_{i,h}^{t-1}(s,a) &:= \frac{\sum_{l=1}^{t-1} \tilde{d}_{i,h}^l(s,a)\,\mathbf{1}_{\{s_h^l=s,a_h^l=a\}}}{\max\{1, N_h^{t-1}(s,a)\}}, \\
    \hat{\alpha}_{i}^{t-1} &:= \frac{\sum_{l=1}^{t-1}\sum_{h=1}^H \tilde{\alpha}_{i,h}^l}{(t-1)H}, \\
    \quad \hat{p}_h^{t-1}(s'\mid s,a) &:= \frac{\sum_{l=1}^{t-1} \mathbf{1}_{\{s_h^l=s,a_h^l=a,s_{h+1}^l=s'\}}}{\max\{1, N_h^{t-1}(s,a)\}}.
\end{aligned}
\end{equation}
We then construct the estimators for use in episode $t$. The safety threshold is estimated as the global empirical average of all historical observations: $\overline{\alpha}_{i}^t:= \hat{\alpha}_i^{t-1}$. As each true threshold is constant, this method is data-efficient and yields an estimate that is independent of any specific state-action pair. The remaining state-action dependent estimators are constructed as follows:
\begin{equation}\label{eqs:opt}
\begin{aligned}
    \overline{r}_h^t(s,a) &:= \hat{r}_h^{t-1}(s,a)+\phi_h^{t-1}(s,a),\\
    \overline{d}_{i,h}^t(s,a)&:= \hat{d}_{i,h}^{t-1}(s,a)+\phi_h^{t-1}(s,a),\\
    \overline{\psi}_{h}^t(s,a)&:= -\log(\pi_h^t(a\!\mid\!s))+\phi_{h}^{p,t-1}(s,a)\log(A),\\\overline{p}_h^t(s'|s,a)&:=\hat{p}_h^{t-1}(s'|s,a).
\end{aligned}
\end{equation}
The bonus term $\phi_h^{t}(s,a) := \phi_{h}^{r,t}(s,a) + \phi_{h}^{p,t}(s,a)$ combines the uncertainties from both rewards and transition estimations, where for any confidence parameter $\delta\in (0,1)$, the reward bonus is $\phi_h^{r,t}(s,a)=O\left(\sqrt{\frac{\log(mSAHT/\delta)}{\max\{1, N_h^t(s,a)\}}}\right)$ and the transition bonus is $\phi_h^{p,t}(s,a)=O\left(H\sqrt{\frac{S+\log(SAHT/\delta)}{\max\{1, N_h^t(s,a)\}}}\right)$.

\subsection{Learning Algorithm} 

We now present \textbf{FlexDOME}, 
detailed in Algorithm \ref{alg:main}. In each episode $t$, the algorithm first constructs an optimistic empirical CMDP $\mathcal{M}_t:=(\mathcal{S},\mathcal{A},H,\overline{p}_t,\overline{r}_t,\overline{d}_t,\overline{\alpha}_t)$, using the estimators from Section \ref{sec:estimate}. It then performs policy evaluation. To prevent optimistic bonuses from inflating value estimates unboundedly, we use a Truncated Policy Evaluation (TPE) routine~\citep{efroni2020exploration}. TPE computes $V$-values for the constraint estimates and $Q$-values for the composite objective $\overline{y}_t := \overline{r}_t + \lambda_t^\top \overline{d}_t + \tau_t \overline{\psi}_t$, which aggregates the optimistic estimates of the reward, constraints, and entropy. 
See Algorithm~\ref{alg:TPE} in Appendix~\ref{app:property} for details. Based on this, FlexDOME executes a single primal-dual update: the policy (primal variable) is updated via mirror ascent, and the dual variables are updated via projected gradient descent. The resulting policy is then deployed to collect new data for the next iteration.

\begin{algorithm}[t]
\caption{FlexDOME}\label{alg:main}
\begin{algorithmic}[1] 
    
    \STATE \textbf{Input:} $\mathcal{C}=[0,\frac{4H}{\Xi}]^{m}$, stepsize $\eta_{t}$, regularization $\tau_{t}$, number of episodes $T$, safety margin $\epsilon_{i,t}\;(\forall i)$
    
    \STATE \textbf{Initialize:} policy $\pi_{1,h}(a\mid s)=\frac1A\;(\forall s,a,h)$, $\lambda_{1}=\textbf{0}\in\mathbb{R}^{m}$
    
    \FOR{$t = 1$ \textbf{to} $T$}
        \STATE Update estimators $\overline{r}_{t}$, $\overline{d}_{t}$, $\overline{\alpha}_t$, $\overline{\psi}_{t}$, and $ \overline{p}_{t}$
        
        \STATE Truncated policy evaluation (Algorithm~\ref{alg:TPE}) for $\overline{y}_{t}$ and $\overline{d}_{t}$: $\left(\hat{Q}^{t}_{\bar{y}_{t}}(\cdot),\hat{V}^{t}_{\bar{d}_{t}}\right)\gets\textup{TPE}\!\bigl(\pi_{t},\lambda_{t}, \overline{r}_{t},\overline{d}_{t},\overline{\psi}_{t},\overline{p}_{t}\bigr)$
        
        \STATE Policy Update ($\forall h,s,a$): $\displaystyle
            \pi_{t+1,h}(a\!\mid\! s)\propto \pi_{t,h}(a\!\mid\!s)
            \exp\!\bigl(\eta_{t}\,\hat{Q}^{t}_{h,\bar{y}_{t}}(s,a)\bigr)
            $
            
        \STATE Dual Update: $\displaystyle
            \lambda_{t+1}\gets \text{Proj}_{\mathcal{C}}
            \!\Bigl((1-\eta_{t}\tau_{t})\lambda_{t}
            -\eta_{t}\bigl(\hat{V}^{t}_{\bar{d}_{t}}-\epsilon_t-\overline{\alpha}_t\bigr)\Bigr)$
            
        \STATE Rollout $\pi_{t}$ and update counters and empirical model (i.e., $\hat{r}_{t},\hat{d}_{t},\hat{\alpha}_t, \hat{p}_{t}, N_t$)
    \ENDFOR
\end{algorithmic}
\end{algorithm}

\section{Theoretical Analysis}

This section establishes the theoretical guarantees for Algorithm \ref{alg:main}. We first present our main results on strong regret and violation bounds, followed by our practical guarantee of last-iterate convergence. We then detail the key technical lemmas that underpin these results. The full proofs for this section are deferred to Appendix \ref{sec:regrets} and Appendix \ref{app:last-iterate}.
 
\subsection{Strong Regret Bounds}\label{sec:sgbound}

We first provide the main theoretical results for FlexDOME.
\begin{theorem}[Strong regret bounds for reward and violation]\label{thm: bounded-cv}
    For any confidence parameter $\delta\in(0,1)$, let $\eta_{t}=t^{-5/6}$, $\tau_{t}=t^{-1/6}$, and $\epsilon_{i,t} =18/5\sqrt{H^3C_B}\left(t^{-1/6}\cdot \log(4SAHt/\delta)^{1/4}\right)$ for any constraint $i$. Then, with probability at least $1-\delta$, Algorithm \ref{alg:main} achieves the following bounds:
    \begin{equation*}
        \mathcal{R}_{T}(r)\le\tilde{O}(T^{5/6}) \quad and \quad \mathcal{R}_{T}(d)=\tilde{O}(1),
    \end{equation*}
    where $T$ denotes the number of episodes, $C_B=O(m,S,A,H)$ is a $T$-independent constant and $\tilde{O}$ hides polylogarithmic factors in $(S,A,H,m,\log(T),\log(\frac{1}{\delta}),\Xi)$.
\end{theorem}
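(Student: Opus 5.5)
We decompose the per-episode error of $\pi_t$ around the regularized, margin-tightened saddle point $(\pi^\star_{\tau_t,\epsilon},\lambda^\star_{\tau_t,\epsilon})$ of \eqref{eq:opt-ques}, and bound each piece as an explicit function of $t$.

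\textbf{Per-episode decomposition.} For every constraint $i$ and episode $t$ we write
\begin{align*}
\alpha_i - V_{d_i}^{\pi_t}
&= \bigl(\alpha_i+\epsilon_{i,t} - V_{d_i}^{\pi^\star_{\tau_t,\epsilon}}\bigr)
+ \bigl(V_{d_i}^{\pi^\star_{\tau_t,\epsilon}} - V_{d_i}^{\pi_t}\bigr) - \epsilon_{i,t}.
\end{align*}
The first bracket is the constraint slack of the regularized optimizer. By strong concavity in $\lambda$ from the $\frac{\tau_t}{2}\|\lambda\|^2$ term and the choice $\mathcal{C}=[0,4H/\Xi]^m$ (Slater, Assumption~\ref{assumption}), this slack is bounded by $O(\tau_t\|\lambda^\star\|)=O(\tau_t H/\Xi)$. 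Alternatively, the KKT conditions give $V_{d_i}^{\pi^\star_{\tau_t,\epsilon}}\ge \alpha_i+\epsilon_{i,t}-\tau_t\lambda^\star_i$, and we handle the $\tau_t\lambda^\star$ term through the same dominance argument used below.

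The second bracket is bounded by $H\,\|\pi_t-\pi^\star_{\tau_t,\epsilon}\|$ (through occupancy measures), hence by $\sqrt{H^2\,\mathrm{KL}(\pi^\star_{\tau_t,\epsilon}\|\pi_t)}$. We then invoke the policy-dual Lyapunov recursion (the error bounds of Lemma~\ref{lem:error bounds}). With $\Phi_t:=\mathrm{KL}(\pi^\star_{\tau_t,\epsilon}\|\pi_t)+\frac12\|\lambda^\star_{\tau_t,\epsilon}-\lambda_t\|^2$, it reads
\begin{align*}
\Phi_{t+1}\le (1-\eta_t\tau_t)\Phi_t + O(\eta_t^2 H^4/\tau_t^{0}) + \text{drift}(\tau_t-\tau_{t+1},\epsilon_t-\epsilon_{t+1}) + \eta_t\cdot(\text{bonus/statistical error}).
\end{align*}
The statistical terms are controlled on a $1-\delta$ good event by the optimistic bonuses and the threshold concentration $|\overline\alpha_i^t-\alpha_i|\lesssim\sqrt{\log(\cdot)/(tH)}$. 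Unrolling with $\eta_t\tau_t=t^{-1}$ gives $\Phi_t\lesssim C_B\,t^{-1/3}\sqrt{\log(4SAHt/\delta)}$, which is where $C_B$ enters. Consequently
\[
V_{d_i}^{\pi^\star_{\tau_t,\epsilon}}-V_{d_i}^{\pi_t}\le \sqrt{H^3C_B}\,t^{-1/6}\log(4SAHt/\delta)^{1/4}\cdot c
\]
for an explicit constant $c$.

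\textbf{Term-wise dominance.} The margin $\epsilon_{i,t}=\tfrac{18}{5}\sqrt{H^3C_B}\,t^{-1/6}\log(\cdot)^{1/4}$ has exactly the decay of the optimization-plus-statistical error, with a constant exceeding $c$. The slack term is $O(t^{-1/6})$ with a $T$-independent constant. Hence there is a $T$-independent $t_0=\mathrm{poly}(S,A,H,m,1/\Xi,\log(1/\delta))$ such that for all $t\ge t_0$ the right-hand side of the decomposition is $\le 0$, so $[\alpha_i-V_{d_i}^{\pi_t}]_+=0$. The early episodes contribute at most $Ht_0=\tilde O(1)$, which gives $\mathcal{R}_T(d)=\tilde O(1)$.

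\textbf{Reward.} Similarly,
\[
[V_r^{\pi^\star}-V_r^{\pi_t}]_+\le |V_r^{\pi^\star}-V_r^{\pi^\star_{\tau_t,\epsilon}}|+|V_r^{\pi^\star_{\tau_t,\epsilon}}-V_r^{\pi_t}|.
\]
We bound the first term by $O(\tau_t H\log A+\|\lambda^\star\|\epsilon_t)=O(t^{-1/6})$, using sensitivity of the perturbed CMDP (Slater gives a bounded multiplier) together with the entropy bias. We bound the second term by $H\sqrt{\Phi_t}=\tilde O(t^{-1/6})$. Summing over $t$ gives $\tilde O(T^{5/6})$.

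The main obstacle is the Lyapunov step. The target saddle point moves with $t$ because $\tau_t$ and $\epsilon_t$ change, so we must bound the drift $\mathrm{KL}(\pi^\star_{\tau_{t+1}}\|\pi_t)-\mathrm{KL}(\pi^\star_{\tau_t}\|\pi_t)$ and absorb it into the contraction. We will also need the recursion to hold with a constant precise enough that $18/5$ dominates it. To handle the drift we would use the softmax Lipschitzness of $\pi^\star_{\tau}$ in $(\tau,\epsilon)$, which is of order $|\tau_{t+1}-\tau_t|/\tau_t^2\sim t^{-5/6}$. We would also use a lower bound $\pi_t\ge \exp(-\tilde O(H/\tau_t))$ to keep the log terms finite.
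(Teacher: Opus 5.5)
\textbf{Gap: the drift of the moving saddle point cannot be absorbed into the contraction at these rates.}

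Your architecture matches the paper's. You decompose around $(\pi^\star_{\tau_t,\epsilon},\lambda^\star_{\tau_t,\epsilon})$, bound the slack by $\tau_t\lambda^\star_i\le 4H\tau_t/\Xi$, and bound the optimization gap by a multiple of $\sqrt{\Phi_t}$. You then show $\Phi_t=\tilde O(t^{-1/3})$ and let $\epsilon_{i,t}$ dominate the errors term by term. The gap is the step you yourself call the main obstacle, and the fix you propose cannot work.

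Because $\eta_t\tau_t=t^{-1}$, the contraction is only $\prod_{k=j+1}^{t}(1-\eta_k\tau_k)=j/t$. So an additive per-episode term $d_j$ in the recursion contributes $\frac1t\sum_{j\le t} j\,d_j$ to $\Phi_{t+1}$. Keeping this at $O(t^{-1/3})$ requires $d_j=O(j^{-4/3})$.

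Your own estimate of how far the target moves, $|\tau_{t+1}-\tau_t|/\tau_t^2\sim t^{-5/6}$, is far above that.
\begin{itemize}
\item In the KL part, the drift $\mathrm{KL}(\pi^\star_{t+1}\|\pi_t)-\mathrm{KL}(\pi^\star_{t}\|\pi_t)$ is only controlled by $\|\Delta\pi^\star\|_1\max_a\log(1/\pi_t(a\mid s))$. With your bound $\pi_t\ge e^{-\tilde O(H/\tau_t)}$ this is $\tilde O(t^{-2/3})$ per episode, which unrolls to $\Phi_t\lesssim t^{1/3}$.
\item If $\lambda^\star_{\tau_t,\epsilon}$ moves at the same $t^{-5/6}$ rate, Young's inequality $\|\Delta\lambda^\star\|\sqrt{\Phi_t}\le\frac{\eta_t\tau_t}{2}\Phi_t+\frac{\|\Delta\lambda^\star\|^2}{2\eta_t\tau_t}$ still leaves an increment of order $t^{-5/3}\cdot t=t^{-2/3}$.
\end{itemize}
Put differently, an iterate contracting at rate $t^{-1}$ toward a target moving at speed $t^{-5/6}$ lags it by roughly $t^{1/6}$.

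So the bound $\Phi_t\lesssim C_Bt^{-1/3}\sqrt{\log(\cdot)}$ you state after ``unrolling'' does not follow once the drift is included. The resulting bound on $\Phi_t$ does not even tend to zero. Both the term-wise dominance giving $\mathcal{R}_T(d)=\tilde O(1)$ and the $\tilde O(T^{5/6})$ reward bound rest on that bound.

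\textbf{What closing the gap would require.} The tracking argument yields $\sqrt{\Phi_t}\lesssim t^{-1/6}$ only if the target moves by $O(t^{-7/6})$ per episode. That means $(\pi^\star_{\tau,\epsilon},\lambda^\star_{\tau,\epsilon})$ must be Lipschitz in $(\tau,\epsilon)$ with a $\tau$-independent constant. You would also need sharper control of the $\log(\pi^\star/\pi_t)$ weights in the KL drift.

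Softmax smoothness does not give this. The generic sensitivity bound for a $\tau$-strongly convex--concave saddle problem is $\|\Delta F\|_*/\tau$. Since the entropy gradient at $\pi^\star$ is itself of size $H/\tau$, this reproduces your $|\Delta\tau|/\tau^2$ rather than improving on it.

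\textbf{Comparison with the paper.} The paper does not supply this ingredient either. Its proof of Lemma~\ref{lem:convergence} measures $\Phi_{t+1}$ against the episode-$t$ pair $(\pi^\star_{\tau_t,\epsilon},\lambda^\star_{\tau_t,\epsilon})$ and unrolls the recursion as if the target were fixed. Your concern is therefore well-founded, but the proposal as written does not resolve it.
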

Theorem \ref{thm: bounded-cv} establishes near-constant strong constraint violation and sublinear $\tilde{O}(T^{5/6})$ strong regret. 
We highlight that the attainment of near-constant strong violation here is not merely a consequence of the safety margin, but relies on the intricate synergy between the learning rate $\eta_t$, regularization coefficient $\tau_t$, and the safety margin $\epsilon_{i,t}$. Crucially, the specific decay schedules for these parameters were \textit{not heuristically pre-determined}. Instead, they were derived as the rigorous analytical solution to the trade-off between minimizing strong regret and suppressing cumulative violations. By modeling $\eta_t$, $\tau_t$, and $\epsilon_{i,t}$ as time-dependent functions within our analysis, we identified that the exponents $-5/6$, $-1/6$, and $-1/6$ respectively constitute the necessary conditions to neutralize per-episode optimization and statistical errors. These specific rates represent the optimal balance achievable within this regularized primal-dual framework; simply adjusting the decay rate of these parameters is insufficient to reach $\tilde{O}(\sqrt{T})$ strong regret.

\begin{remark}
While the $\tilde{O}(T^{5/6})$ regret leaves a gap to the optimal $\tilde{O}(\sqrt{T})$, it represents a critical trade-off for safety. To our knowledge, this is the first result to achieve near-constant strong violation ($\tilde{O}(1)$) within the last-iterate convergence regime. In contrast, prior last-iterate methods inherently incur polynomial violation growth, such as $\tilde{O}(T^{0.93})$ in \citet{muller2024truly} and $\tilde{O}(T^{6/7})$ in \citet{kitamura2024policy}.
\end{remark}



\subsection{Last-iterate convergence}\label{sec:last-iterate}

Beyond the regret and violation bounds, we prove that FlexDOME achieves a more stringent last-iterate convergence where the per-step violation becomes exactly zero rather than asymptotically vanishing, which is crucial for  practical deployment~\citep{ding2023last}. We present the following theorem.

\begin{theorem}[Last-iterate convergence]\label{thm:last-iterate}
Conditioned on Assumption \ref{assumption}, for small $\varepsilon>0$ and $t=\Omega(\varepsilon^{-4}\log(1/\varepsilon))$, if $\eta_{t}=\Theta(\varepsilon^{3})$, $\tau_{t}=\Theta(\varepsilon)$ and $\epsilon_{i,t}=\Theta(\varepsilon)$ for all constraint $i$, then we have
\begin{equation*}
\bigl[V_{r}^{\pi^\star}-V_{r}^{\pi_t}\bigr]_{+}\le\Theta(\varepsilon),
\quad
\bigl[\alpha_{i}-V_{d_i}^{\pi_t}\bigr]_{+}=0\quad
(\forall\,i\in[m]).    
\end{equation*}
\end{theorem}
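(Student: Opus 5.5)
We prove Theorem~\ref{thm:last-iterate} with a policy--dual Lyapunov argument, holding the parameters fixed at $\eta=\Theta(\varepsilon^3)$, $\tau=\Theta(\varepsilon)$, $\epsilon_i=\Theta(\varepsilon)$. Let $(\pi^\star_{\tau,\epsilon},\lambda^\star_{\tau,\epsilon})$ be the unique saddle point of the regularized, margin-tightened problem~\eqref{eq:opt-ques}. Define the potential
\[
\Phi_t:=\sum_{h}\mathbb{E}_{s\sim d_h^{\pi^\star_{\tau,\epsilon}}}\mathrm{KL}\bigl(\pi^\star_{\tau,\epsilon,h}(\cdot\mid s)\,\|\,\pi_{t,h}(\cdot\mid s)\bigr)+\tfrac12\|\lambda^\star_{\tau,\epsilon}-\lambda_t\|^2 .
\]

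\textbf{Step 1: one-step contraction.} We combine the mirror-ascent three-point inequality for the exponential-weights update with the projected-gradient inequality for the dual step. We then use the performance-difference lemma together with the $\tau$-strong concavity (from entropy) and $\tau$-strong convexity (from the $\ell_2$ term) of $\mathcal{L}_{\tau,t}$. This should give
\[
\Phi_{t+1}\le(1-\eta\tau)\Phi_t+\eta^2 C_1+\eta\,e_t ,
\]
where $C_1=\mathrm{poly}(H,m,\log A,1/\Xi)$ bounds the squared gradients. The dual box $\mathcal{C}$ and the truncation in TPE keep $\hat Q$ bounded. The term $e_t$ collects the estimation errors $\hat Q-Q$, $\hat V-V$ and $\overline{\alpha}-\alpha$.

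\textbf{Step 2: control of $e_t$.} On the high-probability event, optimism and the bonus structure bound the errors through visitation-weighted bonuses. I would like to avoid a worst-case per-episode bound on $e_t$, since per-episode statistical error is not small pointwise. Instead I would unroll the recursion: $\Phi_t\le(1-\eta\tau)^t\Phi_1+\eta C_1/\tau+\eta\sum_{k}(1-\eta\tau)^{t-k}e_k$. The discounted error sum is then bounded by $\tilde O(\sqrt{t})$-type sums of bonuses, as in Lemma~\ref{lem:error bounds}, and this gives a contribution of order $\tilde O(\eta\sqrt{t})$ or, after weighting, $\tilde O(1/(\tau\sqrt t))$.

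With $\eta=\varepsilon^3$ and $\tau=\varepsilon$, we have $(1-\eta\tau)^t\le e^{-\varepsilon^4 t}$, which is at most $\mathrm{poly}(\varepsilon)$ once $t=\Omega(\varepsilon^{-4}\log(1/\varepsilon))$. This is exactly where the stated sample size comes from. The bias term is $\eta/\tau=\varepsilon^2$, and the statistical term is also $O(\varepsilon^2)$ for that $t$. Hence $\Phi_t=O(\varepsilon^2)$.

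\textbf{Step 3: from potential to values.} The KL bound gives $\|\pi_t-\pi^\star_{\tau,\epsilon}\|_1=O(\varepsilon)$ state-wise under occupancy, via Pinsker. By the simulation lemma this yields $|V_g^{\pi_t}-V_g^{\pi^\star_{\tau,\epsilon}}|\le c\,\varepsilon$ for $g\in\{r,d_i\}$, with a constant $c$ depending on $H$.

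\textbf{Step 4: feasibility and optimality of the regularized saddle point.} The remaining step compares $\pi^\star_{\tau,\epsilon}$ to the tightened problem~\eqref{eq:CMDP-2} and to~\eqref{eq:CMDP-problem}. Standard regularized-CMDP arguments using Slater (Assumption~\ref{assumption}), which keep $\lambda^\star$ inside $\mathcal{C}$, give two facts:
\begin{itemize}
\item $V_{d_i}^{\pi^\star_{\tau,\epsilon}}\ge\alpha_i+\epsilon_i-O(\tau H\log A\cdot\Xi^{-1})$;
\item $V_r^{\pi^\star}-V_r^{\pi^\star_{\tau,\epsilon}}\le O(\tau H\log A+\epsilon H/\Xi)$.
\end{itemize}
The tightened problem is feasible because $\epsilon\ll\Xi$, and its value loss relative to $\pi^\star$ is $O(\epsilon/\Xi)$ by mixing with $\pi^0$.

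Putting these together:
\begin{itemize}
\item \textbf{Reward.} The gap $[V_r^{\pi^\star}-V_r^{\pi_t}]_+=O(\varepsilon)$.
\item \textbf{Constraints.} $V_{d_i}^{\pi_t}-\alpha_i\ge\epsilon_i-O(\tau)-c\varepsilon$. Exact zero violation follows by choosing the $\Theta(\varepsilon)$ constant in $\epsilon_i$ larger than the combined constants of the other two terms.
\end{itemize}

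\textbf{Main obstacle.} The delicate step is Step 2. Estimation errors are only controlled in aggregate, not pointwise, so getting $\Phi_t=O(\varepsilon^2)$ for a specific $t$ requires careful use of the geometric discount against cumulative bonus sums. If this fails, I would fall back to bounding $e_t$ by the concentration radius at time $t$ directly, using the uniform-coverage effect of entropy regularization: the entropy term forces every action to keep probability at least $\exp(-O(H/\tau))$, which makes visitation counts grow linearly.
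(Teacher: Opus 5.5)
There is a genuine gap in Step~2, and it is quantitative. Plug $\eta=\varepsilon^3$, $\tau=\varepsilon$ and $t\asymp\varepsilon^{-4}\log(1/\varepsilon)$ into your own two candidate bounds for the statistical part of $\Phi_t$:
\begin{itemize}
\item the cumulative-bonus bound gives $\eta\sqrt t\asymp\varepsilon\sqrt{\log(1/\varepsilon)}$;
\item the pointwise bound gives $1/(\tau\sqrt t)\asymp\varepsilon/\sqrt{\log(1/\varepsilon)}$.
\end{itemize}
Neither is $O(\varepsilon^2)$; both are of order $\varepsilon$ up to logarithms. After the Pinsker step, this produces an estimation-induced value error of order $\sqrt{\varepsilon}$. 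A margin $\epsilon_i=\Theta(\varepsilon)$ cannot dominate that, so you obtain neither the $\Theta(\varepsilon)$ reward gap nor exact zero violation.

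The cumulative version is worse still. With only partial-sum control, summation by parts against the increasing weights $(1-\eta\tau)^{t-k}$ gives nothing better than $\eta\sum_{k\le t}e_k$. That quantity grows with $t$, so it cannot certify convergence at all. Making $1/(\tau\sqrt t)\le\varepsilon^2$ would require $t\gtrsim\varepsilon^{-6}$, and even that presupposes pointwise per-episode control you do not have.

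Your fallback does not rescue the stated sample size either. With $\tau=\Theta(\varepsilon)$, the entropy floor $\exp(-O(H/\tau))$ is exponentially small in $1/\varepsilon$. The resulting concentration radius therefore becomes small only after exponentially many episodes. State coverage would also need a reachability assumption the paper does not make.

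Although the statement does not say so, the paper's proof avoids estimation error entirely. Its appendix explicitly assumes the model is known, so the recursion is $\Phi_{t+1}\le(1-\eta_t\tau_t)\Phi_t+\frac{\eta_t^2}{2}(HC+D)$ with no $e_t$ term. After unrolling, once $t=\Omega(\varepsilon^{-4}\log(1/\varepsilon))$ two facts hold: $\exp(-\tfrac12\sum_j\eta_j\tau_j)=O(\varepsilon)$, and a direct geometric-sum estimate gives $\sum_{j\le t}\eta_j^2\exp(-\sum_{k>j}\eta_k\tau_k)=\Theta(\varepsilon^2)$. Hence $\sqrt{\Phi_t}=O(\varepsilon)$.

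The per-episode trade-off lemma then finishes the argument. This is your Steps~3--4, with $\alpha_i-V_{d_i}^{\pi^\star_{\tau,\epsilon}}\le 4H\tau/\Xi-\epsilon_i$ coming from the dual saddle-point inequality. One chooses the constant in $\epsilon_i$ larger than the sum of the other $\Theta(\varepsilon)$ constants. If you delete $e_t$, your Steps~1, 3 and 4 coincide with this argument. With $e_t$ kept, your analysis does not deliver the stated rate.

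A smaller point in Step~1: $\hat Q$ is not bounded, because it contains $-\tau\log\pi_t(a\mid s)$. The TPE truncation of the $\bar\psi$ component retains this term. So the second-order constant must come from the weighted bound $\sum_a\pi_t(a\mid s)e^{\eta\hat Q}\hat Q^2\le C$ of Lemma~\ref{lem:q-value-bounds}, not from boundedness of $\hat Q$.
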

Theorem \ref{thm:last-iterate} demonstrates that the final policy is guaranteed to be both $\varepsilon$-optimal and strictly constraint-satisfying. Departing from standard bounds that merely limit violation to $\Theta(\varepsilon)$, we prove that our algorithm eliminates violation entirely after $\Omega(\varepsilon^{-4}\log(1/\varepsilon))$ iterations. This strict zero-violation property is the linchpin of our near-constant cumulative violation guarantee.

\begin{remark}
     We emphasize the important role of last-iterate convergence within our theoretical framework. While prior works have achieved $\tilde{O}(\sqrt{T})$ strong regret under the framework of average convergence, this may obscure potential safety risks in the final policy. The ability to rigorously prove last-iterate convergence marks a fundamental distinction in theoretical properties; it is the key to ensuring that the final policy possesses both optimality and strict safety for practical deployment. 
\end{remark}


\subsection{Analysis Sketch}\label{sec:key lemmas} 
This section outlines the core technical arguments underpinning our main theorem. First, we leverage the convergence properties of a policy-dual potential function inspired by~\citet{ding2023provably} and~\citet{muller2024truly}, which serves as a Lyapunov measure to track the learning dynamics. Second, and crucially, we rigorously characterize the per-episode safety-performance trade-off by constructing a mixed policy and a dual perturbation vector. These constructions enable us to explicitly decompose the reward and violation errors and link them directly to the potential function and the safety margin.

We begin by introducing the policy-dual divergence potential function as follows:
\begin{align*}
  \Phi_{t}
\;=&\;
\sum_{s,h}\mathbb{P}_{\pi_{\tau_{t},\epsilon}^{\star}}[s_h = s]\,
\mathrm{KL}\Bigl(\pi_{\tau_{t},\epsilon,h}^\star(\cdot\!\mid\!s),\pi_{t,h}(\cdot\!\mid\!s)\Bigr)
\\&+
\frac12\bigl\|\lambda_{\tau_{t},\epsilon}^\star - \lambda_{t}\bigr\|^2. 
\end{align*}
It quantifies how closely the current policy-dual iterate $(\pi_t, \lambda_t)$ approximates the optimal margin-regularized policy-dual pair $(\pi_{\tau_{t},\epsilon}^{\star}, \lambda_{\tau_{t},\epsilon}^\star)$.
We prove that this function contracts at each step.

\begin{lemma}[Convergence]\label{lem:convergence}
Let $\eta_{t}$, $\tau_{t}\le1$ and a confidence parameter $\delta\in(0,1)$. With probability at least $1-\delta$, the policy-dual divergence potential of Algorithm \ref{alg:main} holds
\begin{align*}
   &\Phi_{t+1}\le \exp\left(-\sum_{j=1}^t\eta_j\tau_j \right)\Phi_1+\frac{HC+D}{2}\sum_{j=1}^t\eta_j^2\\&\cdot\exp\left(-\sum_{k=j+1}^t\eta_k\tau_k\right)+\sum_{j=1}^t\eta_j\delta_j\exp\left(-\sum_{k=j+1}^t\eta_k\tau_k\right),
\end{align*}
where $C=\exp{\left(\eta_{t} H\left(1+\frac{4mH}{\Xi}+\tau_{t}\log(A)\right)\right)}(2A^{\eta_{t}\tau_{t}}H^2$\\$
\left(1+\frac{4mH}{\Xi}+\tau_{t}\log(A)\right)^2 + \frac{128\tau_{t}^2\sqrt{A}}{e^2})$, $D=m\bigl(H + \tau_{t}\,\left(\frac{4H}{\Xi}\right)\bigr)^2$ and $\delta_j=\hat{V}_{\overline{y}_j}^j-V_{y_j}^{\pi_j}+\sum_i\frac{4H}{\Xi}\left(\hat{V}_{\bar{d}_{i,j}}^j-V_{d_i}^{\pi_j}\right)$.
\end{lemma}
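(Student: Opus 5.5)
The plan is to derive a one-step recursion of the form
\[
\Phi_{t+1}\le(1-\eta_t\tau_t)\Phi_t+\tfrac{HC+D}{2}\eta_t^2+\eta_t\delta_t
\]
and then unroll it. For the unrolling we use $1-x\le e^{-x}$; this is the source of the exponential weights $\exp(-\sum_{k=j+1}^t\eta_k\tau_k)$.

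There is one subtlety. The potential $\Phi_t$ is measured against the saddle point $(\pi^\star_{\tau_t,\epsilon},\lambda^\star_{\tau_t,\epsilon})$, and this saddle point moves with $t$ because both $\tau_t$ and $\epsilon_t$ change. The recursion as stated has no drift term. I would therefore first prove the one-step inequality with the comparator frozen at the episode-$t$ saddle point. I would then either absorb the comparator change into $\delta_t$, or read $\Phi_{t+1}$ as evaluated at the episode-$t$ comparator. The statement's form suggests the authors take the latter reading, so I would follow it.

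\textbf{Primal step.} Fix $h,s$ and apply the standard mirror-descent (KL) three-point inequality to $\pi_{t+1,h}\propto\pi_{t,h}\exp(\eta_t\hat Q^t_h)$:
\[
\mathrm{KL}(\pi^\star\|\pi_{t+1})\le \mathrm{KL}(\pi^\star\|\pi_t)-\eta_t\langle\pi^\star-\pi_t,\hat Q^t_h\rangle+\log\mathbb{E}_{\pi_t}e^{\eta_t(\hat Q-\mathbb{E}_{\pi_t}\hat Q)}.
\]
The last term is bounded by $\tfrac{\eta_t^2}{2}C$-type quantities. This uses the bound $\|\hat Q^t\|_\infty\le H(1+4mH/\Xi+\tau_t\log A)$, which follows from truncation and $\lambda\in\mathcal C$. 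The entropy part $-\tau_t\log\pi_t$ is unbounded, so it must be handled through the $A^{\eta_t\tau_t}$ and $128\tau_t^2\sqrt A/e^2$ factors via the bound $x(\log x)^2\le 4/e^2$ weighted sums. This produces $C$ exactly as displayed.

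Weighting by $\mathbb P_{\pi^\star}[s_h=s]$ and summing over $h,s$, the performance-difference lemma converts $\sum\mathbb P_{\pi^\star}\langle\pi^\star-\pi_t,Q^{\pi_t}_{y_t}\rangle$ into $V^{\pi^\star}_{y_t}-V^{\pi_t}_{y_t}$, where $y_t$ is the true composite with entropy. Replacing true $Q$ by $\hat Q$ costs terms controlled by optimism. Under the $1-\delta$ good event (bonuses dominate estimation errors), the difference $\hat V^t_{\bar y_t}-V^{\pi_t}_{y_t}$ is what enters $\delta_t$. The entropy regularizer contributes an extra $-\eta_t\tau_t\sum\mathbb P_{\pi^\star}\mathrm{KL}(\pi^\star\|\pi_t)$ through the identity for regularized performance difference. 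This is what yields the contraction factor.

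\textbf{Dual step.} Nonexpansiveness of $\mathrm{Proj}_{\mathcal C}$ gives
\[
\|\lambda_{t+1}-\lambda^\star\|^2\le\|(1-\eta_t\tau_t)\lambda_t-\eta_t g_t-\lambda^\star\|^2,
\qquad g_t=\hat V^t_{\bar d}-\epsilon_t-\bar\alpha_t.
\]
Expanding, the cross term is $-2\eta_t\langle\lambda_t-\lambda^\star,g_t+\tau_t\lambda_t\rangle$. We use strong convexity, namely $\langle\lambda_t-\lambda^\star,\tau_t\lambda_t\rangle\ge\tfrac{\tau_t}{2}\|\lambda_t-\lambda^\star\|^2+\tfrac{\tau_t}{2}(\|\lambda_t\|^2-\|\lambda^\star\|^2)$. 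The squared term is bounded by $\eta_t^2 D$ with $D=m(H+\tau_t 4H/\Xi)^2$. Replacing $\hat V_{\bar d}$ by $V^{\pi_t}_d$ costs $\eta_t\|\lambda_t-\lambda^\star\|_1$ times the estimation error, which is at most $\tfrac{4H}{\Xi}\sum_i(\hat V_{\bar d_i}-V_{d_i}^{\pi_t})$. This is the second piece of $\delta_t$. The threshold error $\bar\alpha_t-\alpha$ would be folded in similarly, or handled by a concentration event.

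\textbf{Combining.} Adding the primal and dual inequalities, the linear terms form $\mathcal L_{\tau_t,t}(\pi^\star,\lambda_t)-\mathcal L_{\tau_t,t}(\pi_t,\lambda^\star)$ plus regularization corrections. The saddle-point property gives $\mathcal L(\pi^\star,\lambda_t)\ge\mathcal L(\pi^\star,\lambda^\star)\ge\mathcal L(\pi_t,\lambda^\star)$, so the gap is nonnegative and can be dropped. What remains is the desired one-step inequality with factor $(1-\eta_t\tau_t)$.

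The main obstacle I expect is the primal log-partition bound with the unbounded entropy bonus $-\log\pi_t$. This is where the awkward constant $C$ comes from, and it requires care to keep the $\eta_t^2$ scaling without assuming $\pi_t$ is bounded away from zero.
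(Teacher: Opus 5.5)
Your skeleton is the paper's. It uses the KL three-point inequality for the exponentiated update with second-order term $\tfrac{\eta_t^2}{2}\sum_a\pi_{t,h}(a\mid s)e^{\eta_t\hat Q^t_h(s,a)}\hat Q^t_h(s,a)^2\le\tfrac{\eta_t^2}{2}C$, the identity $\mathcal L_{\tau_t,t}(\pi^\star_{\tau_t,\epsilon},\lambda_t)-\mathcal L_{\tau_t,t}(\pi_t,\lambda_t)=V^{\pi^\star_{\tau_t,\epsilon}}_{y_t}-V^{\pi_t}_{y_t}-\tau_t\mathrm{KL}_t$, optimism to pass from $Q^{\pi_t}$ to $\hat Q^t$, and the projected-gradient expansion for $\lambda$. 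It then drops the nonnegative saddle gap and unrolls with $1-x\le e^{-x}$. The one-step inequality is fine. The gap is the drift you flagged, and the reading you adopt does not close it.

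Write $\Phi^{(s)}_u$ for the potential of $(\pi_u,\lambda_u)$ measured against $(\pi^\star_{\tau_s,\epsilon},\lambda^\star_{\tau_s,\epsilon})$, occupancy weights included. Step $t$ gives $\Phi^{(t)}_{t+1}\le(1-\eta_t\tau_t)\Phi^{(t)}_t+\tfrac{\eta_t^2}{2}(HC+D)+\eta_t\delta_t$. But step $t+1$ starts from $\Phi^{(t+1)}_{t+1}$, so the inequalities chain only after adding $\Phi^{(t+1)}_{t+1}-\Phi^{(t)}_{t+1}$ at every step. This term is absent from the statement and is not an estimation discrepancy. So ``absorbing it into $\delta_t$'' proves a different lemma, whose $\delta_t$ the later bound on $\sum_t\delta_t$ does not control. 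The paper's proof makes the same identification silently: its dual step bounds $\|\lambda^\star_{\tau_t,\epsilon}-\lambda_{t+1}\|^2$ and its primal step the KL to $\pi^\star_{\tau_t,\epsilon}$, and these are then unrolled as $\Phi_{t+1}$. Following it therefore reproduces the gap rather than resolving it. In \citet{muller2024truly}, where this argument comes from, $\eta$ and $\tau$ are fixed and the comparator does not move.

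Nor is the missing term negligible by inspection. Its dual part is at most $\|\Delta_t\|\sqrt{2\Phi^{(t)}_{t+1}}+\tfrac12\|\Delta_t\|^2$ with $\Delta_t=\lambda^\star_{\tau_{t+1},\epsilon}-\lambda^\star_{\tau_t,\epsilon}$. Even with $\tau$ frozen, the generic sensitivity bound from $\tau_t$-strong convexity of the regularized dual function controls the displacement caused by the change of margin only by $\|\epsilon_{t+1}-\epsilon_t\|/\tau_t$. For $\tau_t=t^{-1/6}$ and $\epsilon_{i,t}\propto t^{-1/6}$, this is of order $t^{-1}$ up to constants and logarithms. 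That gives a per-step increment of order $t^{-1}\sqrt{\Phi}$ against a contraction gain $\eta_t\tau_t\Phi_t=\Phi_t/t$. For small $\Phi$ the drift dominates, and the recursion no longer forces $\Phi_t\to 0$. The primal part also contains $\sum_a(\pi^\star_{\tau_{t+1},\epsilon,h}-\pi^\star_{\tau_t,\epsilon,h})(a\mid s)\log\bigl(1/\pi_{t+1,h}(a\mid s)\bigr)$ and a change of occupancy weights, which needs a lower bound on $\pi_{t+1}$. The missing ingredient is a sharper sensitivity estimate for the regularized saddle point in $(\tau,\epsilon)$, or a different potential.

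Two smaller points. First, the dual update uses $\bar\alpha_t$, so passing from $g_t$ to $V^{\pi_t}_d-\epsilon_t-\alpha$ costs $\eta_t\tfrac{4H}{\Xi}\sum_i|\bar\alpha^t_i-\alpha_i|$. A concentration event only bounds this term; it still has to enter the recursion. Folding it into $\delta_t$ is correct, and a threshold term of this kind does appear in the $\delta_t$ used later when $\sum_t\delta_t$ is bounded. But then you prove the lemma with a larger $\delta_j$ than the one displayed. The paper obtains the displayed $\delta_j$ only by writing $\alpha_i$ in place of $\bar\alpha_t$ in its dual step. Second, $C$ and $D$ are increasing in $(\eta,\tau)$ while $\eta_j,\tau_j$ decrease, so the unrolled sum needs $\max_{j\le t}(HC_j+D_j)$ rather than the episode-$t$ value. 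This is harmless downstream, where $HC+D$ is bounded at $\eta=\tau=1$.
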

Lemma \ref{lem:convergence} shows that the iterates of FlexDOME contract towards a neighborhood of the margin-regularized saddle point. The upper bound consists of three primary components: (i) a decaying term dependent on the initial potential $\Phi_1$; (ii) the accumulated optimization error from the primal-dual updates; and (iii) the statistical error from estimating the unknown CMDP model. 

\begin{remark}
    Our convergence analysis extends the framework of~\citet{muller2024truly} by accommodating time-varying learning rates $\eta_t$ and regularization terms $\tau_t$, as opposed fixed constants. This adaptation is critical; without it, merely incorporating a dynamic safety margin fails to achieve near-constant violation and improved regret bounds.
\end{remark}

To bridge the gap to the original CMDP, our analysis divides the episodes into two parts, divided by $C''=O\bigr((H^3C_B)^3\log^{3/2}(H^3C_B)\bigl)$. For episodes $t< C''$, the margin may be large, so we bound the per-episode regret by $H$. For episodes $t\ge C''$, the decaying margin is guaranteed to be sufficiently small such that $\epsilon_{i,t}\le \Xi/2$. Consequently, for this regime, the optimization problem (\ref{eq:opt-ques}) has at least one feasible solution by Assumption \ref{assumption} and exhibits strong duality. Our main technical lemmas are therefore derived for these episodes. We now introduce a key lemma that links per-episode performance guarantees to the learning dynamics and the safety buffer.

\begin{lemma}[Per-episode trade-off]\label{lem:error bounds}
For any $t\ge C''$, any constraint $i$ and any sequence $\{\pi_t\}_{t\in[T]}$, it holds
\begin{align*}
    &\bigl[V_{r}^{\pi^\star}-V_{r}^{\pi_t}\bigr]_{+}\le
H^{3/2}\,\bigl(2\Phi_t\bigr)^{1/2}+ H\log(A)\tau_{t}+\frac{H}{\Xi}\epsilon_{i,t},\\
&\max_{i\in[m]}\left[\alpha_{i}-V_{d_i}^{\pi_t}\right]_{+} \le \left[H^{3/2}\,\bigl(2\Phi_t\bigr)^{1/2} +\frac{4H}{\Xi}\tau_{t}-\epsilon_{i,t}\right]_{+}.
\end{align*}
\end{lemma}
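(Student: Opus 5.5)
Both inequalities follow by comparing $\pi_t$ with the margin-regularized saddle point $(\pi^\star_{\tau_t,\epsilon},\lambda^\star_{\tau_t,\epsilon})$. The gap to $\pi_t$ is controlled through $\Phi_t$, and the gap to the original problem (\ref{eq:CMDP-problem}) through the saddle-point structure of (\ref{eq:opt-ques}).

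\emph{Step 1: closeness of $\pi_t$ to $\pi^\star_{\tau_t,\epsilon}$.} For any $v\in[0,1]^{\mathcal S\times\mathcal A}$, write $V_v^{\pi}-V_v^{\pi'}$ via the performance-difference lemma as $\sum_{h,s}\mathbb P_{\pi}[s_h=s]\langle \pi_h(\cdot|s)-\pi'_h(\cdot|s),Q^{\pi'}_{v,h}(s,\cdot)\rangle$. Since $|Q|\le H$, each inner product is at most $H\|\pi_h-\pi'_h\|_1\le H\sqrt{2\,\mathrm{KL}}$ by Pinsker. Cauchy--Schwarz over the $H$ steps (each step's state distribution sums to one) then gives
\[
|V_v^{\pi^\star_{\tau_t,\epsilon}}-V_v^{\pi_t}|\le H^{3/2}\bigl(2\Phi_t\bigr)^{1/2},
\]
because the KL part of $\Phi_t$ is exactly the weighted sum that appears. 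This holds for $v=r$ and for every $v=d_i$.

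\emph{Step 2: properties of the regularized saddle point.} For $t\ge C''$ we have $\epsilon_{i,t}\le\Xi/2$, so $\pi^0$ is strictly feasible for (\ref{eq:CMDP-2}) with slack at least $\Xi/2$. Strong duality and boundedness of the dual then confine the optimal multipliers to $[0,2H/\Xi]^m$, strictly inside $\mathcal C$.

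\emph{Constraint side.} Fix $i$. Minimizing over $\lambda_i$ and using the first-order condition on $\lambda_i\in[0,4H/\Xi]$ gives $\lambda^\star_i=\mathrm{Proj}\bigl(-(V_{d_i}^{\pi^\star_{\tau,\epsilon}}-\epsilon_{i,t}-\alpha_i)/\tau_t\bigr)$. Since $\lambda^\star_i\le 4H/\Xi$ (or a fortiori if the projection is interior), this yields $V^{\pi^\star_{\tau,\epsilon}}_{d_i}\ge\alpha_i+\epsilon_{i,t}-\tau_t\lambda_i^\star\ge\alpha_i+\epsilon_{i,t}-\tfrac{4H}{\Xi}\tau_t$. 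Combining with Step 1 gives
\[
\alpha_i-V^{\pi_t}_{d_i}\le H^{3/2}(2\Phi_t)^{1/2}+\tfrac{4H}{\Xi}\tau_t-\epsilon_{i,t},
\]
and taking positive parts gives the second claim.

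\emph{Reward side.} Build a mixed policy, in occupancy measures, $\tilde\pi=(1-\theta)\pi^\star+\theta\pi^0$ with $\theta=\epsilon_{i,t}/\Xi$ (taking the worst $i$, since margins are equal). Linearity of values in occupancy measures gives $V_{d_i}^{\tilde\pi}\ge\alpha_i+\theta\Xi=\alpha_i+\epsilon_{i,t}$, so $\tilde\pi$ is feasible for the tightened problem, and $V_r^{\tilde\pi}\ge V_r^{\pi^\star}-\theta H$. Then use saddle optimality of $\pi^\star_{\tau,\epsilon}$ against $\tilde\pi$. Evaluating $\mathcal L_{\tau_t,t}(\pi^\star_{\tau,\epsilon},\lambda^\star)\ge\mathcal L_{\tau_t,t}(\tilde\pi,\lambda^\star)$ and using complementary slackness, with the sign of $\lambda^\star(V_d-\epsilon-\alpha)$ handled by the first-order relation $\lambda^\star_i(V_{d_i}^{\pi^\star_{\tau,\epsilon}}-\epsilon_{i,t}-\alpha_i)=-\tau_t(\lambda^\star_i)^2\le0$ on the interior, gives
\[
V_r^{\pi^\star_{\tau,\epsilon}}\ge V_r^{\tilde\pi}-\tau_t\,\mathcal H(\tilde\pi)+\text{(nonnegative dual terms)}.
\]
With $0\le\mathcal H\le H\log A$, this becomes $V_r^{\pi^\star}-V_r^{\pi^\star_{\tau,\epsilon}}\le H\log(A)\tau_t+\tfrac{H}{\Xi}\epsilon_{i,t}$. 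Adding Step 1 gives the first bound.

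\emph{Main obstacle.} The delicate point is the sign bookkeeping of the dual and regularization terms in the reward comparison. In particular, the $\tfrac{\tau_t}{2}\|\lambda\|^2$ terms and $\lambda^{\star\top}(V_d^{\tilde\pi}-\epsilon-\alpha)\ge0$ must combine to yield an inequality with no leftover $\tau_t\|\lambda^\star\|^2$ of the wrong sign. I would handle this using the explicit projection formula for $\lambda^\star$, which gives $\lambda^{\star\top}(V_d^{\pi^\star_{\tau,\epsilon}}-\epsilon-\alpha)=-\tau_t\|\lambda^\star\|^2\le0$, so that the dropped term only helps.
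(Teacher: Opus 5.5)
Your proof is correct and follows the paper's skeleton. It uses the same split through $(\pi^\star_{\tau_t,\epsilon},\lambda^\star_{\tau_t,\epsilon})$ and the same performance-difference/Pinsker/Cauchy--Schwarz bound $H^{3/2}(2\Phi_t)^{1/2}$; your weights $\mathbb{P}_{\pi^\star_{\tau_t,\epsilon}}[s_h=s]$ are exactly those in $\Phi_t$. It also uses the same mixture with weight $\epsilon_{i,t}/\Xi$ on $\pi^0$.

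The difference is on the dual side. The paper perturbs one coordinate of saddle inequality (ii) to some $z>\lambda^\star_{\tau_t,\epsilon,j}$ and divides by $z-\lambda^\star_{\tau_t,\epsilon,j}$. You instead read off the coordinatewise projection formula $\lambda^\star_i=\mathrm{Proj}_{[0,4H/\Xi]}(-g_i/\tau_t)$, with $g_i=V_{d_i}^{\pi^\star_{\tau_t,\epsilon}}-\epsilon_{i,t}-\alpha_i$. The two are equivalent: letting $z\downarrow\lambda^\star_{\tau_t,\epsilon,j}$ in the paper's inequality recovers your first-order condition $g_j+\tau_t\lambda^\star_j\ge 0$.

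Your version is more rigorous at two points the paper glosses over:
\begin{itemize}
\item You justify $\lambda^\star_{\tau_t,\epsilon,j}<4H/\Xi$, which the paper simply asserts.
\item You supply $\sum_i\lambda^\star_i g_i\le 0$. This is what actually licenses comparing $\pi^\star_{\tau_t,\epsilon}$ with the feasible mixture in $V_r+\tau_t\mathcal{H}$. The paper's ``this indicates'' step implicitly treats $\pi^\star_{\tau_t,\epsilon}$ as the solution of the entropy-regularized \emph{constrained} problem. Because of the $\frac{\tau_t}{2}\|\lambda\|^2$ term it is not: it violates the tightened constraint by exactly $\tau_t\lambda^\star_i$ whenever $0<\lambda^\star_i<4H/\Xi$.
\end{itemize}
Since $\lambda^\star_i g_i\le 0$ holds in all three projection cases, the reward side does not even need interiority.

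Three small repairs are needed.
\begin{itemize}
\item On the constraint side, $g_i\ge-\tau_t\lambda^\star_i$ requires that the projection is not clipped at the top, i.e.\ $\lambda^\star_i<4H/\Xi$. Membership $\lambda^\star_i\le 4H/\Xi$ is automatic and not enough: if clipped, $g_i<-\tau_t\cdot 4H/\Xi$ and the bound fails. So your ``or a fortiori'' parenthetical has the logic reversed, and Step 2 is genuinely load-bearing there.
\item The constant in Step 2 is off. Let $G(\lambda):=\max_\pi\mathcal{L}_{\tau_t,t}(\pi,\lambda)$. Compare $G(\lambda^\star)\le G(0)\le\max_\pi V_r^\pi+\tau_t H\log A$ with $G(\lambda^\star)\ge\mathcal{L}_{\tau_t,t}(\pi^0,\lambda^\star)\ge V_r^{\pi^0}+\frac{\Xi}{2}\|\lambda^\star\|_1$. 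This gives $\|\lambda^\star\|_1\le 2H(1+\tau_t\log A)/\Xi$, not $2H/\Xi$. That is still strictly below $4H/\Xi$ whenever $\tau_t\log A<1$, which the explicit schedule guarantees for $t\ge C''$.
\item In your reward display, the subtracted entropy should be $\tau_t\mathcal{H}(\pi^\star_{\tau_t,\epsilon})$; the term that gets dropped is $+\tau_t\mathcal{H}(\tilde\pi)\ge 0$. The resulting bound $H\log(A)\tau_t+\frac{H}{\Xi}\epsilon_{i,t}$ is unchanged.
\end{itemize}
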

This lemma is crucial for our main theorem and establishes that the per-episode performance gap and constraint violation can be explicitly bounded by $\Phi_t$, $\tau_t$, and $\epsilon_{i,t}$.
To see this, we recall the reward decomposition
\begin{equation*}
V_{r}^{\pi^\star}-V_{r}^{\pi_t} = \underbrace{\left(V_{r}^{\pi^\star}-V_{r}^{\pi_{\tau_t,\epsilon}^{\star}}\right)}_{\text{Approximation Error}} + \underbrace{\left(V_{r}^{\pi_{\tau_t,\epsilon}^{\star}}-V_{r}^{\pi_t}\right)}_{\text{Optimization Error}}
\end{equation*}
The first term captures the approximation error induced by the safety buffer. By constructing the probabilistic mixed policy $\pi^{\text{mix}}=(1-\frac{\epsilon_{i,t}}{\Xi})\pi^\star+\frac{\epsilon_{i,t}}{\Xi}\pi^0,$ we enforce strict feasibility in the tightened domain. By leveraging the feasibility of $\pi^{\text{mix}}$ in the tightened domain and the optimality of $\pi_{\tau_t,\epsilon}^\star$ for the regularized Lagrangian, we obtain $V_r^{\pi^{\star}}-V_r^{\pi_{\tau_{t},\epsilon}^\star} \le \frac{\epsilon_{i,t}}{\Xi}(V_r^{\pi^\star}-V_r^{\pi^0}) + \tau_t H \log A$.
The second term, $V_{r}^{\pi_{\tau_t,\epsilon}^{\star}} - V_{r}^{\pi_t}$, represents the optimization gap. By invoking the difference lemma and Pinsker’s inequality, we obtain $V_{r}^{\pi_{\tau_t,\epsilon}^{\star}} - V_{r}^{\pi_t} \le H^{3/2}\sqrt{2\Phi_t}$.

Conversely, the violation relies on the decomposition
\begin{equation*}
    \alpha_i - V_{d_i}^{\pi_t} =\underbrace{(\alpha_i - V_{d_i}^{\pi_{\tau_t,\epsilon}^{\star}})}_{\text{Safety Margin Buffer}} + \underbrace{(V_{d_i}^{\pi_{\tau_t,\epsilon}^{\star}} - V_{d_i}^{\pi_t})}_{\text{Optimization Error}}.
\end{equation*}
For the violation analysis, the crux lies in bounding the term $\alpha_i - V_{d_i}^{\pi_{\tau_t,\epsilon}^{\star}}$.
For any dual variable $\lambda$, we have $(\lambda-\lambda_{\tau_t,\epsilon}^{\star})^\top (V_{d}^{\pi_{\tau_t,\epsilon}^{\star}} -\alpha -\epsilon_t ) + \frac{\tau_{t}}{2} (\|\lambda\|^2 - \|\lambda_{\tau_t,\epsilon}^{\star}\|^2)\ge 0$. Constructing a specific $\lambda$ vector by choosing $\lambda_j=\lambda_{\tau_{t},\epsilon,j}^{\star}$ for all $j\neq i$, and $\lambda_i>\lambda_{\tau_t,\epsilon,i}^\star$ for any constraint $i$, and rearranging the inequality, we obtain $\alpha_i-V_{d_i}^{\pi_{\tau_{t},\epsilon}^{\star}} \le \frac{\tau_t}{2}\left(\lambda_i + \lambda_{\tau_t,\epsilon, i}^{\star}\right) - \epsilon_{i,t}.$

Crucially, the safety margin $\epsilon_{i,t}$ appears with a negative sign. The second term is analogous to the reward case. Combining these terms, we derive the final trade-off: a larger safety margin increases the reward regret via the approximation error but suppresses the violation via the buffer effect.

\subsection{Proof of the Main Theorem}\label{sec:proof}
We first present the proof for the strong constraint violation, which represents the primary technical novelty of our regret analysis.
\paragraph{Decomposition and Dominance Strategy.} A distinct feature of our proof, which fundamentally departs from standard weak-regret analyses, is the shift from global error cancellation to \textit{term-wise asymptotic dominance}. Prior works ~\citep{liu2021learning,kalagarla2025a} typically rely on the total accumulated safety margin $\sum_t \epsilon_{t}$ to absorb the total accumulated error, which is insufficient for strong regret. Instead, our analysis necessitates a term-wise control strategy that scrutinizes the asymptotic decay rate (w.r.t. $t$) of each error component. To achieve this, we construct the safety margin as a composite barrier, $\epsilon_{i,t} := \sum_{k=1}^4 \epsilon_{i,t}^{(k)}$, where each component is calibrated to decay asymptotically slower than or equal to its corresponding error source. This ensures the margin asymptotically `envelopes' the uncertainties, strictly clamping the violation to a near-constant level. Specifically, by Lemmas \ref{lem:error bounds}, the cumulative violation can be upper bounded by
\begin{equation}\label{eq:violation_decomposition} \mathcal{R}_T(d)\le (C''-1)H+\sum_{t=C''}^T\left[H^{3/2}\sqrt{2\Phi_t}+\frac{4H}{\Xi}\tau_{t}-\epsilon_{i,t}\right]_+. \end{equation}
Substituting the explicit expansion of $\Phi_t$ (Lemma \ref{lem:convergence}) into Equation \eqref{eq:violation_decomposition}, we analyze the following four terms:
\begin{align*}
& \left[c_1\exp\left(-\sum_{j=1}^t\frac{\eta_{j}\tau_{j}}{2}\right)-\epsilon_{i,t}^{(1)}\right]_+, \tag{a}\\
& \left[c_2\left(\sum_{j=1}^t \eta_j^2\exp\left(-\sum_{k=j+1}^t \eta_k\tau_k\right)\right)^{1/2}-\epsilon_{i,t}^{(2)}\right]_+,\tag{b} \\
& \left[c_3\left(\sum_{j=1}^t \eta_j\delta_j\exp{\left(-\sum_{k=j+1}^t\eta_k\tau_k\right)}\right)^{1/2}-\epsilon_{i,t}^{(3)}\right]_+,\tag{c} \\
& \left[c_4\tau_{t}-\epsilon_{i,t}^{(4)}\right]_+.\tag{d}
\end{align*}
Here, $c_i$ is a problem-dependent constant defined in Appendix \ref{app:property} for $i\in[4]$.  

\begin{figure*}[h]
    \centering
    \includegraphics[width=0.8\linewidth]{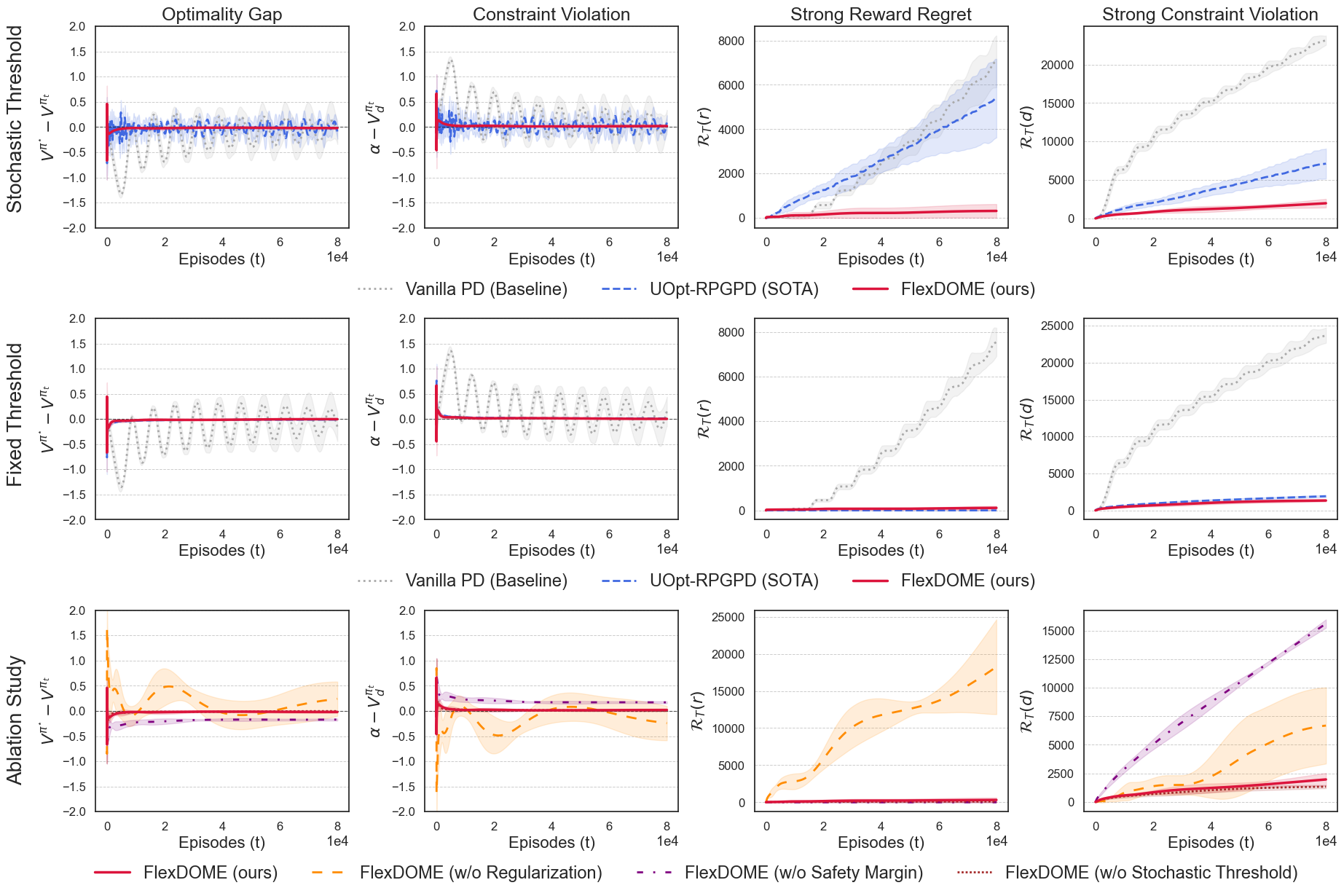}
    \captionsetup{skip=2pt}
    \caption{Performance comparison of \textbf{\textcolor{crimson}{FlexDOME}} (ours) against \textbf{\textcolor{royalblue}{UOpt-RPGPD}} and \textbf{\textcolor{darkgrey}{Vanilla PD}} baselines under both stochastic-threshold (\textbf{top row}) and fixed-threshold (\textbf{middle row}) settings. The \textbf{bottom row} presents an ablation study on key components of our method: the safety margin, regularization, and the stochastic threshold mechanism. All plots show the mean and standard error over 5 seeds. Performance is measured by the instantaneous optimality gap and constraint violation, alongside their corresponding strong regrets.}
    \label{fig:1}
\end{figure*}
\paragraph{Asymptotic Rate Analysis.} We now establish the asymptotic convergence rates for these error terms to determine the necessary schedule for $\epsilon_{i,t}^{(k)}$. For Term (a), which governs the decay of the initial potential, We establish its rate as follows:\begin{lemma}\label{lem:init_decay}Let $\eta_t = t^{-5/6}$ and $\tau_t = t^{-1/6}$. The initial error term decays asymptotically as:\begin{equation*}c_1\exp\left(-\sum_{j=1}^t\frac{\eta_{j}\tau_{j}}{2}\right) = \Theta(t^{-1/2}).
\end{equation*}
\end{lemma}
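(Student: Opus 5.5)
With $\eta_t=t^{-5/6}$ and $\tau_t=t^{-1/6}$ we have $\eta_j\tau_j=j^{-1}$ exactly. The exponent in the statement is therefore half of a harmonic sum, and the proof reduces to standard harmonic-number estimates. Concretely, I would write
\[
\sum_{j=1}^t \frac{\eta_j\tau_j}{2}=\frac12\sum_{j=1}^t \frac1j=\frac12 H_t,
\]
where $H_t$ is the $t$-th harmonic number. The whole task is then to sandwich $H_t$ between $\log t$ plus two constants.

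The key step is the two-sided bound $\log(t+1)\le H_t\le 1+\log t$ for all $t\ge1$. I would prove it by comparing $\sum_j 1/j$ with $\int 1/x\,dx$ via monotonicity of $1/x$. For the lower bound, use $1/j\ge\int_j^{j+1}dx/x$ for each $j$. For the upper bound, keep the first term $1$ and use $1/j\le\int_{j-1}^{j}dx/x$ for $j\ge2$.

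Exponentiating gives
\[
e^{-1/2}\,t^{-1/2}\;\le\;\exp\!\Bigl(-\tfrac12 H_t\Bigr)\;\le\;(t+1)^{-1/2}\;\le\;t^{-1/2}.
\]
Multiplying by $c_1$ then yields
\[
c_1e^{-1/2}t^{-1/2}\le c_1\exp\Bigl(-\sum_{j\le t}\eta_j\tau_j/2\Bigr)\le c_1t^{-1/2},
\]
which is exactly the claimed $\Theta(t^{-1/2})$.

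The only point requiring care is what $c_1$ depends on. From the substitution of Lemma~\ref{lem:convergence} into Lemma~\ref{lem:error bounds}, $c_1$ should be $H^{3/2}\sqrt{2\Phi_1}$. This is a fixed $T$-independent quantity: $\Phi_1$ is finite because $\pi_1$ is uniform, so the KL term is at most $H\log A$, and $\lambda_1=\mathbf 0$ with $\lambda^\star\in\mathcal C$, so the dual term is at most $m(4H/\Xi)^2/2$. Finiteness of $c_1$ is what the upper bound needs.

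The lower bound additionally needs $c_1>0$. If $\Phi_1$ happens to vanish, only the $O(t^{-1/2})$ direction is meaningful, and that is the direction actually used when dominating term (a) by $\epsilon^{(1)}_{i,t}\propto t^{-1/6}$. I would note this explicitly. Beyond that caveat there is no real obstacle; the proof is a routine integral comparison.
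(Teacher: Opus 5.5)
Your proof is correct and follows the paper's argument. With $\eta_j\tau_j=j^{-1}$, the paper bounds the harmonic sum below by $\int_1^{t+1}x^{-1}\,dx=\log(t+1)$ to get the $(t+1)^{-1/2}$ upper bound, which is the only direction it uses. Your additional bound $\sum_{j\le t}1/j\le 1+\log t$ supplies the matching $\Omega(t^{-1/2})$ direction that the paper asserts without proof, and your caveat that this direction requires $c_1=H^{3/2}\sqrt{2\Phi_1}>0$ is accurate.
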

Accordingly, we set $\epsilon_{i,t}^{(1)}=\Theta(t^{-1/6})$. Since the margin decays significantly slower than the error ($t^{-1/6} \gg t^{-1/2}$), the difference becomes negative for large $t$. By the limit comparison test (\ref{lem:limit_comparison_bound}), the sum of the positive parts is $O(1)$. 

Terms (b) and (c) capture the optimization error and statistical error, respectively. Their asymptotic behavior are characterized below:
\begin{lemma}\label{lem: bound b and c}Let $\eta_t = t^{-5/6}$ and $\tau_t = t^{-1/6}$ for $t \ge 1$. For any $t \ge C''$, the per-episode error terms decay as: \begin{align*}    \left(\sum_{j=1}^t\eta_j^2\exp\left(-\sum_{k=j+1}^t \eta_k\tau_k\right)\right)^{1/2} &= \Theta(t^{-1/3}),\\   \left(\sum_{j=1}^t\eta_j\delta_j\exp\left(-\sum_{k=j+1}^t \eta_k\tau_k\right)\right)^{1/2} &\le \tilde{O}(t^{-1/6}). \end{align*} \end{lemma}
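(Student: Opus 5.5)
The plan is to use the explicit schedule $\eta_k\tau_k = k^{-5/6}\cdot k^{-1/6} = k^{-1}$. With it, the discount factors appearing in both sums become harmonic tails, which can be compared with ratios $j/t$. Concretely, for $1\le j\le t$,
\begin{equation*}
\log\frac{t+1}{j+1}\;\le\;\sum_{k=j+1}^{t}\frac1k\;\le\;\log\frac{t}{j},
\end{equation*}
and therefore
\begin{equation*}
\frac{j}{t}\;\le\;\exp\Bigl(-\sum_{k=j+1}^t\eta_k\tau_k\Bigr)\;\le\;\frac{j+1}{t+1}\;\le\;\frac{2j}{t}.
\end{equation*}
Both claims of the lemma then reduce to estimating weighted sums of the form $\frac{1}{t}\sum_{j\le t} j\,(\cdot)$, up to absolute constants.

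\emph{First bound.} Substituting $\eta_j^2 = j^{-5/3}$ and using the sandwich above, the inner sum is within a factor of $2$ of $\frac1t\sum_{j=1}^t j^{-2/3}$. An integral comparison gives $\sum_{j=1}^t j^{-2/3}\in[3(t+1)^{1/3}-3,\,3t^{1/3}]$. Hence the inner sum is $\Theta(t^{-2/3})$, and its square root is $\Theta(t^{-1/3})$. The matching lower bound holds for every $t\ge 2$, so in particular for $t\ge C''$. This part is routine.

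\emph{Second bound.} The same substitution gives
\begin{equation*}
\sum_{j\le t}\eta_j\delta_j e^{-\sum_{k>j}\eta_k\tau_k}\;\le\;\frac{2}{t}\sum_{j\le t} j^{1/6}[\delta_j]_+\;\le\;2\,t^{-5/6}\sum_{j\le t}[\delta_j]_+,
\end{equation*}
where the first step drops negative summands and the second uses $j^{1/6}\le t^{1/6}$. It therefore suffices to prove that, with probability at least $1-\delta$ and simultaneously for all $t$,
\begin{equation*}
\sum_{j\le t}[\delta_j]_+\le\tilde O\bigl(\sqrt{C_B\,t}\bigr).
\end{equation*}
Given this, the inner sum is $\tilde O(t^{-5/6}\cdot t^{1/2}) = \tilde O(t^{-1/3})$, and its square root is $\tilde O(t^{-1/6})$, as claimed.

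To bound the statistical errors, I will recall that
\begin{equation*}
\delta_j=\hat V^j_{\bar y_j}-V^{\pi_j}_{y_j}+\sum_i\tfrac{4H}{\Xi}\bigl(\hat V^j_{\bar d_{i,j}}-V^{\pi_j}_{d_i}\bigr).
\end{equation*}
Each difference is the gap between the truncated optimistic evaluation in the empirical model $\mathcal M_j$ and the true value of the same policy $\pi_j$. I will then proceed in three steps.
\begin{itemize}
\item On the good event, where the concentration behind the bonuses $\phi^r,\phi^p$ holds with a $\log(SAHt/\delta)$ union bound over all $t$, apply the simulation/difference lemma together with the truncation in TPE (Algorithm~\ref{alg:TPE}). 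This bounds each gap by $\mathbb E_{\pi_j,p}\bigl[\sum_h c\,\min\{H,\phi_h^{j-1}(s_h,a_h)\}\bigr]$ for a constant $c$ polynomial in $H,m,\Xi$. The entropy bonus is handled the same way, since its contribution is $\tau_j\phi^{p}\log A\le\phi^p\log A$ because $\tau_j\le1$.
\item Convert these expectations into realized sums along the executed trajectories. The differences form a bounded martingale difference sequence, so Azuma--Hoeffding adds only $\tilde O(H\sqrt t)$.
\item Apply the pigeonhole bound $\sum_{j\le t}\sum_h \bigl(\max\{1,N_h^{j-1}(s_h^j,a_h^j)\}\bigr)^{-1/2}\le O(H\sqrt{SAt})$.
\end{itemize}
The resulting constant, polynomial in $S,A,H,m,1/\Xi$, is absorbed into $C_B$.

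I expect the main obstacle to be this last part. The difficulty is checking that $\delta_j$ (and its positive part) is controlled by the bonuses, even though the value estimates $\hat V$ are truncated and the composite reward $\bar y_j$ carries the $\lambda_j$-weighted constraint terms and the $-\log\pi$ entropy term. The difficulty is not the rate calculus, which is straightforward once $\eta_k\tau_k=1/k$ is observed. Using the crude step $j^{1/6}\le t^{1/6}$ loses nothing in the exponent, so no Abel summation or sign analysis of $\delta_j$ is needed beyond taking positive parts.
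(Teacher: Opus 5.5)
Your proposal is correct, and for the second bound it takes a genuinely different route from the paper.

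For the first bound the idea is the same as the paper's: since $\eta_k\tau_k=k^{-1}$, the discount is a harmonic tail comparable to $j/t$. The paper splits the sum at $\lfloor t/2\rfloor$, uses $(j+1)/(t+1)$ on the lower half and a geometric series on the upper half. It therefore proves only the upper bound $O(t^{-2/3})$ on the inner sum, although $\Theta$ is claimed. Your two-sided sandwich $j/t\le \exp(-\sum_{k=j+1}^t k^{-1})\le 2j/t$ is shorter and also gives the matching lower bound.

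For the second bound, the paper keeps the signed $\delta_j$ and works only with the partial sums $\Delta_j=\sum_{i\le j}\delta_i\le B(j)$. It applies Abel summation with weights $f_{t,j}=\eta_j\exp(-\sum_{k=j+1}^t\eta_k\tau_k)$, asserts that $f_{t,j}$ is increasing in $j$, and concludes $|Y_t|\le 2f_{t,t}B(t)=2t^{-5/6}B(t)$. You instead pass to positive parts and use the uniform weight bound $f_{t,j}\le 2j^{1/6}/t\le 2t^{-5/6}$, which gives the same prefactor without any monotonicity.

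This makes your argument more robust on two points:
\begin{itemize}
\item The paper's monotonicity claim is false for $j=1,2$; for example, $f_{t,2}/f_{t,1}=2^{-5/6}e^{1/2}<1$.
\item The paper's step $|\Delta_j|\le B(j)$ needs a lower bound on the partial sums (e.g.\ optimism, $\delta_j\ge 0$), which the stated one-sided hypothesis does not supply.
\end{itemize}
Both issues are repairable, but your argument needs neither fix.

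What you give up is the ability to exploit cancellation among the $\delta_j$. You need each $\delta_j$ to be dominated by a nonnegative quantity whose cumulative sum is $\tilde O(\sqrt t)$. That is exactly what your simulation-lemma, bonus and pigeonhole argument delivers. It is also how the paper obtains its cumulative-discrepancy bound, by citation and using the count event $F^N$ where you use Azuma--Hoeffding. So nothing is lost.

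One small point when you plug this in. The paper's cumulative-discrepancy lemma uses a $\delta_t$ that also contains $\sum_i\lambda_{t,i}|\hat\alpha_i^t-\alpha_i|$. That term is genuinely needed, because the dual step uses $\overline{\alpha}_t$ rather than $\alpha$. Your pointwise argument absorbs it unchanged, since $\lambda_{t,i}\le 4H/\Xi$ and $\sum_{j\le t}\zeta^j=\tilde O(\sqrt t)$.
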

Since Term (b) decays as $\Theta(t^{-1/3})$, setting $\epsilon_{i,t}^{(2)}=\Theta(t)^{-1/6}$ ensures strict dominance, contributing $O(1)$ to the violation. Term (c) presents the tightest bottleneck, decaying at the same rate $\tilde{O}(t^{-1/6})$ as our regularization schedule. Consequently, we set $\epsilon_{i,t}^{(3)}$ to match this order but with sufficiently large logarithmic factors to cover the high-probability bound of $\delta_j$. Finally, for Term (d), the regularization bias is explicitly $\frac{4H}{\Xi}t^{-1/6}$. By setting $\epsilon_{i,t}^{(4)} \ge \frac{4H}{\Xi}t^{-1/6}$, we achieve complete covering of the bias term. Summing all components, we conclude that $\mathcal{R}_T(d)\le\tilde{O}(1)$. 

\paragraph{Strong Reward Regret.} Unlike in the violation analysis, the safety margin $\epsilon_{i,t}$ in the reward decomposition acts as additive penalties. Since both $\epsilon_{i,t}$ and $\tau_t$ are scheduled as $\Theta(t^{-1/6})$, their cumulative sums are bounded by $\sum_{i=1}^Tt^{-1/6}=\tilde{O}(T^{5/6})$. Furthermore, as established in Lemma \ref{lem: bound b and c}, the cumulative learning error is dominated by the statistical term, which also scales as $\tilde{O}(T^{5/6})$. Summing these components yields the final bound $\mathcal{R}_{T}(r) \le \tilde{O}(T^{5/6})$. Detailed proofs are provided in Appendix \ref{app:property} and \ref{sec:regrets}.


\section{Experiments}\label{sec:experiment}

We conduct experiments comparing our FlexDOME algorithm with the vanilla primal-dual baseline~\citep{efroni2020exploration} and the state-of-the-art (SOTA) UOpt-RPGPD algorithm~\citep{kitamura2024policy}. 
Our comparison focuses on algorithms that are most relevant to the last-iterate regime studied in this work. Targeted ablation studies are performed to dissect the contributions of our algorithm's key components. 
The evaluations are performed on randomly generated tabular CMDPs. Following the setup in \citet{kitamura2024policy}, we construct environments where the objective and the constraint are in conflict to create a non-trivial trade-off between reward maximization and violation minimization. We evaluate in two threshold settings: a stochastic-threshold setting and a standard fixed-threshold case. We set $S=20$, $A=H=5$ and focus on a single constraint for clear visualization. All results are averaged over 5 independent runs with different random seeds. The safety margin is scaled by a factor of $10^{-5}$ to mitigate the over-conservatism in theory and accelerate the algorithms' convergence. A comprehensive sensitivity analysis of the hyperparameters, along with other experimental details, is provided in Appendix \ref{app:experiment}.

Our empirical results fully corroborate our theoretical findings. Figure \ref{fig:1} shows that, in the stochastic-threshold environment, FlexDOME is the only algorithm that maintains near-zero instantaneous violation, leading to a flat, near-constant cumulative strong violation curve. In contrast, both the baseline and the SOTA method exhibit oscillatory behavior and incur growing strong constraint violation. The middle row of Figure \ref{fig:1} shows that FlexDOME retains its safety advantage in standard fixed-threshold environments; however, this robust constraint satisfaction comes at the cost of a slight trade-off in reward regret compared to UOpt-RPGPD. The ablation studies (bottom row) confirm that removing the regularization framework reintroduces the severe oscillations characteristic of standard primal-dual methods, underscoring its necessity for stable learning. FlexDOME closely tracks an oracle (with access to the true threshold), confirming that our estimation mechanism is efficient and does not compromise safety or performance.


\section{Conclusion}
This work provides an affirmative resolution to the fundamental trilemma among stringent safety, sublinear strong regret, and last-iterate convergence in online CMDPs. We propose FlexDOME, a primal-dual framework that employs a decaying safety margin with time-varying regularization to effectively navigate the safety-performance trade-off. We prove that FlexDOME can simultaneously achieve a $\tilde{O}(1)$ strong constraint violation, sublinear strong regret, and a non-asymptotic last-iterate convergence guarantee. To our best knowledge, FlexDOME is the first algorithm to achieve near-constant strong violation under the guarantee of last-iterate convergence. Our experiments corroborate these theoretical findings. 




\section*{Impact Statement}

This paper presents work whose goal is to advance the field of Machine
Learning. There are many potential societal consequences of our work, none
which we feel must be specifically highlighted here.


\bibliography{example_paper}
\bibliographystyle{icml2026}

\newpage
\appendix
\onecolumn
\section{CMDPs with Stochastic Thresholds}\label{app:distionction}

This section provides a rigorous analysis of the differences between the standard Constrained Markov Decision Process (CMDP) and the CMDPs with stochastic thresholds, as introduced in this work. We first formally define each setting, contrast their optimization objectives by highlighting the informational disparity, analyze for the non-degenerate nature of our problem formulation, and finally discuss the generality of our results.

\subsection{Formal Definitions of CMDPs with Stochastic Thresholds}

We begin by formally defining the two problem settings.

\begin{definition}[Standard CMDPs]
A standard episodic CMDP is defined by the tuple $\mathcal{M} = (\mathcal{S}, \mathcal{A}, H, p, r, d, \boldsymbol{\alpha})$, where $\mathcal{S}, \mathcal{A}, H, p, r, d$ are the state space, action space, horizon, transition dynamics, reward function, and constraint functions, respectively. The threshold $\boldsymbol{\alpha} = (\alpha_1, \dots, \alpha_m)$ is a vector of scalars, where each $\alpha_i \in \mathbb{R}$ is a \textbf{pre-specified and known constant} given as part of the problem definition.
\end{definition}

\begin{definition}[CMDPs with stochastic thresholds]
An episodic CMDP with stochastic thresholds is defined by the tuple $\mathcal{M}' = (\mathcal{S}, \mathcal{A}, H, p, r, d, \{\mathcal{L}_i\}_{i=1}^m)$, where the first six components are as defined above. For each constraint $i$, $\mathcal{L}_i$ is an unknown probability distribution from which the agent observes stochastic samples $\tilde{\alpha}_{i,h}^t\sim \mathcal{L}_i$ at each step $h$ and episode $t$. The expectation of these samples defines a threshold $\alpha_{i} = \mathbb{E}_{\mathcal{L}_i}[\tilde{\alpha}_{i,h}^t]$, where $\alpha_i$ is a scalar constant that is \textbf{unknown} to the agent.
\end{definition}

\begin{algorithm}[h]
\caption{Agent-Environment Interaction for $t \in [T]$}
\label{alg:interaction}
\begin{algorithmic}[1]
\STATE \textbf{Require:} Policy $\pi_t\in \Pi$
\STATE Environment initializes state $s_1 \in \mathcal{S}$
\FOR{$h = 1, \dots, H$}
    \STATE Agent takes action $a_h \sim \pi_t(\cdot \mid s_h)$
    \STATE Agent observes reward $\tilde{r}_h^t(s_h, a_h)$, constraint $\tilde{d}_{i,h}^t(s_h, a_h)$, and threshold $\tilde{\alpha}_{i,h}^t$ for $i \in [m]$
    \STATE Environment evolves to $s_{h+1} \sim p(\cdot \mid s_h, a_h)$
\ENDFOR
\end{algorithmic}
\end{algorithm}

Algorithm~\ref{alg:interaction} depicts this interaction, where at each episode $t$, the agent executes a policy $\pi_t$ and observes not only rewards and constraints, but also the thresholds themselves.

\subsection{Concentration of the Empirical Threshold Estimator}

We analyze the concentration properties of the empirical threshold estimator defined in Equation~\ref{eqs:empirical}. The following theorem establishes a high-probability bound on the deviation of this estimator from the true mean threshold $\alpha_i$.

\begin{lemma}[Concentration of empirical thresholds]\label{thm:concentration}
Assume the stochastic thresholds $\tilde{\alpha}_{i,h}^l$ are independently drawn for each episode $l\in[t]$ and step $h\in[H]$. Further, assume that each sample is bounded, such that $\tilde{\alpha}_{i,h}^l \in [0, H]$. Let the empirical estimator for the threshold of constraint $i$ at the beginning of episode $t+1$ be defined as
$$
\hat{\alpha}_{i}^{t+1} := \frac{1}{tH} \sum_{l=1}^t \sum_{h=1}^H \tilde{\alpha}_{i,h}^l,
$$
and let the true mean be $\alpha_i = \mathbb{E}[\tilde{\alpha}_{i,h}^{l}]$. Then, for any $\delta \in (0, 1)$, with probability at least $1-\delta$, the following bound holds:
$$
\left|\hat{\alpha}_{i}^{t+1} - \alpha_i\right| \le  \sqrt{\frac{H\log(2/\delta)}{2t}}:=\zeta_i^{t+1}.
$$
\end{lemma}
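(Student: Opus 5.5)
This is a direct application of Hoeffding's inequality to the $n = tH$ independent bounded samples $\{\tilde{\alpha}_{i,h}^l\}_{l\in[t],h\in[H]}$ that make up $\hat{\alpha}_i^{t+1}$.

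\textbf{Step 1: set up the sample.} By assumption these samples are mutually independent. Each has mean $\alpha_i$, by the definition of the stochastic-threshold CMDP. Each lies in $[0,H]$, so its range is $b-a = H$. The estimator $\hat{\alpha}_i^{t+1}$ is exactly their sample mean.

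\textbf{Step 2: apply Hoeffding.} The two-sided Hoeffding inequality gives, for every $\varepsilon>0$,
\begin{equation*}
\mathbb{P}\bigl(|\hat{\alpha}_i^{t+1}-\alpha_i|\ge \varepsilon\bigr)\le 2\exp\Bigl(-\frac{2 n^2\varepsilon^2}{\sum_{k=1}^{n}(b-a)^2}\Bigr)=2\exp\Bigl(-\frac{2tH\varepsilon^2}{H^2}\Bigr)=2\exp\Bigl(-\frac{2t\varepsilon^2}{H}\Bigr).
\end{equation*}

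\textbf{Step 3: invert the tail bound.} Set the right-hand side equal to $\delta$ and solve for $\varepsilon$:
\begin{equation*}
\varepsilon=\sqrt{\frac{H\log(2/\delta)}{2t}}=\zeta_i^{t+1}.
\end{equation*}
Hence $|\hat{\alpha}_i^{t+1}-\alpha_i|\le\zeta_i^{t+1}$ with probability at least $1-\delta$, which is the claim.

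\textbf{Potential obstacles.} The only point needing care is that the range constant is $H$ and that $n = tH$ samples are averaged. With these, the $H^2$ from the squared range cancels against one factor of $H$ from the $tH$ samples, leaving the $H$ inside the square root. The stated bound is therefore exactly (not loosely) what Hoeffding gives. A sharper $1/\sqrt{tH}$-type rate is possible using the range of the per-episode averages, but it is not needed here.

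Independence across steps within an episode is part of the hypothesis. If it were only available across episodes, I would instead apply Hoeffding to the $t$ per-episode averages $\frac1H\sum_h\tilde{\alpha}_{i,h}^l\in[0,H]$. This gives $\sqrt{H^2\log(2/\delta)/(2t)}$, which is weaker than the stated bound, so the stated form does rely on full independence.

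Uniformity over $i\in[m]$ and over $t$ is not part of this statement. If it is needed later, a union bound would replace $\delta$ by $\delta/(mT)$.
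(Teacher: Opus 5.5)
Your proof is correct and matches the paper's. Both apply two-sided Hoeffding to the $n=tH$ independent samples with range $H$, obtain the tail $2\exp(-2tc^2/H)$, and invert it to get $\zeta_i^{t+1}$. Your aside claiming a sharper $1/\sqrt{tH}$-type rate via per-episode averages is inconsistent with your own later computation, since those averages still lie in $[0,H]$ and give the weaker $\sqrt{H^2\log(2/\delta)/(2t)}$; it plays no role in the proof.
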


\begin{proof}
Let $\{X_j\}_{j=1}^n$ be a set of $n=tH$ independent random variables, where each $X_j$ corresponds to one of the observed stochastic thresholds $\tilde{\alpha}_{i,h}^l$ for $l \in [t], h \in [H]$. By assumption, each random variable is bounded within the interval $[0, H]$, thus for all $j$, the range $(b_j - a_j)$ is $H$.

The empirical estimator $\hat{\alpha}_{i}^{t+1}$ is the sample mean $\bar{X} = \frac{1}{n}\sum_{j=1}^n X_j$. The true mean $\alpha_i$ is the expected value of this sample mean, $\mathbb{E}[\bar{X}]$. By Hoeffding's inequality and Substituting our parameters ($n=tH$ and $b_j - a_j = H$), we have:
\begin{align*}
\mathbb{P}\left( \left|\hat{\alpha}_{i}^{t+1} - \alpha_i\right| \ge c \right) 
&\le 2 \exp\left( - \frac{2(tH)^2c^2}{\sum_{j=1}^{tH} H^2} \right) \\
&=  2 \exp\left( - \frac{2tc^2}{H} \right)
\end{align*}
We set the right-hand side of the probability bound: $\delta = 2 \exp\left( - \frac{2tc^2}{H} \right)$. Solving for the deviation $c$, we obtain
\begin{align*}
c &= \sqrt{\frac{H \log(2/\delta)}{2t}}.
\end{align*}
Thus, with probability at least $1-\delta$, the error $|\hat{\alpha}_{i}^{t+1} - \alpha_i|$ is bounded by $c$. This completes the proof.
\end{proof}

\begin{lemma}[Union bound for empirical thresholds]\label{lem:union-bound}
Given $\delta\in(0,1)$, with probability at least $1-\delta$, the following holds uniformly for each constraint $i\in[m]$ and episode $t\in[T]$:
$$
\left|\hat{\alpha}_{i}^{t+1}-\alpha_{i}\right|\leq\zeta^{t+1},
$$
where $\zeta^{t+1}=\sqrt{\frac{H\log(2mT/\delta)}{2t}}$.
\end{lemma}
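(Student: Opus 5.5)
The plan is to apply Lemma~\ref{thm:concentration} separately to each pair $(i,t)\in[m]\times[T]$ at a reduced confidence level, and then combine the resulting events with a union bound. Concretely, for each fixed constraint $i$ and episode $t$, I would invoke Lemma~\ref{thm:concentration} with confidence parameter $\delta' := \delta/(mT)$. Its hypotheses are already satisfied: the samples $\tilde{\alpha}_{i,h}^l$ are independent across $l\in[t]$ and $h\in[H]$, and each lies in $[0,H]$. The lemma then gives the bad event
\begin{equation*}
E_{i,t}:=\Bigl\{\bigl|\hat{\alpha}_{i}^{t+1}-\alpha_i\bigr| > \sqrt{\tfrac{H\log(2/\delta')}{2t}}\Bigr\}, \qquad \mathbb{P}(E_{i,t})\le \delta'.
\end{equation*}
Substituting $\delta'=\delta/(mT)$ turns the radius into $\sqrt{H\log(2mT/\delta)/(2t)}=\zeta^{t+1}$, which matches the statement exactly.

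Next I would apply the union bound over all $mT$ pairs:
\begin{equation*}
\mathbb{P}\Bigl(\bigcup_{i\in[m],\,t\in[T]} E_{i,t}\Bigr)\le \sum_{i=1}^m\sum_{t=1}^T \frac{\delta}{mT}=\delta.
\end{equation*}
On the complement, which has probability at least $1-\delta$, the bound $|\hat{\alpha}_i^{t+1}-\alpha_i|\le\zeta^{t+1}$ holds simultaneously for every $i$ and $t$. This is precisely the uniform statement claimed.

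The only point needing care is that the union bound must range over a fixed, deterministic index set. Here $t$ ranges over $[T]$ with $T$ known in advance, so this is unproblematic. The threshold samples are drawn independently of the agent's policy and are state-action independent, so the sample count $tH$ is deterministic and no martingale or stopping-time argument is required. I therefore do not expect a genuine obstacle. Everything reduces to Hoeffding's inequality via Lemma~\ref{thm:concentration} followed by a union bound.
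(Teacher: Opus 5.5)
Your proposal is correct and follows the paper's proof. Like the paper, you apply Lemma~\ref{thm:concentration} to each pair $(i,t)$ at confidence level $\delta/(mT)$, so that $\log(2/\delta')$ becomes $\log(2mT/\delta)$, and then take a union bound over the $mT$ pairs. Your observation that the index set is deterministic, and that the sample count $tH$ does not depend on the policy, is a correct justification that the paper leaves implicit.
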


\begin{proof}
By Lemma \ref{thm:concentration}, for any given confidence level $\delta'$ and given constraint $i$, we have:
\[
\mathbb{P}\left[\left|\hat{\alpha}_{i}^{t+1}-\alpha_{i}\right|\leq \sqrt{\frac{H\log(2/\delta')}{2t}}\right]\geq1-\delta'.
\]
Taking a union bound over all possible choices of $i\in[m]$ and $t\in[T]$, we have:
\begin{align*}
&\mathbb{P}\left[\bigcap_{i,t}\left\{\left|\hat{\alpha}_{i}^{t+1}-\alpha_{i}\right|\leq\zeta_i^{t+1}\right\}\right]\geq1-mT\delta'.
\end{align*}
Letting $\delta=mT\delta'$ and substituting into $\zeta_i^{t+1}$, we derive the stated uniform bound with probability at least $1-\delta$. This completes the proof.
\end{proof}

The theorem demonstrates that the empirical estimator $\hat{\alpha}_{i}^{t+1}$ converges to the true mean $\alpha_i$ at a rate of $\mathcal{O}(1/\sqrt{t})$.

\section{Preparation Lemmas}\label{app:preparation}

\begin{lemma}[\citet{muller2024truly}]\label{lem:last-iterate convergence-0}
Let $V := \Delta([d])$, and $g \in \mathbb{R}_{\ge 0}^d =: X$. Then $\tilde{x} := \arg\max_{x \in X} g^\top x - \frac{1}{\eta_{t}} \mathrm{KL}(\tilde{x}, x)$ and $\arg\max_{x \in V} g^\top x - \frac{1}{\eta_{t}} \mathrm{KL}(\tilde{x}, x)$ exist and are unique. Moreover, if $g$ only has non-negative entries, then for all $x^\star \in V$ we have
\begin{equation*}
    g^\top(x^\star - x) \le \frac{\mathrm{KL}(x^\star, x) - \mathrm{KL}(x^\star, x')}{\eta_{t}} + \frac{\eta_{t}}{2} \sum_{i=1}^d \tilde{x}_i g_i^2.
\end{equation*}
\end{lemma}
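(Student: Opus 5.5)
Write $x$ for the current iterate (with full support), $x'$ for the maximiser over the simplex $V$, and $\tilde{x}$ for the maximiser over the orthant $X$. The plan is to compute both maximisers in closed form and reduce the inequality to an elementary scalar bound on $e^u$. I read the regulariser as $\mathrm{KL}(\cdot,x)$: the generalized KL $\sum_i y_i\log(y_i/x_i)-y_i+x_i$ on $X$, and the usual KL on $V$.

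\textbf{Existence and uniqueness.} The map $y\mapsto g^\top y-\frac{1}{\eta_t}\mathrm{KL}(y,x)$ is strictly concave, because the negative entropy is strictly convex. It is also coercive on $X$, since $y_i\log y_i$ dominates any linear term. Hence both maximisers exist and are unique. Setting the gradient to zero gives
\[
\tilde{x}_i=x_i e^{\eta_t g_i}.
\]
Adding a Lagrange multiplier for $\sum_i y_i=1$ on $V$ gives
\[
x'_i=\tilde{x}_i/Z,\qquad Z:=\sum_i x_i e^{\eta_t g_i}.
\]

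\textbf{Three-point identity.} For any $x^\star\in V$, the $x^\star\log x^\star$ terms cancel, so
\[
\mathrm{KL}(x^\star,x)-\mathrm{KL}(x^\star,x')=\sum_i x^\star_i\log\frac{x'_i}{x_i}=\eta_t g^\top x^\star-\log Z,
\]
using $\sum_i x^\star_i=1$. Dividing by $\eta_t$, the claim becomes equivalent to
\[
\log Z\le \eta_t g^\top x+\frac{\eta_t^2}{2}\sum_i \tilde{x}_i g_i^2 .
\]

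\textbf{Key estimate.} First use $\log Z\le Z-1=\sum_i x_i\,(e^{\eta_t g_i}-1)$, which relies on $\sum_i x_i=1$. Next use the scalar inequality
\[
e^u-1\le u+\tfrac{u^2}{2}e^u\qquad(u\ge 0).
\]
It follows from $e^u-1-u=\sum_{k\ge2}u^k/k!\le \frac{u^2}{2}\sum_{j\ge0}u^j/j!$, since $2/k!\le 1/(k-2)!$ and all terms are nonnegative when $u\ge0$. Apply it with $u=\eta_t g_i$, which is nonnegative by the assumption on $g$. This gives
\[
Z-1\le \eta_t g^\top x+\frac{\eta_t^2}{2}\sum_i x_i e^{\eta_t g_i}g_i^2=\eta_t g^\top x+\frac{\eta_t^2}{2}\sum_i\tilde{x}_i g_i^2,
\]
which is exactly the required bound.

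The only delicate step is the scalar inequality. This is the only place the nonnegativity of $g$ is used: for negative $u$ the weight $e^u$ in the remainder would need to be replaced by $1$.
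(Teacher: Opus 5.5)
Your proof is correct. The paper gives no proof of its own; it imports the lemma from \citet{muller2024truly}. The shape of the statement, with a separately defined unprojected maximiser $\tilde{x}$ over the orthant whose weights appear in the remainder, is that of the generic online-mirror-descent one-step bound, which runs as follows:
\begin{itemize}
\item the three-point identity $\eta_t g^\top(x^\star-x)=\mathrm{KL}(x^\star,x)-\mathrm{KL}(x^\star,\tilde{x})+\mathrm{KL}(x,\tilde{x})$ for the unnormalised negative entropy;
\item the generalised Pythagorean inequality $\mathrm{KL}(x^\star,\tilde{x})\ge\mathrm{KL}(x^\star,x')+\mathrm{KL}(x',\tilde{x})$ for the projection onto $V$, after which $\mathrm{KL}(x',\tilde{x})$ is dropped;
\item the bound $\mathrm{KL}(x,\tilde{x})=\sum_i x_i(e^{\eta_t g_i}-1-\eta_t g_i)\le\frac{\eta_t^2}{2}\sum_i\tilde{x}_i g_i^2$, via the same scalar inequality you prove.
\end{itemize}

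Your closed-form computation is that argument made explicit. For the entropic projection onto the simplex the Pythagorean step is an equality. Your step $\log Z\le Z-1$ discards exactly $\mathrm{KL}(x',\tilde{x})=Z-1-\log Z$. So both routes lose the same amount and give the same bound. Your version is self-contained, needs no Bregman-projection theory, and shows precisely where the slack is. The generic route works for any mirror map and any closed convex $V$, and it explains why the unprojected $\tilde{x}$ (rather than $x$ or $x'$) carries the remainder.

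Your choice of the generalised KL on $X$ is not merely convenient but needed. With the plain $\sum_i y_i\log(y_i/x_i)$ one gets $\tilde{x}_i=x_ie^{\eta_t g_i-1}$. Take $x=(\tfrac12,\tfrac12)$, $g=(G,0)$ and $u=\eta_t G\to0$. Your reduction would then require $\tfrac u2+\tfrac{u^2}{8}+O(u^3)\le\tfrac u2+\tfrac{u^2}{4e}e^{u}$, which fails. The generalised-KL reading is also the one the paper relies on downstream, where the remainder is bounded as $\sum_a\pi_{t,h}(a\mid s)\exp(\eta_t\hat{Q})\hat{Q}^2$.
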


\begin{lemma}[\citet{muller2024truly}]
The performance gap admits the decomposition:
\begin{align*}
    &\quad V_{y_t}^{\pi_{\tau_{t},\epsilon}^{\star}}-V_{y_t}^{\pi_t}\\
    &=\hat{V}_{\bar{y}_t}^{t}-V_{y_t}^{\pi_t}\\
    &+\sum_{h\in[H]} \mathbb{E}\Bigl[\bigl\langle\hat{Q}_{\bar{y}_t,h}^t(s_h,\cdot),\pi_{\tau_{t},h}^\star(\cdot\mid s_h)-\pi_{t,h}(\cdot\mid s_h)\rangle\mid s_1, \pi_{\tau_{t},\epsilon}^{\star},p\Bigr]\\
    &+\sum_{h\in[H]} \mathbb{E}\Bigl[-\hat{Q}_{\bar{y}_t,h}^t(s_h,\cdot)+y_{t,h}(s_h,a_h)+\bigl\langle p_h(\cdot\mid s_h,a_h),\hat{V}_{\bar{y}_t,h+1}^t(\cdot)\bigr\rangle\mid s_1, \pi_{\tau_{t},\epsilon}^{\star},p\Bigr].
\end{align*}
\end{lemma}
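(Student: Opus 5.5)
The identity is a performance-difference decomposition for the regularized composite signal $y_t$, evaluated along trajectories of $\pi^\star_{\tau_t,\epsilon}$ under the true transition $p$. Here $\hat Q^t_{\bar y_t,h}$ and $\hat V^t_{\bar y_t,h}$ are the TPE estimates computed for $\pi_t$ on the optimistic model. The plan is to add and subtract $\hat V^t_{\bar y_t}:=\hat V^t_{\bar y_t,1}(s_1)$:
\[
V_{y_t}^{\pi^\star_{\tau_t,\epsilon}}-V_{y_t}^{\pi_t}=\bigl(\hat V^t_{\bar y_t}-V^{\pi_t}_{y_t}\bigr)+\bigl(V_{y_t}^{\pi^\star_{\tau_t,\epsilon}}-\hat V^t_{\bar y_t}\bigr).
\]
The first bracket is already the first term of the claim. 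It then suffices to show that $V_{y_t}^{\pi^\star_{\tau_t,\epsilon}}-\hat V^t_{\bar y_t,1}(s_1)$ equals the sum of the two expectation terms.

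For that, I would use the structural property of TPE (Algorithm~\ref{alg:TPE}): at every $h$ and $s$, $\hat V^t_{\bar y_t,h}(s)=\langle \hat Q^t_{\bar y_t,h}(s,\cdot),\pi_{t,h}(\cdot\mid s)\rangle$, with $\hat V^t_{\bar y_t,H+1}\equiv 0$. The truncation affects only how $\hat Q$ is formed, not this averaging relation. Next, write the true value of the comparator recursively as
\[
V^{\pi^\star}_{y_t,h}(s)=\mathbb E_{a\sim\pi^\star_{\tau_t,\epsilon,h}(\cdot\mid s)}\bigl[y_{t,h}(s,a)+\langle p_h(\cdot\mid s,a),V^{\pi^\star}_{y_t,h+1}\rangle\bigr].
\]
Define $\Delta_h(s):=V^{\pi^\star}_{y_t,h}(s)-\hat V^t_{\bar y_t,h}(s)$. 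Adding and subtracting $\langle \hat Q^t_{\bar y_t,h}(s,\cdot),\pi^\star_{\tau_t,\epsilon,h}(\cdot\mid s)\rangle$ and $\langle p_h,\hat V^t_{\bar y_t,h+1}\rangle$ gives the one-step identity
\[
\Delta_h(s)=\bigl\langle\hat Q^t_{h},\pi^\star_h-\pi_{t,h}\bigr\rangle(s)+\mathbb E_{a\sim\pi^\star_h}\bigl[-\hat Q^t_h(s,a)+y_{t,h}(s,a)+\langle p_h(\cdot\mid s,a),\hat V^t_{h+1}\rangle\bigr]+\mathbb E_{a\sim\pi^\star_h,\,s'\sim p_h}\bigl[\Delta_{h+1}(s')\bigr].
\]
Unrolling from $h=1$ to $H$ with $\Delta_{H+1}=0$, and using the tower property along the trajectory distribution induced by $(s_1,\pi^\star_{\tau_t,\epsilon},p)$, produces exactly the two summed expectations.

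The only delicate point is bookkeeping. The signal $y_t$ contains the entropy term $-\tau_t\log\pi_{t,h}$, which depends on the current policy. I would therefore fix $y_t$ as a state-action reward function defined with respect to $\pi_t$, evaluated the same way in both $V_{y_t}^{\pi^\star}$ and $V_{y_t}^{\pi_t}$, so the recursion above is legitimate. I would also check that $\hat V^t$ is the $\pi_t$-average of $\hat Q^t$ after truncation, which holds by the construction of TPE. No inequality is needed; the result is an exact telescoping identity.
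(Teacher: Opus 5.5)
Your proposal is correct and matches the intended argument: the paper gives no proof of its own and imports this identity from \citet{muller2024truly}, where it is the standard extended value-difference telescoping you describe, using only $\hat V^t_{\bar y_t,h}=\langle\hat Q^t_{\bar y_t,h},\pi_{t,h}\rangle$, $\hat V^t_{\bar y_t,H+1}=0$, and treating $y_t$ (including $-\tau_t\log\pi_t$) as a fixed state-action signal. Your reading of the stray $\hat Q^t_{\bar y_t,h}(s_h,\cdot)$ in the last term as $\hat Q^t_{\bar y_t,h}(s_h,a_h)$ with $a_h\sim\pi^\star_{\tau_t,\epsilon,h}$ is the right one.
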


\begin{lemma}[\citet{altman1999constrained}]\label{lem:mix policy}
    Suppose the transition function is $P$. For any mixed policy $\pi^{\text{mix}}=B_{\gamma}\pi^1+(1-B_{\gamma})\pi^2$, where $B_{\gamma}$ is a Bernoulli distributed random variable with mean $\gamma$. Then there exists a Markov policy $\hat{\pi}$ that
    \begin{equation*}
        V_{r,h}^{\hat{\pi}}(p)=V_{r,h}^{\pi^{\text{mix}}}(p),\qquad \forall r,s,h.
    \end{equation*}
\end{lemma}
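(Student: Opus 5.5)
The statement to prove is Lemma~\ref{lem:mix policy}: a Bernoulli mixture of two policies $\pi^1,\pi^2$ (the mixture is chosen once at the start of the episode) can be replaced by a single Markov policy $\hat\pi$ with identical value functions for every reward vector $r$, every state $s$, and every step $h$.

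The approach is the standard occupancy-measure argument for finite-horizon CMDPs. The mixed policy selects $\pi^1$ with probability $\gamma$ and $\pi^2$ otherwise, then follows the selected policy for the whole episode. Its state-action occupancy at step $h$ is therefore the convex combination
\[
q_h^{\text{mix}}(s,a)=\gamma q_h^{\pi^1}(s,a)+(1-\gamma)q_h^{\pi^2}(s,a),
\]
where $q_h^{\pi}(s,a)=\mathbb{P}_{\pi,p}[s_h=s,a_h=a]$. I will define $\hat\pi$ by conditioning this occupancy on the state:
\[
\hat\pi_h(a\mid s)=\frac{q_h^{\text{mix}}(s,a)}{\sum_{a'}q_h^{\text{mix}}(s,a')}
\]
whenever the denominator is positive, and take $\hat\pi_h(\cdot\mid s)$ to be arbitrary (e.g.\ uniform) otherwise.

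The key step is an induction on $h$ showing $q_h^{\hat\pi}=q_h^{\text{mix}}$.
\begin{itemize}
\item Base case: both processes start from the same $s_1$, and $\hat\pi_1(\cdot\mid s_1)$ reproduces the mixed first-step action distribution.
\item Inductive step: the state marginals of both processes satisfy the same linear flow equation,
\[
\mu_{h+1}(s')=\sum_{s,a}q_h(s,a)\,p_h(s'\mid s,a).
\]
This holds for the mixture because the flow equation is linear and holds for each component separately. So the state marginals agree at step $h+1$. Multiplying by $\hat\pi_{h+1}$, which by construction equals the conditional action distribution of the mixture, gives equality of the occupancies.
\end{itemize}

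The value at the initial state then follows from $V_r^{\pi}=\sum_{h,s,a}q_h^\pi(s,a)r_h(s,a)$.

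The main obstacle is that the statement requires equality of $V_{r,h}^{\hat\pi}(s)$ for every $(s,h)$, not just at $s_1$. Conditioning on $s_h=s$ changes the posterior over which component was selected, so the mixture is not Markov in general. I would handle this by interpreting $V_{r,h}^{\pi^{\text{mix}}}(s)$ as the conditional expectation given the event $\{s_h=s\}$ under the mixed process, for states that are reachable at step $h$. Rerunning the same occupancy induction from step $h$ onward, with the conditional state-action distribution at step $h$ as the initial condition, shows that the forward occupancies of the two processes coincide. This works because $\hat\pi$ was built from the mixture's conditional action distributions at every later step, so the conditional values agree. Unreachable states have no well-defined conditional value under the mixture, so I would either restrict the claim to reachable $(s,h)$ or note that only the value at $s_1$ is used later, in the proof of Lemma~\ref{lem:error bounds}.
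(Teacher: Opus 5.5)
\textbf{Verdict: the proposal has a genuine gap.} Your occupancy induction, showing $q^{\hat\pi}_h=q^{\text{mix}}_h$ for all $h$, is correct. It is the standard argument behind the result the paper cites from \citet{altman1999constrained} without giving a proof, and it yields $V_r^{\hat\pi}=V_r^{\pi^{\text{mix}}}$ at $s_1$ for every $r$. The gap is in your last paragraph. Rerunning the induction from step $h$ conditioned on $\{s_h=s\}$ does not work. In the conditioned process, the mixture's action law at a later pair $(s',h')$ is $\mathbb{P}^{\text{mix}}(a_{h'}=\cdot\mid s_{h'}=s',\,s_h=s)$. By contrast, $\hat\pi_{h'}(\cdot\mid s')=\mathbb{P}^{\text{mix}}(a_{h'}=\cdot\mid s_{h'}=s')$ averages over all step-$h$ states. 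These can differ as soon as $s'$ is reachable from several step-$h$ states that carry different posteriors over the two components. So the claim that $\hat\pi$ reproduces the mixture's conditional action distributions at every later step holds only for the unconditioned process started at $s_1$.

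In fact the claim is false even at reachable states, so restricting to reachable $(s,h)$ cannot rescue it. Take $H=3$ with deterministic transitions. From $s_1$, action $a$ leads to $x$ and $b$ leads to $y$; from $x$ and from $y$, every action leads to $z$. Set $r_3(z,a)=1$ and all other rewards to zero. Let $\pi^1$ play $a$ at $s_1$ and at $z$, let $\pi^2$ play $b$ at both, and take $\gamma\in(0,1)$. Both $x$ and $y$ are reachable, with $V^{\pi^{\text{mix}}}_{r,2}(x)=1$ and $V^{\pi^{\text{mix}}}_{r,2}(y)=0$. Yet every Markov policy satisfies $V^{\hat\pi}_{r,2}(x)=V^{\hat\pi}_{r,2}(y)=\hat\pi_3(a\mid z)$, so no Markov policy can match both.

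The other natural reading, $V^{\pi^{\text{mix}}}_{r,h}(s):=\gamma V^{\pi^1}_{r,h}(s)+(1-\gamma)V^{\pi^2}_{r,h}(s)$, also fails. Matching it for all $r$ supported on step $h$ forces $\hat\pi_h=\gamma\pi^1_h+(1-\gamma)\pi^2_h$ at every $(s,h)$. This pointwise mixture can earn $\gamma^2$ where the mixture earns $\gamma$. For example, take $H=2$, with $a$ leading from $s_1$ to $x$ and $b$ leading to $y$, $r_2(x,a)=1$, $\pi^1$ playing $a$ everywhere and $\pi^2$ playing $b$ everywhere.

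So only your second fallback is sound. The lemma must be stated at the initial state $s_1$, or equivalently as equality of occupancy measures. That is all the proof of Lemma~\ref{lem:error bounds} uses, since it only evaluates $V_{d_i}^{\pi^{\text{mix}}}$ and $V_r^{\pi^{\text{mix}}}$ from $s_1$.
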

\begin{lemma}\label{lem:limit_comparison_bound}
Let $\{A_t\}_{t=1}^\infty$ and $\{B_t\}_{t=1}^\infty$ be two sequences of positive real numbers. Assume that the limit of their ratio exists and is a constant $L$ strictly less than 1:
\[
\lim_{t\to\infty} \frac{A_t}{B_t} = L < 1.
\]
Then the partial sum $S_T = \sum_{t=1}^T [A_t - B_t]_+$ is bounded by a constant that is independent of $T$, i.e., $S_T = O(1)$, where $[x]_+ := \max(0, x)$.
\end{lemma}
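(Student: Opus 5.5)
The claim is Lemma~\ref{lem:limit_comparison_bound}: if $A_t,B_t>0$ and $A_t/B_t\to L<1$, then $S_T=\sum_{t\le T}[A_t-B_t]_+=O(1)$. The plan is to show that only finitely many summands are nonzero. After some finite index, $A_t-B_t$ becomes negative, so the partial sums stop growing.

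First I fix a margin. Set $\rho:=(1+L)/2$. Since $L<1$, we have $L<\rho<1$; and since the ratios are positive, $L\ge 0$. By the definition of the limit, applied with tolerance $\rho-L>0$, there is a finite index $t_0$ such that $A_t/B_t<\rho$ for all $t\ge t_0$.

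Next, because $B_t>0$, multiplying through gives $A_t<\rho B_t<B_t$ for every $t\ge t_0$. Hence $[A_t-B_t]_+=0$ for all $t\ge t_0$. It follows that for every $T$,
\[
S_T\le \sum_{t=1}^{t_0-1}[A_t-B_t]_+\le \sum_{t=1}^{t_0-1}A_t=:K.
\]
Here $K$ is a finite sum of finite reals and does not depend on $T$, so $S_T=O(1)$. The bound is also monotone: $S_T$ is nondecreasing in $T$ and capped at $K$.

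There is no real obstacle in this lemma. The only point needing care is where the constant comes from. $K$ is governed by $t_0$, the index at which the ratio settles below $1$, and in the application $t_0$ depends on the problem constants $c_k$ and on the log factors in $\epsilon_{i,t}^{(k)}$. The lemma only asserts a $T$-independent bound, so this dependence is harmless. When the lemma is used for terms (a)--(d) with $A_t$ equal to the error term and $B_t=\epsilon^{(k)}_{i,t}$, one should check that $L<1$ holds. That is immediate when the margin decays strictly slower than the error, since then $L=0$. For terms (c) and (d), where the rates match, $L<1$ must be secured by the leading constant of $\epsilon_{i,t}^{(k)}$.
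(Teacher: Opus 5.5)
Your proof is correct and is the same argument as the paper's: the paper takes tolerance $\varepsilon=(1-L)/2$ in the definition of the limit, which gives your threshold $A_t/B_t<(1+L)/2<1$ beyond some $t_0$, so every later summand vanishes and $S_T$ reduces to a $T$-independent finite sum (your explicit cap $\sum_{t<t_0}A_t$ is a harmless sharpening). On your application remark, the paper does not use the lemma for term (d) at all: it sets $\epsilon^{(4)}_{i,t}=\frac{4H}{\Xi}t^{-1/6}$ exactly, so the summand is identically zero (there $L=1$, and the lemma's hypothesis would fail anyway).
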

\begin{proof}
To prove that the sum is $O(1)$, it must be shown that the corresponding infinite series $\sum_{t=1}^\infty [A_t - B_t]_+$ converges. For a series of non-negative terms, it is sufficient to show that the summand is identically zero for all terms beyond a finite threshold $t_0$. A non-zero term in the sum occurs only if $A_t > B_t$.

The given condition is $\lim_{t\to\infty} \frac{A_t}{B_t} = L$, where $L < 1$. By the formal definition of a limit, for every $\varepsilon > 0$, there exists a positive integer $t_0$ such that for all $t > t_0$, the inequality $\left|\frac{A_t}{B_t} - L\right| < \varepsilon$ holds. This is equivalent to:
\[
L - \varepsilon < \frac{A_t}{B_t} < L + \varepsilon.
\]

The objective is to prove that $\frac{A_t}{B_t} < 1$ for sufficiently large $t$. To achieve this from the inequality above, it is sufficient to ensure that the right-hand side, $L + \varepsilon$, is strictly less than 1. Since $L < 1$, the distance $1 - L$ is a fixed positive number. A valid and convenient choice for $\varepsilon$ is therefore:
\[
\varepsilon = \frac{1-L}{2}.
\]
This choice of $\varepsilon$ is guaranteed to be positive.

For this choice of $\varepsilon$, the right-hand side of the limit inequality becomes:
\[
L + \varepsilon = L + \frac{1-L}{2} = \frac{2L + 1 - L}{2} = \frac{1+L}{2}.
\]
Since $L < 1$, it follows that $1+L < 2$, and therefore $\frac{1+L}{2} < 1$.

By the definition of the limit, for the chosen $\varepsilon$, there must exist a threshold $t_0$ such that for all $t > t_0$:
\[
\frac{A_t}{B_t} < L + \varepsilon = \frac{1+L}{2}.
\]
As it has been shown that $\frac{1+L}{2} < 1$, it follows that for all $t > t_0$:
\[
\frac{A_t}{B_t} < 1.
\]

Since $B_t$ is a positive sequence, the inequality $\frac{A_t}{B_t} < 1$ implies $A_t < B_t$, which in turn means $A_t - B_t < 0$ for all $t > t_0$.
Therefore, the summand of the series becomes:
\[
[A_t - B_t]_+ = \max(0, A_t - B_t) = 0, \quad \forall t > t_0.
\]
The total sum can then be split into a finite part and a tail of zeros:
\[
\sum_{t=1}^T [A_t - B_t]_+ = \sum_{t=1}^{t_0} [A_t - B_t]_+ + \sum_{t=t_0+1}^{T} 0 = \sum_{t=1}^{t_0} [A_t - B_t]_+.
\]
This is a finite sum of finite numbers, which evaluates to a constant value that is independent of the upper limit $T$ (for $T > t_0$). Therefore, the sum is bounded by a constant, and the conclusion is that:
\[
\sum_{t=1}^T [A_t - B_t]_+ = O(1). \qedhere
\]
\end{proof}


\section{Feasibility and Strong Duality for the Margin-Regularized CMDP}\label{app:regularized-lagrangian}
Recall $\epsilon_{i,t}=18/5\sqrt{H^3C_B}\left(t^{-1/6}\cdot \log(SAHt/\delta')^{1/4}\right)$ for all constraint $i$, where $\delta'=\delta/4$ and $C_B=\big(1+\frac{8mH}{\Xi}\big)\big(4H\sqrt{2SA}\big(H\sqrt{S}+H+1\big)\big)+\frac{4mH}{\Xi}\sqrt{2H}$ . The existence of a feasible solution to (\ref{eq:opt-ques}) can be guaranteed if $\epsilon_{i,t}\le \Xi$. Let $C''$ be the smallest value such that for any $t\ge C''$, $\epsilon_{i,t}\le \Xi/2$. Then this optimization problem (\ref{eq:opt-ques}) has at least one feasible solution for any $t\ge C''$. By calculation, we can obtain $C''=O\left(K^3\cdot \log^{3/2} (K)\right)$, where $K=H^3C_B$ and then $C''$ is a constant and $T$-independent.

We establish the fundamental theoretical properties of the regularized Lagrangian formulation presented in Section \ref{sec:primal-dual scheme}. Our analysis proceeds by reformulating the problem in the space of occupancy measures. For clarity, we restate the regularized Lagrangian for a fixed episode $t$ and regularization parameter $\tau_{t} > 0$:
\begin{equation*}
    \mathcal{L}_{\tau_{t},t}(\pi, \lambda) := V_{r}^\pi(p) + \lambda^{\top}\left(V_{d}^\pi(p)-\epsilon_t-\alpha\right) + \tau_{t}\mathcal{H}(\pi) + \frac{\tau_{t}}{2} \|\lambda\|^2,
\end{equation*}
where the optimization problem is $\max_{\pi \in \Pi} \min_{\lambda \in \mathcal{C}} \mathcal{L}_{\tau_{t},t}(\pi, \lambda)$ over the policy space $\Pi$ and the compact dual domain $\mathcal{C} := [0, \frac{4H}{\Xi}]^m$.

\begin{lemma}[Strong duality of the margin-regularized problem]\label{lem:strong duality}
\textit{For any fixed episode $t\ge C''$ and regularization parameter $\tau_{t} > 0$, the regularized CMDP problem exhibits strong duality. That is,}
\begin{equation*}
    \max_{\pi \in \Pi} \min_{\lambda \in \mathcal{C}} \mathcal{L}_{\tau_{t},t}(\pi, \lambda) = \min_{\lambda \in \mathcal{C}} \max_{\pi \in \Pi} \mathcal{L}_{\tau_{t},t}(\pi, \lambda),
\end{equation*}
\textit{and both optima are attained.}
\end{lemma}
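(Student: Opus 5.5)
We plan to pass to occupancy measures and apply a minimax theorem for concave–convex functions over compact convex sets. Every Markov policy $\pi\in\Pi$ induces an occupancy measure $q^\pi_h(s,a)=\mathbb{P}_{\pi,p}[s_h=s,a_h=a]$. The set $\mathcal{Q}$ of such measures is a polytope: it is cut out by the Bellman flow constraints $\sum_a q_{h+1}(s',a)=\sum_{s,a}p_h(s'\mid s,a)q_h(s,a)$ and $\sum_a q_1(s,a)=\mathbf{1}\{s=s_1\}$, together with nonnegativity. It is therefore convex and compact. Conversely, any $q\in\mathcal{Q}$ is realized by the policy $\pi_h(a\mid s)=q_h(s,a)/\sum_{a'}q_h(s,a')$, with an arbitrary choice (say uniform) when the denominator vanishes. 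In these coordinates $V_r^\pi=\langle r,q\rangle$ and $V_{d_i}^\pi=\langle d_i,q\rangle$ are linear. The entropy becomes
\begin{equation*}
\mathcal{H}(\pi)=-\sum_{h,s,a}q_h(s,a)\log\frac{q_h(s,a)}{\sum_{a'}q_h(s,a')},
\end{equation*}
which is a sum of negative relative-entropy (perspective) terms. Hence it is concave and continuous on $\mathcal{Q}$, using $0\log 0=0$. The map $q\mapsto\pi$ may be non-unique on unreachable states, but this does not affect any value, so $\max_{\pi\in\Pi}$ and $\max_{q\in\mathcal{Q}}$ of $\mathcal{L}_{\tau_t,t}$ coincide for every fixed $\lambda$.

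Next we verify the hypotheses of Sion's (or von Neumann's) minimax theorem for $\tilde{\mathcal{L}}(q,\lambda)=\langle r+\lambda^\top d,q\rangle-\lambda^\top(\epsilon_t+\alpha)+\tau_t\mathcal{H}(q)+\frac{\tau_t}{2}\|\lambda\|^2$ on $\mathcal{Q}\times\mathcal{C}$. For each fixed $\lambda$, this function is continuous and concave in $q$, indeed strictly concave along directions that change the conditional policy on reachable states. For each fixed $q$, it is continuous and $\tau_t$-strongly convex in $\lambda$. Both domains are nonempty, convex and compact, with $\mathcal{C}=[0,4H/\Xi]^m$. Sion's theorem then gives $\max_q\min_\lambda=\min_\lambda\max_q$. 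Attainment follows from Weierstrass, since $q\mapsto\min_{\lambda\in\mathcal{C}}\tilde{\mathcal{L}}$ is upper semicontinuous as a minimum of continuous functions, and $\lambda\mapsto\max_q\tilde{\mathcal{L}}$ is lower semicontinuous, both on compact sets. Translating back gives the claim for $\Pi$.

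The role of $t\ge C''$ is to ensure that this saddle point is meaningful for the margin-tightened CMDP. By construction of $C''$ we have $\epsilon_{i,t}\le\Xi/2$, so Assumption \ref{assumption} gives $V_{d_i}^{\pi^0}-\alpha_i-\epsilon_{i,t}\ge\Xi/2>0$. The tightened problem \eqref{eq:CMDP-2} is thus strictly feasible. This is what later justifies that the box $\mathcal{C}$ of side $4H/\Xi$ contains the unconstrained-dual optimum, via the standard bound $\|\lambda^\star\|_1\le (H+\tau_tH\log A)/(\Xi/2)$ obtained by plugging $\pi^0$ into the dual function. We would include this remark, although strictly speaking the minimax equality over the compact box needs only compactness.

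The main obstacle is handling the entropy carefully in occupancy coordinates. We must check concavity of the conditional-entropy expression, which is the perspective of $x\log x$, and its continuity at the boundary where some $\sum_{a'}q_h(s,a')=0$. We must also check that the policy$\leftrightarrow$occupancy correspondence preserves $\mathcal{H}$ exactly. We would handle boundary continuity by bounding each summand by $\sum_{a'}q_h(s,a')\log A$, which tends to $0$.
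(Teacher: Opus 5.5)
Your proposal is correct and follows the same route as the paper. Both rewrite the Lagrangian in occupancy-measure coordinates, where the value terms are linear and the entropy is the concave conditional-entropy expression, and then apply Sion's minimax theorem on the compact convex set $Q(p)\times\mathcal{C}$, with your version additionally making explicit the attainment via semicontinuity, the boundary continuity of the entropy, and the fact that only concavity (not strict concavity) is needed for the minimax equality.
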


\begin{proof} Let $q^\pi \in \mathbb{R}^{HSA}$ be the occupancy measure corresponding to a policy $\pi \in \Pi$, defined as $q_h^\pi(s,a) := \mathbb{P}[s_h=s, a_h=a | s_1; p, \pi]$. The set of all valid occupancy measures forms a convex polytope, denoted by $Q(p)$. By definition, the police entropy is $\mathcal{H}(\pi)=-\mathbb{E}_{\pi}[\sum_h \log\pi_h(a_h|s_h)]$. The expectation can be rewritten as a sum over the state-action space: $\mathcal{H}(\pi)=-\sum_{h,s,a}q_h(s,a)\log\left(\frac{q_h(s,a)}{\sum_{a'}q_h(s,a')}\right):=\mathcal{H}(q)$. The value functions and policy entropy can be expressed as linear and strictly concave functions of $q^\pi$, respectively. We can thus define an equivalent Lagrangian over the domain $Q(p) \times \mathcal{C}$:
\begin{equation*}
    \bar{\mathcal{L}}_{\tau_{t},t}(q, \lambda) := r^\top q + \lambda(d^\top q - \epsilon_t - \alpha) + \tau_{t} \mathcal{H}(q) + \frac{\tau_{t}}{2} \|\lambda\|^2,
\end{equation*}
where $r, d \in \mathbb{R}^{HSA}$ are the vectors.

The optimization problem is equivalent to $\max_{q \in Q(p)} \min_{\lambda \in \mathcal{C}} \bar{\mathcal{L}}_{\tau_{t},t}(q, \lambda)$. We verify the conditions for Sion's Minimax Theorem \citep{sion1958general}:
\begin{enumerate}
    \item The domain $Q(p) \times \mathcal{C}$ is the product of a polytope and a hyperrectangle, and is therefore a non-empty, compact, and convex set.
    \item The function $\bar{\mathcal{L}}_{\tau_{t},t}(q, \lambda)$ is continuous over its domain.
    \item For any fixed $\lambda \in \mathcal{C}$, $\bar{\mathcal{L}}_{\tau_{t},t}(q, \lambda)$ is strictly concave in $q$. This is because $r^\top q + \lambda(d^\top q -\epsilon_t-\alpha)$ is linear in $q$, and the entropy term $\tau_{t} \mathcal{H}(q)$ is strictly concave for $\tau_{t} > 0$.
    \item For any fixed $q \in Q(p)$, $\bar{\mathcal{L}}_{\tau_{t},t}(q, \lambda)$ is strictly convex in $\lambda$. This is because $\lambda(d^\top q -\epsilon_t-\alpha)$ is linear in $\lambda$, and the term $\frac{\tau_{t}}{2}\|\lambda\|^2$ is strictly convex for $\tau_{t} > 0$.
\end{enumerate}
Since all conditions are met, it guarantees that the max-min and min-max values are equal and that optimizers exist.
\end{proof}

\begin{lemma}[Saddle point inequalities]\label{lem:SP ineq}
\textit{Let $(\pi_{\tau_{t},\epsilon}^{\star}, \lambda_{\tau_{t},\epsilon}^{\star})$ be the saddle point of $\mathcal{L}_{\tau_{t},t}$. Then for any episode $t\ge C''$, any policy $\pi \in \Pi$ and any dual variable $\lambda \in \mathcal{C}$, the following two inequalities hold:}
\begin{flalign*}
\text{(i)} \quad & V_{r}^\pi + \lambda_{\tau_{t},\epsilon}^{\star\top} (V_{d}^\pi -\epsilon_t-\alpha)+\tau_t\mathcal{H}(\pi) \le V_{r}^{\pi_{\tau_{t},\epsilon}^{\star}} + \lambda_{\tau_{t},\epsilon}^{\star\top} (V_{d}^{\pi_{\tau_{t},\epsilon}^{\star}} -\epsilon_t-\alpha) + \tau_{t} \mathcal{H}(\pi_{\tau_{t},\epsilon}^{\star}) \\
\text{(ii)} \quad & \lambda_{\tau_{t},\epsilon}^{\star\top} (V_{d}^{\pi_{\tau_{t},\epsilon}^{\star}} -\epsilon_t-\alpha) \le \lambda^\top (V_{d}^{\pi_{\tau_{t},\epsilon}^{\star}} -\epsilon_t-\alpha) + \frac{\tau_{t}}{2} (\|\lambda\|^2 - \|\lambda_{\tau_{t},\epsilon}^{\star}\|^2)&
\end{flalign*}
\end{lemma}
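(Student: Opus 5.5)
Both inequalities are the two halves of the saddle-point property of $(\pi_{\tau_{t},\epsilon}^{\star}, \lambda_{\tau_{t},\epsilon}^{\star})$ for the regularized Lagrangian $\mathcal{L}_{\tau_t,t}$. Existence of the pair follows from Lemma \ref{lem:strong duality}, which is available because $t\ge C''$. Uniqueness follows from strict concavity in $q$ and strict convexity in $\lambda$ in the occupancy-measure reformulation. The plan is to first record the saddle-point characterization, then read off each inequality by cancelling the terms that do not depend on the variable being perturbed.

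\emph{Step 1 (saddle point characterization).} In occupancy coordinates, $\bar{\mathcal{L}}_{\tau_t,t}$ is continuous and strictly concave–strictly convex on the compact convex set $Q(p)\times\mathcal{C}$. Lemma \ref{lem:strong duality} then gives equality of max-min and min-max with both attained. Let $q^\star$ attain the max-min and $\lambda^\star$ attain the min-max. The standard argument yields
\[
\min_{\lambda}\bar{\mathcal{L}}(q^\star,\lambda)=\bar{\mathcal{L}}(q^\star,\lambda^\star)=\max_q\bar{\mathcal{L}}(q,\lambda^\star).
\]
This follows from the chain
\[
\max_q\min_\lambda \bar{\mathcal{L}} = \min_\lambda \bar{\mathcal{L}}(q^\star,\lambda)\le \bar{\mathcal{L}}(q^\star,\lambda^\star)\le \max_q \bar{\mathcal{L}}(q,\lambda^\star)=\min_\lambda\max_q\bar{\mathcal{L}},
\]
where the two ends are equal, so every inequality is an equality. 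Hence for all $q\in Q(p)$ and $\lambda\in\mathcal{C}$,
\[
\bar{\mathcal{L}}(q,\lambda^\star)\le\bar{\mathcal{L}}(q^\star,\lambda^\star)\le\bar{\mathcal{L}}(q^\star,\lambda).
\]
To transfer this back to policies, I use that every $\pi\in\Pi$ induces some $q^\pi\in Q(p)$ with $V_r^\pi=r^\top q^\pi$, $V_d^\pi=d^\top q^\pi$ and $\mathcal{H}(\pi)=\mathcal{H}(q^\pi)$. Conversely, $q^\star$ is induced by the policy $\pi_{\tau_t,\epsilon,h}^{\star}(a\mid s)=q^\star_h(s,a)/\sum_{a'}q^\star_h(s,a')$, chosen arbitrarily (say uniform) on unreachable states. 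This choice does not affect the values or the entropy.

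\emph{Step 2 (inequality (i)).} Take the left half, $\mathcal{L}_{\tau_t,t}(\pi,\lambda^\star)\le \mathcal{L}_{\tau_t,t}(\pi^\star,\lambda^\star)$. The term $\frac{\tau_t}{2}\|\lambda^\star\|^2$ appears on both sides, so cancelling it gives exactly (i).

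\emph{Step 3 (inequality (ii)).} Take the right half, $\mathcal{L}_{\tau_t,t}(\pi^\star,\lambda^\star)\le \mathcal{L}_{\tau_t,t}(\pi^\star,\lambda)$. The terms $V_r^{\pi^\star}+\tau_t\mathcal{H}(\pi^\star)$ appear on both sides and cancel. Moving $\frac{\tau_t}{2}\|\lambda^\star\|^2$ to the right-hand side gives (ii).

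The only genuinely delicate point is Step 1. Lemma \ref{lem:strong duality} is stated as equality of values with attainment, not explicitly as the existence of a saddle point. One must therefore spell out that the max-min attainer and the min-max attainer together form a saddle pair. A second subtlety is that the primal problem should be posed over occupancy measures, where the objective is concave, rather than over raw policies; the mapping between the two is the policy-induced occupancy described in Step 1, and it also handles states that are never reached. Everything after that is algebraic cancellation.
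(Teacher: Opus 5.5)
Your proposal is correct and follows the same route as the paper: both obtain the chain $\mathcal{L}_{\tau_t,t}(\pi,\lambda^\star_{\tau_t,\epsilon})\le\mathcal{L}_{\tau_t,t}(\pi^\star_{\tau_t,\epsilon},\lambda^\star_{\tau_t,\epsilon})\le\mathcal{L}_{\tau_t,t}(\pi^\star_{\tau_t,\epsilon},\lambda)$ from Lemma \ref{lem:strong duality}, and then read off (i) and (ii) by cancelling the common terms. The paper's one-line proof leaves two details implicit, and you supply both: that the max-min and min-max attainers together form a saddle pair, and the correspondence between policies and occupancy measures, including unreachable states.
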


\begin{proof} According to Lemma \ref{lem:strong duality}, we immediately obtain $\mathcal{L}_{\tau_{t},t}(\pi, \lambda_{\tau_{t},\epsilon}^{\star}) \le \mathcal{L}_{\tau_{t},t}(\pi_{\tau_{t},\epsilon}^{\star}, \lambda_{\tau_{t},\epsilon}^{\star}) \le \mathcal{L}_{\tau_{t},t}(\pi_{\tau_{t},\epsilon}^{\star}, \lambda)$.
The inequalities are derived by expanding the saddle point definition from $\mathcal{L}_{\tau_{t},t}(\pi, \lambda_{\tau_{t},\epsilon}^{\star}) \le \mathcal{L}_{\tau_{t},t}(\pi_{\tau_{t},\epsilon}^{\star}, \lambda_{\tau_{t},\epsilon}^{\star})$ and $\mathcal{L}_{\tau_{t},t}(\pi_{\tau_{t},\epsilon}^{\star}, \lambda_{\tau_{t},\epsilon}^{\star}) \le \mathcal{L}_{\tau_{t},t}(\pi_{\tau_{t},\epsilon}^{\star}, \lambda)$, respectively.
\end{proof}


\section{Properties of the Model}\label{app:property}
\paragraph{Estimators} For each constraint $i\in[m]$, state $s$, action $a$, episode $t\in[T]$ and step $h\in[H]$, define $(s_h^l,a_h^l)$ as the state-action pair visited in episode $l$ at step $h$, and let $(s_h^l,a_h^l,s_{h+1}^l)$ denote the state-action pair $(s_h^l,a_h^l)$ is visited and the environment evolves to next state $s_{h+1}^l$ at step $h$ in episode $l$, let $\mathbf{1}_X$ represent the indicator function of $X$ and $N_h^{t}(s,a)=\sum_{l=1}^t\mathbf{1}_{\{s_h^l=s,\,a_h^l=a\}}$ is the total number of visits to the pair $(s,a)\in\mathcal{S}\times\mathcal{A}$ at step $h$ up to episode $t\in[T]$. We first give the empirical averages of the thresholds, rewards, constraints and transition probabilities as follows:
\begin{align*}
\hat{\alpha}_{i}^{t} &:= \frac{1}{tH} \sum_{l=1}^t \sum_{h=1}^H \tilde{\alpha}_{i,h}^l,\qquad\qquad\qquad\quad (\forall i\in[m])\\
    \hat{r}_h^t(s,a) &:= \frac{\sum_{l=1}^t \tilde{r}_{h}^l(s,a)\,\mathbf{1}_{\{s_h^l=s,a_h^l=a\}}}{\max\{1, N_h^{t}(s,a)\}}, \\
 \hat{d}_{i,h}^t(s,a) &:= \frac{\sum_{l=1}^t \tilde{d}_{i,h}^l(s,a)\,\mathbf{1}_{\{s_h^l=s,a_h^l=a\}}}{\max\{1, N_h^{t}(s,a)\}},\qquad (\forall i\in[m]) \\
 \hat{p}_h^t(s'\mid s,a) &:= \frac{\sum_{l=1}^t \mathbf{1}_{\{s_h^l=s,a_h^l=a,s_{h+1}^l=s'\}}}{\max\{1, N_h^{t}(s,a)\}}.
\end{align*}

Next, we define optimistic estimators for the reward, constraints and entropy bonus, and unbiased estimators for transition probabilities and thresholds as follows:
\begin{subequations}\label{opt-p}
\begin{align}
\overline{\alpha}_i^t&:=\hat{\alpha}_i^{t-1},\\
\overline{r}_h^t(s,a)&:=\hat{r}_h^{t-1}(s,a)+\phi_h^{t-1}(s,a),\\
\overline{d}_{i,h}^t(s,a)&:=\hat{d}_{i,h}^{t-1}(s,a)+\phi_h^{t-1}(s,a),\\
\overline{p}_h^t(s'|s,a)&:=\hat{p}_h^{t-1}(s'|s,a),\\
\overline{\psi}_{h}^t(s,a)&:=\psi_h^t(s,a)+\phi_h^{p,t-1}(s,a)\log(A).
\end{align}
\end{subequations}
The bonus term $\phi_h^t$ combines the uncertainties arising from both reward and transition estimations at step $h$ in episode $t$: $\phi_h^{t}(s,a) = \phi_{h}^{r,t}(s,a) + \phi_{h}^{p,t}(s,a)$, where the reward bonus $\phi_h^{r,t}(s,a)=\mathcal{O}\left(\sqrt{\frac{\text{ln}(mSAHT/\delta')}{\max\{1, N_h^t(s,a)\}}}\right)$ and the transition bonus $\phi_h^{p,t}(s,a)=\mathcal{O}\left(H\sqrt{\frac{S+\text{ln}(SAHT/\delta')}{\max\{1, N_h^t(s,a)\}}}\right)$ for any confidence parameter $\delta'\in (0,1)$. For convenience, we deonte
\begin{align*}
    y_t&:=r+\lambda_t^\top d+\tau_t\psi_t,\\
    \bar{y}_t&:=\bar{r}_t+\lambda_t^\top \bar{d}_t+\tau_t\bar{\psi}_t.
\end{align*}

\paragraph{Success event} Fixing a confidence parameter $\delta>0$ and defining $\delta':=\delta/4$, we first introduce the following \emph{failure events}:
\begin{align*}
   F_t^\alpha &:= \left\{ \exists i: \left| \hat{\alpha}_{i}^{t-1}- \alpha_{i} \right| \geq \zeta^{t-1}\right\},\\ 
   F_t^r &:= \left\{ \exists s, a, h : \left| \hat{r}_h^{t-1}(s,a) - r_h(s,a) \right| \geq \phi_{h}^{r,t-1}(s,a) \right\},\\
   F_t^d &:= \left\{ \exists s, a, h, i: \left| \hat{d}_{i,h}^{t-1}(s,a) - d_{i,h}(s,a) \right| \geq \phi_{i,h}^{r,t-1}(s,a) \right\},\\
   F_t^p &:= \left\{ \exists s, a,  h : \left\| p_h(\cdot\!\mid\!s,a) - \hat{p}_{h}^{t-1} (\cdot\!\mid\!s,a) \right\|_1 H \geq \phi_{h}^{p,t-1}(s,a) \right\},\\
   F_t^N &:= \left\{ \exists s, a, h : N_{h}^{t-1}(s,a) \leq \frac{1}{2} \sum_{j<t} q_h^{\pi_j}(s,a) - H \log \left(\frac{SAH}{\delta'}\right) \right\}.
\end{align*}
Then, we define the union of these events over all episodes,
\begin{align*}
    F^{\alpha}&:=\bigcup_{t\in [T]}F_t^\alpha, \quad
    F^r:=\left(\bigcup_{t\in [T]}F_t^r\right)\bigcup\left(\bigcup_{t\in [T]}F_t^d\right),\\[1ex]
    F^{p}&:=\bigcup_{t\in [T]}F_t^p,\quad
    F^{N}:=\bigcup_{t\in [T]}F_t^N.
\end{align*}
Furthermore, the success event $\mathcal{E}$ is defined as the complement of those failure events:
\begin{equation*}
    \mathcal{E} = \overline{F^\alpha\cup F^r \cup F^p \cup F^N}.
\end{equation*}
We have the following lemma.

\begin{lemma}[Success event]
Setting $\delta' = \frac{\delta}{4}$, we have $\mathbb{P}[\mathcal{E}] \geq 1-\delta$.
\end{lemma}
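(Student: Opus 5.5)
The plan is a union bound over the four families of failure events. It suffices to show $\mathbb{P}[F^\alpha]\le\delta'$, $\mathbb{P}[F^r]\le\delta'$, $\mathbb{P}[F^p]\le\delta'$ and $\mathbb{P}[F^N]\le\delta'$. With $\delta'=\delta/4$ this gives $\mathbb{P}[\overline{\mathcal{E}}]\le 4\delta'=\delta$.

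\emph{Threshold event.} This is already done by Lemma~\ref{lem:union-bound} applied with confidence $\delta'$. It bounds $|\hat{\alpha}_i^{t}-\alpha_i|$ uniformly over $i\in[m]$ and $t\in[T]$. I only need to check that $\zeta$ is defined with the same $\log(2mT/\delta')$ factor and the same index shift ($\hat{\alpha}^{t-1}$ against $\zeta^{t-1}$).

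\emph{Reward and constraint event.} Fix $(s,a,h)$ and a visit count $n\in[T]$. The first $n$ samples of $\tilde{r}_h(s,a)$ taken at $(s,a,h)$ are i.i.d. in $[0,1]$; formally, one uses a "stack of rewards" construction, or a martingale argument with optional skipping. Hoeffding then gives deviation $\sqrt{\log(2/\delta'')/(2n)}$ with probability $1-\delta''$. The same holds for each of the $m$ constraint signals $\tilde{d}_{i,h}$. I take a union over $s,a,h,n$ and the $m+1$ signals, with $\delta''=\delta'/(2(m+1)SAHT)$. This is absorbed by the constant inside $\phi^{r}=O\bigl(\sqrt{\log(mSAHT/\delta')/\max\{1,N\}}\bigr)$. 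When $N=0$ the bonus is at least $1$, and the deviation is trivially at most $1$.

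\emph{Transition event.} Again fix $(s,a,h,n)$. I use the standard $L_1$ concentration for empirical distributions (Weissman et al.): $\|\hat{p}-p\|_1\le\sqrt{2(S\log 2+\log(1/\delta''))/n}$. A union over $SAHT$ tuples, multiplied by $H$, matches $\phi^{p}=O\bigl(H\sqrt{(S+\log(SAHT/\delta'))/\max\{1,N\}}\bigr)$.

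\emph{Visitation-count event.} I apply the concentration of visit counts from \citet{efroni2020exploration} / Dann et al. For each $(s,a,h)$, the indicators $\mathbf{1}\{s_h^j=s,a_h^j=a\}$ have conditional means $q_h^{\pi_j}(s,a)$ given the past. A Freedman/Bernstein-type martingale bound (their Lemma F.4) gives $N_h^{t-1}\ge\frac12\sum_{j<t}q_h^{\pi_j}-H\log(SAH/\delta')$ simultaneously for all $t$, with probability $1-\delta'/(SAH)$. A union over $(s,a,h)$ finishes this event.

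\emph{Main obstacle.} The delicate point is the adaptivity of the sample counts, since $N$ depends on the random trajectory. The $F^N$ event in particular needs a time-uniform martingale inequality rather than a fixed-$n$ Hoeffding bound. I would handle the first three events by the skipping/stack construction with a union over all $n\le T$, and the last by citing the time-uniform bound directly. The remaining work is bookkeeping, so that the hidden constants in $\phi^{r}$ and $\phi^{p}$ absorb the union-bound factors.
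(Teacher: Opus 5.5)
Your proposal is correct and follows essentially the same route as the paper: a union bound over the four failure families $F^\alpha, F^r, F^p, F^N$, each held to probability at most $\delta'=\delta/4$. The paper controls them with the threshold union-bound lemma, with Hoeffding plus a union over $(s,a,h)$, with an $L_1$ concentration bound for empirical multinomials (it cites Maurer--Pontil where you cite Weissman et al., which makes no difference), and with the time-uniform visit-count bound of Dann et al. Your extra details (the stack-of-samples construction with a union over $n\le T$, the $N=0$ case, and the index-shift check for $\zeta$) supply bookkeeping that the paper leaves implicit.
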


\begin{proof}
We apply the union bound to each event separately. By Lemma \ref{lem:union-bound}, we have $\mathbb{P}[F^{\alpha}] \leq \delta'$. Using Hoeffding’s inequality and union bound arguments over all state-action-step combinations, similarly, we obtain $\mathbb{P}[F^r] \leq \delta'$. Using concentration inequalities for multinomial distributions~\citep{maurer2009empirical} and the union bound, we derive $\mathbb{P}[F^p] \leq \delta'$. Employing similar techniques as in~\citep{dann2017unifying}, by bounding occupancy measure deviations, we obtain $\mathbb{P}[F^N] \leq \delta'$.

Combining these results with the union bound, we have
$$\mathbb{P}[F^\alpha\cup F^r \cup F^p \cup F^N]\leq \mathbb{P}[F^\alpha]+\mathbb{P}[F^r]+\mathbb{P}[F^p]+\mathbb{P}[F^N]\leq 4\delta'=\delta.$$
Thus, $\mathbb{P}[\mathcal{E}]=1-\mathbb{P}[F^\alpha\cup F^r \cup F^p \cup F^N]\geq 1-\delta$. 

This completes the proof.
\end{proof}

\paragraph{Truncated policy evaluation}
Truncated policy evaluation is essential in CMDPs 
under stochastic threshold settings. Given the presence of stochastic constraints and additional exploration bonuses, unbounded value estimates can lead to instability and hinder theoretical analysis. We employ truncation to maintain boundedness and numerical stability of value functions.

Formally, for given estimates of reward $\overline{r}_h(s,a)$, constraint functions $\overline{d}_{i,h}(s,a)$, transition probabilities $\overline{p}_h(\cdot\mid s,a)$, we iteratively compute truncated $Q$ and $V$ value estimates. The truncated Q-value update at each timestep $h$ is expressed as below,
\begin{equation*}
    \hat{Q}_h^{t}(s,a; \overline{l}, \overline{p}) = \min \left\{ \overline{l}_h(s,a) + \sum_{s'} \overline{p}_h(s'\mid s,a) \hat{V}_{\overline{l},h+1}^{t}(s'),\;H-h+1 \right\},
\end{equation*}
where $\overline{l}_h(s,a)$ denotes the generalized immediate payoff (reward or cost with bonus), and $\hat{V}_h^{\pi}(s; \overline{l}, \overline{p})$ denotes the truncated value function,
$$\hat{V}_{\overline{l},h}^{t}(s) = \Big\langle \hat{Q}_h^{t}(s,a; \overline{l}, \overline{p}), \pi_h^t(a \mid s) \Big\rangle.$$
For composite variable $\bar{y}_t$, we define its truncated value function as follows:
\begin{align*}
    \hat{Q}_{\bar{y}_t,h}^t(s, a) &:= \hat{Q}_{\bar{r}_t,h}^t(s, a) + \sum_{i=1}^m \lambda_{t,i} \hat{Q}_{\bar{d}_{i,t},h}^t(s, a)+\tau_t\hat{Q}_{\bar{\psi_t},h}^t(s,a),\\
    \hat{V}_{\bar{r}, h}^{t}(s) &= \langle \hat{Q}_h^t(s, \cdot; \bar{r}, \bar{p}), \pi_h^t(\cdot \mid s)\rangle.
\end{align*}
The detailed truncated policy evaluation algorithm is shown in Algorithm \ref{alg:main}.
\begin{algorithm}[t]
\caption{TPE (Truncated Policy Evaluation)}\label{alg:TPE}
\begin{algorithmic}[1]
\REQUIRE estimates $\bar{r}_h^t$, $\bar{d}_{i,h}^t$, $\bar{p}_h^t$, policy $\pi_h^t$.
\STATE Initial $\hat{V}_{\bar{r}, H+1}^{t}(s) = \hat{V}_{\bar{d_i}, H+1}^{t}(s)=\hat{V}_{\bar{\psi}, H+1}^{t}(s) = 0$ for all $s$, $i$.
\FOR{$h = H, H-1, \dots, 1$}
    \FOR{$(s, a) \in \mathcal{S} \times \mathcal{A}$}
        \STATE \textbf{Compute truncated Q-function:}
        \STATE $\hat{Q}_{\bar{r}_t,h}^t(s, a)= \min \left\{ \bar{r}_h(s, a) +\langle \bar{p}_h(\cdot \mid s, a) \hat{V}_{\bar{r}_t, h+1}^{t}(\cdot)\rangle, H - h + 1 \right\}$
        \STATE $\hat{Q}_{\bar{\psi}_t,h}^t(s,a) = \min \left\{ \bar{\psi}_h^t(s, a) + \langle \bar{p}_h(\cdot \mid s, a) \hat{V}_{\bar{\psi}_t, h+1}^{t}(\cdot)\rangle, \psi_h^t(s,a)+ (H - h + 1)\log(A) \right\}$
        \FOR{$i=1, \dots, m$}
            \STATE $\hat{Q}_{\bar{d}_{i,t},h}^t(s, a) = \min \left\{ \bar{d}_{i,h}(s, a) + \langle\bar{p}_h(\cdot \mid s, a) \hat{V}_{\bar{d}_{i,t}, h+1}^{t}(\cdot)\rangle, H - h + 1 \right\}$
        \ENDFOR
    \ENDFOR
    \FOR{all $s \in \mathcal{S}$}
        \STATE \textbf{Compute truncated V-function:}
        \STATE $\hat{V}_{\bar{r}, h}^{t}(s) = \langle \hat{Q}_{\bar{r}_t,h}^t(s, \cdot), \pi_h^t(\cdot \mid s)\rangle $
        \STATE $\hat{V}_{\bar{\psi}, h}^{t}(s) = \langle \hat{Q}_{\bar{\psi}_t,h}^t(s, \cdot), \pi_h^t(\cdot \mid s)\rangle $
        \FOR{$i=1, \dots, m$}
            \STATE $\hat{V}_{\bar{d}_i, h}^{t}(s) = \langle \hat{Q}_{\bar{d}_{i,t},h}^t(s, \cdot), \pi_h^t(\cdot \mid s) \rangle$
        \ENDFOR
    \ENDFOR
\ENDFOR
\FOR{$h= 1,\ldots,H$ and all $(s,a)$}
     \STATE $\hat{Q}_{\bar{y}_t,h}^t(s, a) := \hat{Q}_{\bar{r}_t,h}^t(s, a) + \sum_{i=1}^m \lambda_{t,i} \hat{Q}_{\bar{d}_{i,t},h}^t(s, a)+\tau_t\hat{Q}_{\bar{\psi}_t,h}^t(s,a)$
\ENDFOR
\STATE \textbf{Return} $\left\{ \hat{Q}_{\bar{y}_t,h}^t(s, a) \right\}_{h,s,a}$ and $\left\{ \hat{V}_{\bar{d}_{i,t}, h}^{t}(s) \right\}_{s,h,i}$
\end{algorithmic}
\end{algorithm}

\paragraph{Estimation error} We next show bounds on the estimation error of empirical estimator $\overline{r}$, $\overline{d}$ and $\overline{\psi}$.

\begin{lemma}[Bound on the estimation error]
\label{lem:explicit_estimation_error}
Let $T' \in [T]$ be a number of episodes. The total estimation error for the reward function and constraint function, conditioned on the success event $\mathcal{E}$, satisfies the following upper bound:
\begin{align*}
    \sum_{t=1}^{T'} (\hat{V}_{\bar{r}_t}^t - V_r^{\pi_t})&\le \left(2\sqrt{L_r} + 2H\sqrt{L_p}\right) \cdot \left(6HSA + 2H\sqrt{SAT'} + 2HSA\log(T') + 5\log\frac{2HT'}{\delta}\right),\\
    \sum_{t=1}^{T'} (\hat{V}_{\bar{d}_{t,i}}^t - V_{d_i}^{\pi_t})&\le \left(2\sqrt{L_r} + 2H\sqrt{L_p}\right) \cdot \left(6HSA + 2H\sqrt{SAT'} + 2HSA\log(T') + 5\log\frac{2HT'}{\delta}\right),\\
    \sum_{t=1}^{T'} (\hat{V}_{\bar{\psi}_{t}}^t - V_{\psi_t}^{\pi_t})&\le  
    2H\log(A)\sqrt{L_p} \left(6HSA + 2H\sqrt{SAT'} + 2HSA\log(T') + 5\log\frac{2HT'}{\delta}\right).
\end{align*}
where $L_r = \frac{1}{2}\log\left(\frac{2SAH(m+1)T}{\delta'}\right)$ and $L_p = 2S+2\log\left(\frac{SAHT}{\delta'}\right)$.
\end{lemma}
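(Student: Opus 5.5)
The plan follows the standard optimism-based bound on cumulative estimation error (as in \citet{efroni2020exploration}). It has two parts: a per-episode decomposition into a sum of bonuses along the true trajectory distribution, and then a counting argument on the success event $\mathcal{E}$. All three bounds are handled in the same way; only the payoff and the size of its bonus change.

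\textbf{Step 1: per-episode decomposition.} Fix $t$, and write $\hat V^t_{\bar l}$ for the truncated estimate of a payoff $l\in\{r,d_i,\psi_t\}$. Truncation only lowers $\hat Q$ below the untruncated recursion, so we may drop the $\min$ and obtain an upper bound. Since both $\hat V^t_{\bar l}$ and $V_l^{\pi_t}$ are evaluated under the same policy $\pi_t$, the value difference lemma unrolled under the true dynamics $p$ gives
\begin{equation*}
\hat V^t_{\bar l}-V_l^{\pi_t}\le \sum_{h}\mathbb{E}_{\pi_t,p}\Bigl[(\bar l_h-l_h)(s_h,a_h)+\bigl\langle \bar p_h-p_h,\hat V^t_{\bar l,h+1}\bigr\rangle(s_h,a_h)\Bigr].
\end{equation*}
On $\mathcal{E}$ we have $|\bar r_h-r_h|\le 2\phi^{t-1}_h$ (the empirical error plus the bonus), and similarly for $\bar d$. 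Because $\hat V^t_{\bar l,h+1}$ is truncated to at most $H$, Hölder's inequality together with $F^p$ gives $|\langle\bar p_h-p_h,\hat V\rangle|\le \phi^{p,t-1}_h$. Each per-step term is also at most $O(H)$ trivially. Hence the summand is $\lesssim \min\{H,\,\sqrt{L_r/N}+H\sqrt{L_p/N}\}$ with $N=\max\{1,N_h^{t-1}(s_h,a_h)\}$.

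For the entropy payoff, the true and estimated immediate terms agree except for the bonus $\phi^{p,t-1}\log A$. The truncation level is $\psi+(H-h+1)\log A$, so the continuation value carries an extra factor of $\log A$. The reward-bonus part therefore vanishes and only $2H\log(A)\sqrt{L_p/N}$ survives, which explains the form of the third bound.

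\textbf{Step 2: summation and counting.} We sum over $t\le T'$ and split the $(s,a,h)$ pairs into two groups. The first group is the ``unlucky'' pairs, where $\sum_{j<t}q_h^{\pi_j}(s,a)$ is below $4H\log(SAH/\delta')$. By the standard argument of \citet{dann2017unifying}, these contribute $O(HSA\cdot H)$ plus a $\log(2HT'/\delta)$ term. This accounts for the $6HSA$ and $5\log\frac{2HT'}{\delta}$ pieces.

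For the remaining pairs, $F^N$ failing gives $N_h^{t-1}\ge\frac14\sum_{j<t}q_h^{\pi_j}$. The expected bonus sum is then bounded by
\begin{equation*}
\sum_{h,s,a}\sum_t \frac{q_h^{\pi_t}(s,a)}{\sqrt{\sum_{j<t}q^{\pi_j}_h(s,a)}}\lesssim \sum_h\sum_{s,a}\sqrt{\textstyle\sum_t q_h^{\pi_t}(s,a)}\le H\sqrt{SAT'},
\end{equation*}
where the last step uses Cauchy--Schwarz together with $\sum_{s,a}q_h=1$. A $\log T'$ lower-order term $2HSA\log T'$ arises from the usual $\sum x_t/\max\{1,X_{t-1}\}$ bound. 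Multiplying by the bonus coefficients $2\sqrt{L_r}+2H\sqrt{L_p}$, or by $2H\log(A)\sqrt{L_p}$ for entropy, yields the stated constants.

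\textbf{Main obstacle.} The delicate part is Step 2. The bonuses are evaluated on the expected trajectory of $\pi_t$, whereas the counts come from realized trajectories, so one must pass correctly between $q^{\pi_t}$ and $N$. This is exactly what the $F^N$ event provides, but the lower-order bookkeeping must be done carefully to obtain the precise constants $6HSA$ and $5\log(2HT'/\delta)$. I would follow the lemma in \citet{efroni2020exploration} (or the version in \citet{muller2024truly}) essentially verbatim, checking only that the threshold estimation never enters these value errors. It does not, since $\bar\alpha$ appears only in the dual update.
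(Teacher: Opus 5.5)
Your Step 1 is the same reduction the paper uses; the paper simply imports it from \citet{muller2024truly}. On $\mathcal{E}$ each episode's error is at most twice the expected bonuses along $q^{\pi_t}$, and your treatment of the entropy payoff, where only the transition bonus survives with a $\log A$ factor, gives exactly the third prefactor. The first two prefactors need one bookkeeping fix. On $\mathcal{E}$ you have $0\le\bar r_h-r_h\le 2\phi^{r,t-1}_h+\phi^{p,t-1}_h$, not merely $\le 2\phi^{t-1}_h$. With your cruder bound, adding the transition term $\phi^{p,t-1}_h$ gives $2\phi^{r}+3\phi^{p}$, i.e.\ a prefactor $2\sqrt{L_r}+3H\sqrt{L_p}$ rather than the stated $2\sqrt{L_r}+2H\sqrt{L_p}$.

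The real gap is in Step 2. The paper does not derive the counting bound; it takes
\[
\sum_{t\le T'}\sum_{h}\mathbb{E}\Bigl[\tfrac{1}{\sqrt{N_h^{t-1}(s_h^t,a_h^t)\vee 1}}\Bigr]\le 6HSA+2H\sqrt{SAT'}+2HSA\log(T')+5\log\tfrac{2HT'}{\delta}
\]
directly from \citet{liu2021learning}. The $F^N$ argument you sketch proves a bound of the same order, but not this one, and your attribution of the constants is wrong.

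\emph{Unlucky pairs.} These contribute about $\sum_{h,s,a}\bigl(4H\log(SAH/\delta')+1\bigr)=O(H^2SA\log(SAH/\delta'))$. That is a multiplicative log, with no $\log(2HT'/\delta)$ term.

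\emph{Remaining pairs.} Using $N\ge\frac14\sum_{j<t}q_h^{\pi_j}$ costs a factor $2$. Passing from $\sum_{j<t}$ to $\sum_{j\le t}$ costs $\sqrt2$. The bound $\sum_t q_h^{\pi_t}/\sqrt{\sum_{j\le t}q_h^{\pi_j}}\le 2\sqrt{\sum_t q_h^{\pi_t}}$ costs another $2$. Altogether you get roughly $4\sqrt2\,H\sqrt{SAT'}$. Since $F^N$ only guarantees $N\gtrsim\frac12\sum_{j<t}q_h^{\pi_j}$, no choice of threshold brings this chain below about $2\sqrt2\,H\sqrt{SAT'}$.

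\emph{The $\log T'$ term.} No $HSA\log T'$ term arises for $1/\sqrt N$ bonuses in this route; your $\sum x_t/\max\{1,X_{t-1}\}$ bound is relevant to $1/N$ quantities.

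The $5\log(2HT'/\delta)$ and $2HSA\log T'$ terms come from a different argument. That argument uses a martingale concentration step to pass from expected to realized visits, followed by pigeonhole on the realized counts. So the claim that multiplying by the bonus coefficients ``yields the stated constants'' does not follow.

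This is not cosmetic here. The leading constant enters $C_B$, and hence the explicit margin $\epsilon_{i,t}$. To prove the lemma as written, replace Step 2 by the bound of \citet{liu2021learning}, or restate the lemma with your own constants and rescale $C_B$ downstream.

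Your route does have one merit: it lives entirely on $\mathcal{E}$. The cited bound holds on a separate concentration event that $\mathcal{E}$ does not contain, so the paper tacitly needs one more union bound.
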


\begin{proof}
We first bound the total estimation error by the sum of the expectations of the bonus terms from \citet{muller2024truly}.
\begin{equation*}
    \sum_{t=1}^{T'} (\hat{V}_{\bar{r}_t}^t - V_r^{\pi_t}) \le 2\sum_{t=1}^{T'}\sum_{h=1}^{H}\mathbb{E}[\phi_{h}^{r,t-1}(s_{h}^t,a_{h}^t)] + 2\sum_{t=1}^{T'}\sum_{h=1}^{H}\mathbb{E}[\phi_{h}^{t-1,p}(s_{h}^t,a_{h}^t)].
\end{equation*}
By substituting the definitions of the bonus terms $b^r$ and $b^p$ and factoring out the shared summation structure, the total error is bounded by:
\begin{equation*}
    \sum_{t=1}^{T'} (\hat{V}_{\bar{r}_t}^t - V_r^{\pi_t}) \le \left(2\sqrt{L_r} + 2H\sqrt{L_p}\right) \sum_{t=1}^{T'}\sum_{h=1}^{H}\mathbb{E}\left[\frac{1}{\sqrt{n_{t-1,h}(s_h^t, a_h^t)\vee 1}}\right],
\end{equation*}
where $L_r = \frac{1}{2}\log\left(\frac{2SAH(m+1)T}{\delta'}\right)$ and $L_p = 2S+2\log\left(\frac{SAHT}{\delta'}\right)$. The core summation term involves the inverse square root of visitation counts. Using the high-probability bound for this term from 
\citet{liu2021learning}, it can be shown that:
\begin{equation*}
    \sum_{t=1}^{T'}\sum_{h=1}^{H}\mathbb{E}\left[\frac{1}{\sqrt{n_{t-1,h}(s_h^t, a_h^t)\vee 1}}\right] \le 6HSA + 2H\sqrt{SAT'} + 2HSA\log(T') + 5\log\frac{2HT'}{\delta}.
\end{equation*}
To put all term together, we get our final result:
\begin{equation*}
    \sum_{t=1}^{T'} (\hat{V}_{\bar{r}_t}^t - V_r^{\pi_t})\le \left(2\sqrt{L_r} + 2H\sqrt{L_p}\right) \cdot \left(6HSA + 2H\sqrt{SAT'} + 2HSA\log(T') + 5\log\frac{2HT'}{\delta}\right).
\end{equation*}
The proof for $d_i$ $(\forall i\in[m])$ is identical. For entropy bonus, we have
\begin{equation*}
    \sum_{t=1}^{T'} (\hat{V}_{\bar{\psi}_t}^t - V_{\psi_t}^{\pi_t}) \le  2\sum_{t=1}^{T'}\sum_{h=1}^{H}\mathbb{E}\left[\phi_{h}^{t-1,p}(s_{h}^t,a_{h}^t)\log(A)\right].
\end{equation*}
and the rest of the proof follows as in the proof of the case of reward function.
\end{proof}

\begin{lemma}[Bound on cumulative estimation discrepancies]\label{lem:bound on cumulative estimation}
Conditioned on the good event $\mathcal{E}$, for any episode $T'\in [T]$, the cumulative sum of the per-episode estimation discrepancies $\delta_t$ is bounded as follows:
\begin{equation*}
    \sum_{i=1}^{T'} \delta_i \le C_B\sqrt{T'\log\frac{SAHT}{\delta'}}+\tilde{O}(S^{3/2}AH^2).
\end{equation*}
where $C_B=\left(1+\frac{8mH}{\Xi}\right)\left(4H\sqrt{2SA}\left(H\sqrt{S}+H+1\right)\right)+\frac{4mH}{\Xi}\sqrt{2H}$ and $\delta_t$ is defined as the composite error at episode $t$:
$$\delta_t := \left(\hat{V}_{\bar{r}_t}^t-V_{r}^{\pi_t}\right) + \sum_{i=1}^m\lambda_{t,i}\left(\hat{V}_{\bar{d}_{t,i}}^t-V_{d_i}^{\pi_t}\right) + \tau_{t}\left(\hat{V}_{\bar{\psi}_{t}}^t-V_{\psi_t}^{\pi_t}\right) + \sum_{i=1}^m\lambda_{t,i}\left|\hat{\alpha}_i^t-\alpha_i\right|+\sum_{i=1}^m\frac{4H}{\Xi}\left(\hat{V}_{\bar{d}_{t,i}}^t-V_{d_i}^{\pi_t}\right).$$
\end{lemma}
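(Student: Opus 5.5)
The plan is to split $\delta_t$ into its five constituent pieces, sum each over $t\le T'$, and bound them one at a time using Lemma~\ref{lem:explicit_estimation_error} and Lemma~\ref{lem:union-bound}. The bounds on $\lambda_t$ are the key ingredient: by the projection step in Algorithm~\ref{alg:main}, $\lambda_t\in\mathcal{C}=[0,4H/\Xi]^m$ for every $t$, so each $\lambda_{t,i}\le 4H/\Xi$. Also $\tau_t\le 1$. All estimation differences $\hat V^t-V^{\pi_t}$ are nonnegative on $\mathcal{E}$ by optimism of the truncated evaluation. This is the standard fact from \citet{efroni2020exploration,muller2024truly} that the bonuses dominate the model error and truncation preserves the upper bound. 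Consequently the weighted sums can be bounded by pulling out the deterministic maxima of the weights, without worrying about signs.

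\textbf{Steps 1--3: the model-error terms.} The reward term is bounded directly by the first inequality of Lemma~\ref{lem:explicit_estimation_error}. For each $i$, the $\lambda$-weighted constraint term gives
\[
\sum_t \lambda_{t,i}(\hat V^t_{\bar d_{t,i}}-V^{\pi_t}_{d_i})\le \tfrac{4H}{\Xi}\sum_t(\hat V^t_{\bar d_{t,i}}-V^{\pi_t}_{d_i}).
\]
The final term of $\delta_t$ has exactly the same form. Together these two contribute $\tfrac{8mH}{\Xi}$ times the common bound of Lemma~\ref{lem:explicit_estimation_error}, which together with the reward term explains the factor $(1+\tfrac{8mH}{\Xi})$ in $C_B$. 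The entropy term is handled by $\tau_t\le1$ and the third inequality of Lemma~\ref{lem:explicit_estimation_error}; its $\log A$-weighted contribution is absorbed either into the same prefactor or into the lower-order term.

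In all three cases the common factor is
\[
(2\sqrt{L_r}+2H\sqrt{L_p})\bigl(6HSA+2H\sqrt{SAT'}+2HSA\log T'+5\log\tfrac{2HT'}{\delta}\bigr).
\]
Its leading part is $4H\sqrt{SAT'}(\sqrt{L_r}+H\sqrt{L_p})$. Using $\sqrt{L_p}\le\sqrt{2S}+\sqrt{2\log(SAHT/\delta')}$ and $\sqrt{L_r}\lesssim\sqrt{\log(SAHT/\delta')}$, this is bounded by $4H\sqrt{2SA}(H\sqrt S+H+1)\sqrt{T'\log(SAHT/\delta')}$. That is precisely the bracket in $C_B$. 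The remaining $6HSA$, $HSA\log T'$, and $\log(HT'/\delta)$ pieces, each multiplied by $H\sqrt{L_p}\sim H\sqrt S$, are collected into $\tilde O(S^{3/2}AH^2)$.

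\textbf{Step 4: the threshold term.} By Lemma~\ref{lem:union-bound}, on $\mathcal{E}$ we have $|\hat\alpha_i^t-\alpha_i|\le\sqrt{H\log(2mT/\delta')/(2(t-1))}$. Summing over $t$ gives
\[
\sum_t\sqrt{1/(t-1)}\le 2\sqrt{T'},
\]
with the $t=1$ term bounded trivially by $H$ and absorbed into the lower-order part. Weighting by $\lambda_{t,i}\le 4H/\Xi$ and summing over $i$ yields $\tfrac{4mH}{\Xi}\sqrt{2H}\sqrt{T'\log(\cdot)}$, which matches the last summand of $C_B$.

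The main obstacle is bookkeeping rather than conceptual. The log arguments ($mT$, $(m+1)SAHT$, $SAHT$) must be unified into a single $\log(SAHT/\delta')$ up to constants, and the $\sqrt{L_r}$ and $\sqrt{L_p}$ factors must be matched exactly to the stated constant $C_B$. If the exact constant does not come out, the fallback is to enlarge $C_B$ by an absolute constant, which is harmless for the downstream use in Lemma~\ref{lem: bound b and c}. A secondary point to verify is the nonnegativity of each estimation difference under truncation, which is needed to justify replacing $\lambda_{t,i}$ by its upper bound.
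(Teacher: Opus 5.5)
Your plan is essentially the paper's proof. It uses the same split of $\delta_t$ (the paper merges your two constraint pieces into one term with weight $8H/\Xi$), the same bounds $\lambda_{t,i}\le 4H/\Xi$ and $\tau_t\le 1$, and the same Lemma~\ref{lem:explicit_estimation_error} and Lemma~\ref{lem:union-bound}. Your explicit appeal to optimism ($\hat V^t-V^{\pi_t}\ge 0$ on $\mathcal{E}$) justifies the weight-replacement step that the paper performs silently.

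The one loose spot is the entropy term, and the paper has the same problem. Bounded via $\tau_t\le 1$, this term contributes at order $\sqrt{T'}$ with a $\log A$ factor. So it cannot go into the $T'$-independent remainder $\tilde O(S^{3/2}AH^2)$, which is one of the two options you offered. The other option also fails: the prefactor has no spare room for it. The paper's own tally (reward $+$ entropy $+$ constraints) actually gives $(2+\tfrac{8mH}{\Xi})$ rather than $(1+\tfrac{8mH}{\Xi})$, and it also omits the $\log A$. Your fallback of enlarging $C_B$ is therefore what is actually needed. The enlargement is a $\log A$-dependent, $T$-independent amount rather than an absolute constant. It leaves every downstream rate unchanged.
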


\begin{proof}
The proof proceeds by decomposing the total sum $\sum_{t=1}^{T'}\delta_t$ and bounding each component term individually. Conditioned on the good event $\mathcal{E}$, we have:
\begin{align}
    \sum_{t=1}^{T'}\delta_t \le \underbrace{\sum_{t=1}^{T'}\left(\hat{V}_{\bar{r}_t}^t-V_{r}^{\pi_t}\right)}_{(A)} + \underbrace{\sum_{t=1}^{T'}\sum_{i=1}^m\frac{8H}{\Xi}\left(\hat{V}_{\bar{d}_{t,i}}^t-V_{d_i}^{\pi_t}\right)}_{(B)} + \underbrace{\sum_{t=1}^{T'}\tau_{t}\left(\hat{V}_{\bar{\psi}_{t}}^t-V_{\psi_t}^{\pi_t}\right)}_{(C)} + \underbrace{\sum_{t=1}^{T'}\sum_{i=1}^m\lambda_{t,i}\left|\hat{\alpha}_i^t-\alpha_i\right|}_{(D)} \label{eq:term-del-2}
\end{align}
We bound each of the four terms on the right-hand side of Equation~\eqref{eq:term-del-2}.

\textbf{Bounding terms (A), (B), and (C):} These terms represent the cumulative estimation errors for the value functions of the reward, constraints, and the policy entropy proxy $\psi_t$, respectively. We can bound them by leveraging Lemma~\ref{lem:explicit_estimation_error}.

For term (A), using Lemma~\ref{lem:explicit_estimation_error} and inequality $\sqrt{a+b}\le \sqrt{a}+\sqrt{b}$, it yields:
\begin{align*}
  (A) &\le \left(\sqrt{2\log\frac{2SAH(m+1)T}{\delta'}}+2H\sqrt{2S}+2H\sqrt{2\log{\frac{SAHT}{\delta'}}}\right)\cdot \left(2H\sqrt{SAT'}\right)+\tilde{O}(S^{3/2}AH^2)\\
  &\le \left(\sqrt{2\log(2(m+1))}+2H\sqrt{2S}+\left(1+2H\right)\sqrt{2\log{\frac{SAHT}{\delta'}}}\right)\cdot \left(2H\sqrt{SAT'}\right)+\tilde{O}(S^{3/2}AH^2)\\
  &\le \left(4H(H+1)\sqrt{2SA}+4H^2S\sqrt{2A}\right)\sqrt{T'\log\frac{SAHT}{\delta'}}+\tilde{O}(S^{3/2}AH^2)
\end{align*}
    
For term (B), we use the fact that the dual variables are bounded, i.e., $\lambda_{t,i} \le \left(\frac{4H}{\Xi}\right)$ for all $i \in [m]$. This allows us to write:
\begin{align*}
  (B) 
  &\le \frac{8H}{\Xi} \sum_{i=1}^m\left( \sum_{t=1}^{T'}\left(\hat{V}_{\bar{d}_{t,i}}^t-V_{d_i}^{\pi_t}\right) \right) \\
  &\le  \frac{8mH}{\Xi}\left(4H(H+1)\sqrt{2SA}+4H^2S\sqrt{2A}\right)\sqrt{T'\log\frac{SAHT}{\delta'}}+\tilde{O}(S^{3/2}AH^2).  
\end{align*}

For term (C), we apply the bound from Lemma~\ref{lem:explicit_estimation_error}:
\begin{align*}
    (C) &= \sum_{t=1}^{T'}\tau_{t}\left(\hat{V}_{\bar{\psi}_{t}}^t-V_{\psi_t}^{\pi_t}\right)\\
    &\le \max_t\{\tau_{t}\}\cdot\left(4H(H+1)\sqrt{2SA}+4H^2S\sqrt{2A}\right)\sqrt{T'\log\frac{SAHT}{\delta'}}+\tilde{O}(S^{3/2}AH^2)\\
    & \le\left(4H(H+1)\sqrt{2SA}+4H^2S\sqrt{2A}\right)\sqrt{T'\log\frac{SAHT}{\delta'}}+\tilde{O}(S^{3/2}AH^2)
\end{align*}

\textbf{Bounding term (D):} This term represents the cumulative error from the online estimation of the stochastic thresholds. Based on our derivation from Theorem 2, we have established a high-probability bound for this sum:
\begin{align*}
    (D) = \sum_{t=1}^{T'}\sum_{i=1}^m\lambda_{t,i}\left|\hat{\alpha}_i^t-\alpha_i\right| &\le m\left(\frac{4H}{\Xi}\right)\sqrt{2H\log(2mT/\delta)T'}.
\end{align*}

\textbf{Combining all terms:} By substituting the bounds for (A), (B), (C), and (D) back into Equation~\eqref{eq:term-del-2}, we obtain the final upper bound for the cumulative discrepancy:
\begin{equation*}
    \sum_{i=1}^{T'} \delta_i \le C_B\sqrt{T'\log\frac{SAHT}{\delta'}}+\tilde{O}(S^{3/2}AH^2)
\end{equation*}
where $C_B=\left(1+\frac{8mH}{\Xi}\right)\left(4H\sqrt{2SA}\left(H\sqrt{S}+H+1\right)\right)+\frac{4mH}{\Xi}\sqrt{2H}$. This completes the proof.
\end{proof}

\paragraph{$Q$-Value function bounds} We present the bounds for $Q$-value.
\begin{lemma}[\citet{muller2024truly}]\label{lem:term-bound}
For every state $s$, action $a$, step $h$, horizon $H$, and policy $\pi_t$ at $t$-th episode, it holds that
\begin{align*}
    \mathbb{E}\left[\sum_{h'=h}^H-\log\bigl(\pi_{t,h'}(a_{h'}\!\mid\!s_{h'})\bigr)\mid s_h=s, a_h=a\right]\le
     H\log (A)-\log(\pi_{t,h}(a\!\mid\!s)).
\end{align*}
\end{lemma}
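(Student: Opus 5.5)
We argue by backward induction on $h$, from $h=H$ down to $h=1$, with an inductive hypothesis that is slightly stronger than the statement. Define the conditional entropy-to-go
\[
G_h(s,a) := \mathbb{E}\Bigl[\textstyle\sum_{h'=h}^H -\log \pi_{t,h'}(a_{h'}\mid s_{h'}) \,\Big|\, s_h=s,\ a_h=a\Bigr],
\]
where the expectation is over trajectories generated by $\pi_t$ and the true transition $p$. The statement to prove is $G_h(s,a)\le H\log(A)-\log\pi_{t,h}(a\mid s)$ for all $(s,a,h)$. We aim instead for the sharper bound
\[
G_h(s,a)\le (H-h)\log(A)-\log\pi_{t,h}(a\mid s),
\]
which implies the claim because $(H-h)\log A\le H\log A$.

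Base case, $h=H$: the sum contains the single term $h'=H$, and conditioning on $(s_H,a_H)=(s,a)$ makes it deterministic. Hence $G_H(s,a)=-\log\pi_{t,H}(a\mid s)$, which matches the sharper bound with equality.

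Inductive step: suppose the sharper bound holds at step $h+1$. The step-$h$ term is deterministic given $(s,a)$, so the tower property gives
\[
G_h(s,a)=-\log\pi_{t,h}(a\mid s)+\mathbb{E}_{s'\sim p_h(\cdot\mid s,a)}\Bigl[\textstyle\sum_{a'}\pi_{t,h+1}(a'\mid s')\,G_{h+1}(s',a')\Bigr].
\]
Substituting the inductive hypothesis, the inner sum is at most
\[
(H-h-1)\log A+\textstyle\sum_{a'}\pi_{t,h+1}(a'\mid s')\bigl(-\log\pi_{t,h+1}(a'\mid s')\bigr).
\]
The last term is the Shannon entropy of $\pi_{t,h+1}(\cdot\mid s')$, which is at most $\log A$ because it is a distribution on $A$ actions (Jensen's inequality, or the maximum-entropy property of the uniform distribution). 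The inner sum is therefore at most $(H-h)\log A$, uniformly in $s'$, so averaging over $s'$ preserves this bound. This yields $G_h(s,a)\le (H-h)\log A-\log\pi_{t,h}(a\mid s)$ and closes the induction.

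The only delicate point is the case where $\pi_{t,h'}(a'\mid s')=0$ for some action. Then $-\log 0=+\infty$, but that action is taken with probability zero, and the convention $0\log 0=0$ keeps the entropy bound valid. For the iterates of Algorithm~\ref{alg:main} this case never arises anyway: $\pi_1$ is uniform and each multiplicative-weights update preserves full support. Nothing else in the argument needs care; it uses only the Markov property and the bound $\mathcal{H}\le\log A$ on per-step entropy.
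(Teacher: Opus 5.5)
Your proof is correct: separating out the deterministic step-$h$ term and bounding each later term's conditional expectation by the entropy of $\pi_{t,h'}(\cdot\mid s_{h'})$, hence by $\log A$, is the natural argument for this bound. The paper gives no proof of its own and simply imports the lemma from \citet{muller2024truly}; your backward induction is an unrolled form of that tower-property argument, gives the slightly sharper $(H-h)\log A$ in place of $H\log A$, and handles zero-probability actions adequately.
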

\begin{lemma}[$Q$-Value function bounds]\label{lem:q-value-bounds}
For any state $s$, action $a$, step $h$, we get
\begin{equation*}
    0 \leq Q^{\pi_t}_{r + \lambda_t^\top d + \tau_{t} \psi_{t}, h}(s, a)
    \leq -\tau_{t} \log(\pi_{t,h}(a \mid s)) + H(1 +  \frac{4mH}{\Xi} + \tau_{t} \log(A))
\end{equation*}
Moreover, we have
\begin{align*}
    &\quad\sum_a \pi_{t,h}(a\!\mid\!s) \exp\left( \eta_{t} Q^{\pi_t}_{r + \lambda_t^\top d + \tau_{t} \psi_t, h}(s, a) \right)
    Q^{\pi_t}_{r + \lambda_t^\top d + \tau_{t} \psi_t, h}(s, a)^2\\
    &\leq \exp{\left(\eta_{t} H\left(1+\frac{4mH}{\Xi}+\tau_{t}\log(A)\right)\right)}\left(2A^{\eta_{t}\tau_{t}}H^2\left(1+\frac{4mH}{\Xi}+\tau_{t}\log(A)\right)^2 + \frac{128\tau_{t}^2\sqrt{A}}{e^2}\right).
\end{align*}
\end{lemma}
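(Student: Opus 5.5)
The plan is to write the composite $Q$-function as a sum of its three pieces and bound each one with the tools already available. By linearity of the value function in the payoff,
\[
Q^{\pi_t}_{r+\lambda_t^\top d+\tau_t\psi_t,h}(s,a)=Q^{\pi_t}_{r,h}(s,a)+\sum_{i=1}^m\lambda_{t,i}Q^{\pi_t}_{d_i,h}(s,a)+\tau_t Q^{\pi_t}_{\psi_t,h}(s,a),
\]
where $\psi_{t,h}(s,a)=-\log\pi_{t,h}(a\mid s)$.

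\textbf{Nonnegativity.} The bound $Q\ge 0$ holds because each summand is nonnegative: $r,d_i\in[0,1]$, the projection onto $\mathcal{C}$ gives $\lambda_t\in\mathcal{C}\subset\mathbb{R}_+^m$, $\tau_t>0$, and $-\log\pi\ge 0$.

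\textbf{Upper bound on $Q$.} I bound the three pieces separately.
\begin{itemize}
\item $Q^{\pi_t}_{r,h}\le H$.
\item $\lambda_{t,i}\le 4H/\Xi$ and $Q^{\pi_t}_{d_i,h}\le H$, so the constraint part is at most $4mH^2/\Xi$.
\item Lemma~\ref{lem:term-bound} gives $Q^{\pi_t}_{\psi_t,h}(s,a)\le H\log A-\log\pi_{t,h}(a\mid s)$.
\end{itemize}
Summing these yields the first claim, $Q\le B-\tau_t\log\pi_{t,h}(a\mid s)$, with
\[
B:=H\Bigl(1+\tfrac{4mH}{\Xi}+\tau_t\log A\Bigr).
\]

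\textbf{Second inequality.} Fix $(s,h)$ and write $\pi_a:=\pi_{t,h}(a\mid s)$ and $x_a:=\log(1/\pi_a)\ge0$, so that $0\le Q(s,a)\le B+\tau_t x_a$.
\begin{itemize}
\item \emph{Exponential factor.} $\exp(\eta_tQ)\le e^{\eta_tB}e^{\eta_t\tau_t x_a}=e^{\eta_tB}\pi_a^{-\eta_t\tau_t}$.
\item \emph{Square.} By $(u+v)^2\le 2u^2+2v^2$, $Q^2\le 2B^2+2\tau_t^2x_a^2$.
\end{itemize}
Combining the two,
\[
\sum_a\pi_a e^{\eta_tQ}Q^2\le e^{\eta_tB}\Bigl(2B^2\sum_a\pi_a^{1-\eta_t\tau_t}+2\tau_t^2\sum_a\pi_a^{1-\eta_t\tau_t}\log^2(1/\pi_a)\Bigr).
\]
The two sums are handled as follows.
\begin{itemize}
\item \emph{First sum.} The map $u\mapsto u^{1-\eta_t\tau_t}$ is concave. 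Jensen's inequality (equivalently, the power-mean inequality) on the simplex gives $\sum_a\pi_a^{1-\eta_t\tau_t}\le A\,(1/A)^{1-\eta_t\tau_t}=A^{\eta_t\tau_t}$. This produces the term $2A^{\eta_t\tau_t}H^2(\cdots)^2$.
\item \emph{Second sum.} Assuming $\eta_t\tau_t\le 1/4$, split $\pi_a^{1-\eta_t\tau_t}=\pi_a^{1/2}\,\pi_a^{1/2-\eta_t\tau_t}\le\pi_a^{1/2}\pi_a^{1/4}$, using $\pi_a\le1$. Next, $\sup_{u\in(0,1]}u^{1/4}\log^2(1/u)$ is attained at $u=e^{-8}$ and equals $64/e^2$. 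Finally, Cauchy--Schwarz gives $\sum_a\pi_a^{1/2}\le\sqrt{A}$. Together these give $2\tau_t^2\cdot 64\sqrt A/e^2=128\tau_t^2\sqrt A/e^2$, which matches the stated constant.
\end{itemize}

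\textbf{Main obstacle.} The only delicate point is the condition $\eta_t\tau_t\le1/4$ needed to absorb the logarithm. I plan to state it as a standing assumption. It is satisfied by the schedule $\eta_t\tau_t=t^{-1}$ for all $t\ge4$, and the finitely many earlier rounds can be absorbed into constants, or handled by replacing the exponent $1/4$ by $1-\eta_t\tau_t-1/2$ when that quantity is positive. Everything else is routine bookkeeping of constants.
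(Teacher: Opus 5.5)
Your proof is correct and follows the paper's argument essentially step for step: the same three-way split of the composite $Q$-function with Lemma~\ref{lem:term-bound} for the entropy part, the same $(u+v)^2\le 2u^2+2v^2$ split, the uniform-policy (Jensen) bound $\sum_a\pi_{t,h}(a\mid s)^{1-\eta_t\tau_t}\le A^{\eta_t\tau_t}$, and the same $u^{1/4}\log^2(1/u)\le 64/e^2$ plus Cauchy--Schwarz step. The one difference is in your favor: you assume $\eta_t\tau_t\le 1/4$, whereas the paper assumes only $\eta_t\tau_t\le 1/2$, and yours is the condition that actually yields the $\sqrt{A}$ factor, since starting from $\pi_a^{1/2}\log^2(1/\pi_a)$ and peeling off $\pi_a^{1/4}\log^2(1/\pi_a)\le 64/e^2$ leaves $\sum_a\pi_a^{1/4}\le A^{3/4}$ rather than $\sqrt{A}$.
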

\begin{proof}
    We first prove the bound for $Q_{y_t}^{\pi_t}$, where $y_t=r+\lambda_t^\top d+\tau_{t} \psi_{t}$. For all $s$, $a$, $h$, we have
    \begin{align*}
        0\le Q^{\pi_t}_{r + \lambda_t^\top d + \tau_{t} \psi_{t}, h}(s, a) &\le \left|Q_{r,h}^{\pi_t}(s,a)\right|+\sum_{i}\lambda_{i,t}\left|Q_{d_{i,t},h}^{\pi_t}\right|+\tau_{t}\left|Q_{\psi_{t},h}^{\pi_t}\right|\\
        &\le H+m\left(\frac{4H}{\Xi}\right)H+\tau_{t}\mathbb{E}\left[\sum_{h'=h}^H-\log\bigl(\pi_{t,h'}(a_{h'}\!\mid\!s_{h'})\bigr)\mid s_h=s, a_h=a\right]\\
        &\le \underbrace{H\left(1+m\left(\frac{4H}{\Xi}\right)+\tau_{t}\log(A)\right)}_{C_0}-\tau_{t}\log(\pi_{t,h}(a\!\mid\!s)).
    \end{align*} 
The last inequality holds by Lemma \ref{lem:term-bound}. According to the definitions, we have $y_{t,h}(s,a)=r_{t,h}(s,a)+\lambda_t^{\top}d_{t,h}(s,a)+\tau_{t}\psi_{t,h}(s,a)$, where $\psi_{t,h}(s,a)=-\log(\pi_{t,h}(a\!\mid\!s))$. Moreover, according to Euclidean triangle inequality $(a+b)^2\le 2a^2+2b^2$, we can obtain
\begin{equation*}
    Q_{y}^{\pi_t}(s,a)^2\le\underbrace{2H^2(1+m\left(\frac{4H}{\Xi}\right)+\tau_{t}\log(A))^2}_{C_1}+2\tau_{t}^2\log^2(\frac{1}{\pi_{t,h}(a\!\mid\!s)}).
\end{equation*}
We then get the following inequality:
\begin{align*}
    \sum_a \pi_{t,h}(a\!\mid\!s) \exp{\left( \eta_{t} Q^{\pi_t}_{y_t}(s,a)\right)}
    Q^{\pi_t}_{y_t}(s, a)^2\leq&\underbrace{\sum_a \pi_{t,h}(a\!\mid\!s) \exp{\left( \eta_{t} Q^{\pi_t}_{y_t}(s,a)\right)}C_1}_{(1)}\\
    &+\underbrace{\sum_a \pi_{t,h}(a\!\mid\!s) \exp{\left(\eta_{t} Q^{\pi_t}_{y_t}(s,a)\right)}2\tau_{t}^2\log^2(\frac{1}{\pi_{t,h}(a\!\mid\!s)})}_{(2)}.
\end{align*}
For term (1) on the right-side, we first show
\begin{align*}
    \pi_{t,h}(a\!\mid\!s) \exp{\left( \eta_{t} Q^{\pi_t}_{y_t}(s,a)\right)}&\leq\pi_{t,h}(a\!\mid\!s)\exp{\left( \eta_{t} C_0-\eta_{t}\tau_{t}\log(\pi_{t,h}(a\!\mid\!s))\right)}\\
    &=\pi_{t,h}(a\!\mid\!s)^{1-\eta_{t}\tau_{t}} \exp{(\eta_{t} C_0)}.
\end{align*}
Thus, we obatin
\begin{align}\label{eq:term(1)}
    (1)&\leq\sum_a C_1\pi_{t,h}(a\!\mid\!s)^{1-\eta_{t}\tau_{t}} \exp{\left( \eta_{t} C_0\right)}\nonumber\\
    &\le  A^{\eta_{t}\tau_{t}}\exp(\eta_{t} C_0)C_1\nonumber\\
    &=A^{\eta_{t}\tau_{t}}\exp\left(\eta_{t} H\left(1+\frac{4mH}{\Xi}+\tau_{t}\log(A)\right)\right)\left(2H^2\left(1+\frac{4mH}{\Xi}+\tau_{t}\log(A)\right)^2\right).
\end{align}
The second inequality holds because 
$\sum_a \pi_{t,h}(a\!\mid\!s)^{1-\eta_{t}\tau_{t}}\le \max_{\pi}\sum_{a}\pi_{t,h}(a\!\mid\!s)^{1-\eta_{t}\tau_{t}}\le A^{\eta_{t}\tau_{t}}$. This is because the extreme case is the uniform distribution. Furthermore, for term (2), we follow the analysis in \citet{muller2024truly}. Assuming $\eta_t\tau_t \le 1/2$, we have $\pi_{t,h}(a|s)^{1-\eta_t\tau_t} \le \pi_{t,h}(a|s)^{1/2}$. We then use the property that $q^{1/4}\log^2(1/q)$ is universally bounded by $64/e^2$ for any $q$ and apply the Cauchy-Schwarz inequality to the remaining sum. This yields:
\begin{align}\label{eq:term(2)}
(2)&\le \sum_{a}\pi_{t,h}(a\!\mid\!s)^{1-\eta_{t}\tau_{t}}\exp(\eta_{t} C_0)2\tau_{t}^2\log^2\left(\frac{1}{\pi_{t,h}(a\!\mid\!s)}\right)\nonumber\\
&=\exp(\eta_{t} C_0)2\tau_{t}^2\sum_{a}\pi_{t,h}(a\!\mid\!s)^{1-\eta_{t}\tau_{t}}\log^2\left(\frac{1}{\pi_{t,h}(a\!\mid\!s)}\right)\nonumber\\
&\le\exp(\eta_{t} C_0)2\tau_{t}^2\left(\frac{64\sqrt{A}}{e^2}\right)\nonumber\\
&=\frac{128\tau_{t}^{2}\sqrt{A}}{e^2} \exp{(\eta_{t} C_0)}.
\end{align}
Combining \eqref{eq:term(1)} and \eqref{eq:term(2)}, we have
\begin{align*}
    &\sum_a \pi_{t,h}(a\!\mid\!s) \exp{\left( \eta_{t} Q^{\pi_t}_{y_t}(s,a)\right)}
    Q^{\pi_t}_{y_t}(s, a)^2\\
    &\leq \exp{\left(\eta_{t} H\left(1+\frac{4mH}{\Xi}+\tau_{t}\log(A)\right)\right)}\left(2A^{\eta_{t}\tau_{t}}H^2\left(1+\frac{4mH}{\Xi}+\tau_{t}\log(A)\right)^2 + \frac{128\tau_{t}^2\sqrt{A}}{e^2}\right).
\end{align*}
\end{proof}
The bounds established for $Q$ also apply to the truncated $\hat{Q}$. This is because $\hat{Q}_{y_t,h}\le H-h+1$ by definition, and is less than or equal to the initial bound $C_0-\tau_{t}\log(\pi_{t,h}(a|s))$ used in the proof above. We express the result as follows.
\begin{lemma}[$Q$-Value function bounds]
For any state $s$, action $a$, step $h$, we get
\begin{equation*}
    0 \leq \hat{Q}^{t}_{\bar{y}_{t}, h}(s, a)
    \leq -\tau_{t} \log(\pi_{t,h}(a \mid s)) + H(1 +  \frac{4mH}{\Xi} + \tau_{t} \log(A)).
\end{equation*}
Moreover, we have
\begin{align*}
    &\quad\sum_a \pi_{t,h}(a\!\mid\!s) \exp\left( \eta_{t} \hat{Q}^{t}_{\bar{y}_{t}, h}(s, a) \right)
    \hat{Q}^{t}_{\bar{y}_{t}, h}(s, a)^2\\
    &\leq \exp{\left(\eta_{t} H\left(1+\frac{4mH}{\Xi}+\tau_{t}\log(A)\right)\right)}\left(2A^{\eta_{t}\tau_{t}}H^2\left(1+\frac{4mH}{\Xi}+\tau_{t}\log(A)\right)^2 + \frac{128\tau_{t}^2\sqrt{A}}{e^2}\right).
\end{align*}
\end{lemma}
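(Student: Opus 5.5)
The plan is to transfer both bounds of the preceding lemma from the true $Q^{\pi_t}_{y_t}$ to the truncated estimate $\hat{Q}^t_{\bar{y}_t,h}$. The key observation is that the proof of Lemma~\ref{lem:q-value-bounds} used only two properties of $Q$:
\begin{itemize}
\item the pointwise sandwich $0\le Q\le C_0-\tau_t\log(\pi_{t,h}(a\mid s))$, with $C_0=H\bigl(1+\frac{4mH}{\Xi}+\tau_t\log(A)\bigr)$;
\item nothing else about $Q$ thereafter, since the second inequality follows from the sandwich alone.
\end{itemize}
So I will establish the sandwich for $\hat{Q}^t_{\bar{y}_t,h}$ and then repeat the remaining computation verbatim.

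\emph{Nonnegativity.} All estimated payoffs are nonnegative:
\begin{itemize}
\item $\bar r_h^t,\bar d_{i,h}^t\ge0$, since they are empirical means of $[0,1]$ samples plus nonnegative bonuses;
\item $\bar\psi_h^t=-\log\pi_{t,h}(a\mid s)+\phi^{p,t-1}_h\log A\ge0$;
\item $\bar p$ is a probability kernel.
\end{itemize}
Backward induction in Algorithm~\ref{alg:TPE} then shows that each of $\hat{Q}_{\bar r_t,h}^t$, $\hat{Q}_{\bar d_{i,t},h}^t$ and $\hat{Q}_{\bar\psi_t,h}^t$ is nonnegative, because the $\min$ with a nonnegative cap preserves nonnegativity. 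Since $\lambda_t\in\mathcal{C}$ has nonnegative entries and $\tau_t>0$, the combination $\hat{Q}^t_{\bar y_t,h}$ is nonnegative.

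\emph{Upper bound.} Here I use the truncation caps directly.
\begin{itemize}
\item By construction, $\hat{Q}_{\bar r_t,h}^t\le H-h+1\le H$ and $\hat{Q}_{\bar d_{i,t},h}^t\le H$.
\item By the truncation in the entropy line of TPE, $\hat{Q}_{\bar\psi_t,h}^t\le \psi_h^t(s,a)+(H-h+1)\log A\le -\log\pi_{t,h}(a\mid s)+H\log A$.
\end{itemize}
Combining these with $\lambda_{t,i}\le 4H/\Xi$ gives
\[
\hat{Q}^t_{\bar y_t,h}\le H+\frac{4mH^2}{\Xi}+\tau_tH\log A-\tau_t\log\pi_{t,h}(a\mid s),
\]
which is exactly the claimed bound. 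This step is the main point to get right. The difficulty is that the composite $\hat Q^t_{\bar y_t,h}$ is \emph{not} itself truncated at $H-h+1$, so the bound has to be obtained component-wise through the separate caps, with the entropy cap doing the essential work.

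\emph{Second inequality.} With the sandwich in hand, I rerun the earlier argument unchanged. First, $(a+b)^2\le2a^2+2b^2$ gives $\hat Q^2\le C_1+2\tau_t^2\log^2(1/\pi)$. Second, $\pi e^{\eta_t\hat Q}\le \pi^{1-\eta_t\tau_t}e^{\eta_tC_0}$. Then:
\begin{itemize}
\item for the first resulting term, $\sum_a\pi^{1-\eta_t\tau_t}\le A^{\eta_t\tau_t}$;
\item for the second, assuming $\eta_t\tau_t\le1/2$ (true for the schedule $\eta_t\tau_t=t^{-1}$ once $t\ge2$, and needed already in the earlier lemma), use $\pi^{1-\eta_t\tau_t}\le\pi^{1/2}$, the bound $q^{1/4}\log^2(1/q)\le64/e^2$, and Cauchy–Schwarz, $\sum_a\pi^{1/4}\le\sqrt{A}$.
\end{itemize}
Summing the two contributions yields the stated bound.
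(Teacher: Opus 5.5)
Your route is the paper's: establish the sandwich $0\le \hat{Q}^t_{\bar y_t,h}(s,a)\le C_0-\tau_t\log\pi_{t,h}(a\mid s)$, then rerun the computation from Lemma~\ref{lem:q-value-bounds}. You derive the sandwich component-wise from the three separate truncation caps in Algorithm~\ref{alg:TPE}, and that derivation is correct. It is in fact more accurate than the paper's one-line justification, which says $\hat Q_{\bar y_t,h}\le H-h+1$ holds by definition. Only the reward and constraint components carry that cap; the composite does not. A minor point: $\bar p_h^t(\cdot\mid s,a)$ is the zero vector when $N_h^{t-1}(s,a)=0$, so it is not always a probability kernel, but you only use nonnegativity.

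The step that fails is the entropy part of the second inequality. After $\pi^{1-\eta_t\tau_t}\le\pi^{1/2}$ and $q^{1/4}\log^2(1/q)\le 64/e^2$, the leftover sum is $\sum_a\pi_{t,h}(a\mid s)^{1/4}$, and this is not bounded by $\sqrt{A}$. By Jensen it is at most $A^{3/4}$, with equality at the uniform policy. Cauchy--Schwarz gives $\sum_a\pi^{1/2}\le\sqrt{A}$, not the quarter-power version.

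This is more than a loose constant: bounding the two pieces of $(a+b)^2\le 2a^2+2b^2$ separately cannot work under $\eta_t\tau_t\le 1/2$. At $\eta_t\tau_t=1/2$ you must bound $\sum_a\pi^{1/2}\log^2(1/\pi)$ itself. For uniform $\pi$ this equals $\sqrt{A}\log^2 A$, which exceeds $64\sqrt{A}/e^2$ once $A\ge 19$.

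The fix is to require $\eta_t\tau_t\le 1/4$, which holds for $t\ge 4$ under $\eta_t\tau_t=1/t$. Then $\pi^{1-\eta_t\tau_t}\le\pi^{3/4}=\pi^{1/2}\cdot\pi^{1/4}$, so $\sum_a\pi^{1-\eta_t\tau_t}\log^2(1/\pi)\le\frac{64}{e^2}\sum_a\pi^{1/2}\le\frac{64\sqrt{A}}{e^2}$ by Cauchy--Schwarz. This yields exactly the term $\frac{128\tau_t^2\sqrt{A}}{e^2}$. The paper's proof of Lemma~\ref{lem:q-value-bounds} states this step with the same $1/2$ threshold, so you share the slip. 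Still, as written your argument does not establish the stated constant, and the lemma needs the $\eta_t\tau_t\le 1/4$ hypothesis (or a separate constant for the first few episodes).
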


\paragraph{Error summation bounds} To establish our main regret bounds, we need carefully analyze the cumulative effect of two primary sources of error that arise from our learning algorithm: the optimization error stemming from the primal-dual updates, and the statistical error resulting from estimating the unknown CMDP model. The following two lemmas provide crucial bounds on summations that capture the behavior of these error terms over time.

\begin{lemma}[Bound on the optimization error]\label{lem: bound b}
Let $\eta_t = t^{-5/6}$ and $\tau_t = t^{-1/6}$ for $t \ge 1$, and $C_0 = \sqrt{HC+D}$. Then for any $T \ge C''$, we have:
\begin{flalign*}
&\text{(i) Per-episode error decay:} \quad \left(\sum_{j=1}^t\eta_j^2\exp\left(-\sum_{k=j+1}^t \eta_k\tau_k\right)\right)^{1/2} = \Theta(t^{-1/3}),& \\
&\text{(ii) Cumulative error bound:} \quad \sum_{t=C''}^T H^{3/2}C_0 \left(\sum_{j=1}^t\eta_j^2\exp\left(-\sum_{k=j+1}^t \eta_k\tau_k\right)\right)^{1/2} \le K \cdot T^{2/3},
\end{flalign*}
where $K = \frac{3\sqrt{7}}{2^{2/3}} H^{3/2} \sqrt{HC+D}$.
\end{lemma}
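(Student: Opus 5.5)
With $\eta_t=t^{-5/6}$ and $\tau_t=t^{-1/6}$ we have $\eta_k\tau_k=k^{-1}$. The plan is to turn the exponential weights into explicit power ratios and then estimate the resulting sum directly. For $j<t$,
\begin{equation*}
\sum_{k=j+1}^t \eta_k\tau_k=\sum_{k=j+1}^t \frac1k ,
\end{equation*}
and the integral comparison gives
\begin{equation*}
\log\frac{t+1}{j+1}\le \sum_{k=j+1}^t\frac1k\le \log\frac{t}{j}.
\end{equation*}
Consequently the weight satisfies $\frac{j}{t}\le \exp\bigl(-\sum_{k=j+1}^t\eta_k\tau_k\bigr)\le \frac{j+1}{t+1}\le \frac{2j}{t}$. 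The same bound holds trivially for $j=t$, where the sum is empty and the weight equals $1$.

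Substituting $\eta_j^2=j^{-5/3}$, the inner sum $S_t:=\sum_{j=1}^t \eta_j^2\exp(-\sum_{k=j+1}^t\eta_k\tau_k)$ is sandwiched as
\begin{equation*}
\frac1t\sum_{j=1}^t j^{-2/3}\;\le\; S_t\;\le\;\frac{2}{t}\sum_{j=1}^t j^{-2/3}.
\end{equation*}
Comparison with $\int x^{-2/3}dx$ gives $t^{1/3}\le \sum_{j\le t} j^{-2/3}\le 3t^{1/3}$. Hence $t^{-2/3}\le S_t\le 6t^{-2/3}$, so $S_t^{1/2}=\Theta(t^{-1/3})$, which is part (i). The lower bound only needs $j\ge 1$ terms and holds for all $t$, so the restriction $t\ge C''$ is harmless.

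For part (ii), I would use the upper bound $S_t^{1/2}\le c\,t^{-1/3}$ and sum it:
\begin{equation*}
\sum_{t=C''}^T t^{-1/3}\le \int_0^T x^{-1/3}dx=\tfrac32 T^{2/3}.
\end{equation*}
Multiplying by $H^{3/2}C_0$ then gives $K T^{2/3}$ with $K$ proportional to $H^{3/2}\sqrt{HC+D}$. One caveat is that $C$ and $D$ depend on $t$ through $\eta_t,\tau_t$. Since $\eta_t,\tau_t\le 1$, I would replace them by their worst-case values (taking $\eta_t=\tau_t=1$), which makes the factor uniform in $t$ and lets it come out of the sum.

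The main obstacle is reproducing the exact constant $\frac{3\sqrt7}{2^{2/3}}$. Its factors suggest the following origins: $3/2$ from the integral of $t^{-1/3}$, and $\sqrt7/2^{2/3}$ from a specific bound on $S_t$, probably $(j+1)/(t+1)$ combined with a $(t+1)$ versus $t$ adjustment. I would first keep the bound in the form $\frac{j+1}{t+1}$ and use $\sum_{j\le t}(j+1)j^{-5/3}\le \sum_{j\le t} j^{-2/3}+\sum_{j\le t} j^{-5/3}$, with the latter at most $5/2$. That yields $S_t\le (3t^{1/3}+5/2)/(t+1)\le C t^{-2/3}$, after which I would tune the constants. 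If the constant does not match exactly, any explicit $T$-independent $K$ still suffices for the later use of this lemma.
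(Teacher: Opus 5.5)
Your argument is correct, and it is a cleaner route than the paper's. The paper splits the $j$-sum at $\lfloor t/2\rfloor$.
\begin{itemize}
\item For $j\le t/2$ it uses the same weight bound you use, $\frac{j+1}{t+1}\le\frac{2j}{t}$, which gives $\frac{2}{t}\cdot 3(t/2)^{1/3}=3\cdot 2^{2/3}t^{-2/3}$.
\item For $j>t/2$ it freezes $j^{-5/3}\le (t/2)^{-5/3}$ and lower-bounds the exponent by $(t-j)/t$. It then sums a geometric series using $1-e^{-1/t}\ge\frac{1}{2t}$, obtaining $2^{8/3}t^{-2/3}=4\cdot 2^{2/3}t^{-2/3}$.
\end{itemize}
Adding the two parts gives $K_X=7\cdot 2^{2/3}$. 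Then $K=\frac32\sqrt{K_X}\,H^{3/2}\sqrt{HC+D}=\frac{3\sqrt7}{2^{2/3}}H^{3/2}\sqrt{HC+D}$, which is where the constant comes from; it does not come from a $(t+1)$ versus $t$ adjustment.

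You observe instead that, because $\eta_k\tau_k=1/k$ exactly, the two-sided bound $\frac{j}{t}\le\exp\bigl(-\sum_{k=j+1}^t\eta_k\tau_k\bigr)\le\frac{2j}{t}$ holds on the whole range. So the split and the geometric series are unnecessary. This gains you two things:
\begin{itemize}
\item You get the matching lower bound $S_t\ge t^{-2/3}$. The paper asserts $\Theta$ but only ever proves the upper bound.
\item You get the sharper estimate $S_t\le 6t^{-2/3}$.
\end{itemize}

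Your ``main obstacle'' is therefore not an obstacle. Since $6<7\cdot 2^{2/3}\approx 11.1$, your bound $\frac32\sqrt6\,H^{3/2}C_0T^{2/3}$ is already at most $KT^{2/3}$ with the paper's $K$. The stated inequality follows immediately, with no constant-tuning via $\frac{j+1}{t+1}$.

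The paper's split-at-$t/2$ template is the more robust one when the weights do not telescope into exact power ratios, as with general $\eta_k\tau_k$. For this schedule, yours is simpler and strictly sharper. Your handling of the $t$-dependence of $C$ and $D$, by taking the worst case $\eta_t=\tau_t=1$, matches what the paper does when it bounds $HC+D$ in the proof of the main theorem.
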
\begin{proof}Let $S$ denote the sum. We factor out the constants and define the inner term $X_t$
\begin{equation*}
S = H^{3/2} C_0 \sum_{t=C''}^T X_t^{1/2}, \quad \text{where} \quad X_t = \sum_{j=1}^t j^{-5/3}\exp\left(-\sum_{k=j+1}^t k^{-1}\right).
\end{equation*}
To bound $X_t$, we split the sum over $j$ into two parts: $j \in [1, \lfloor t/2 \rfloor]$ and $j \in [\lfloor t/2 \rfloor+1, t]$.

For $j \in [1, \lfloor t/2 \rfloor]$, the sum in the exponent is large. We can bound it from below by integrating over the range $[j+1, t+1]$: $\sum_{k=j+1}^t k^{-1} \ge \int_{j+1}^{t+1} x^{-1} dx = \ln\left(\frac{t+1}{j+1}\right)$. This implies the exponential term is bounded by $\frac{j+1}{t+1}$. The sum is thus bounded by:
\begin{equation*}
\sum_{j=1}^{\lfloor t/2 \rfloor} j^{-5/3} \left(\frac{j+1}{t+1}\right) \le \frac{2}{t} \sum_{j=1}^{\lfloor t/2 \rfloor} j^{-2/3} \le \frac{2}{t} \cdot 3(t/2)^{1/3} = c_1 t^{-2/3}.
\end{equation*}
This term decays polynomially with $t$.

For $j \in [\lfloor t/2 \rfloor+1, t]$, we have $j^{-5/3} \le (t/2)^{-5/3} = 2^{5/3}t^{-5/3}$. We find a lower bound for the exponent's sum: $\sum_{k=j+1}^t k^{-1} \ge (t-j)t^{-1}$. This gives the bound on the sum for this part:
\begin{equation*}
\sum_{j=\lfloor t/2 \rfloor+1}^t 2^{5/3}t^{-5/3} \exp\left(-(t-j)t^{-1}\right).
\end{equation*}
Letting $l=t-j$, this becomes $2^{5/3}t^{-5/3} \sum_{l=0}^{\lceil t/2 \rceil-1} (e^{-t^{-1}})^l$. For large $t$, $t^{-1}$ is small. Using the inequality $1-e^{-x} \ge x/2$ for sufficiently small $x > 0$, we bound the sum of the series by $\frac{1}{1-e^{-t^{-1}}} \le \frac{1}{(1/2)t^{-1}} = 2t$. The bound for this second part is therefore:
\begin{equation*}
\sum_{j=\lfloor t/2 \rfloor+1}^t  j^{-5/3}\exp\left(-\sum_{k=j+1}^t k^{-1}\right)\le 2^{5/3}t^{-5/3} \cdot 2t = 2^{8/3} \cdot t^{-2/3}.
\end{equation*}
Combining the two parts, $X_t \le c_1 t^{-2/3} + 2^{8/3} t^{-2/3}$. Since both terms decay at the same rate, for $t \ge C''$ we can define a constant $K_X=7 \cdot 2^{2/3}$ such that $X_t \le K_X t^{-2/3}$. Consequently, $X_t^{1/2} \le \sqrt{K_X} t^{-1/3}$.Substituting this back into the expression for $S$:
\begin{equation*}
S \le H^{3/2} C_0 \sqrt{K_X} \sum_{t=C''}^T t^{-1/3}.
\end{equation*}
The sum is for a p-series with $p = 1/3$, which we bound with an integral:
\begin{equation*}
\sum_{t=C''}^T t^{-1/3} \le \int_{0}^T x^{-1/3} dx = \frac{3}{2}T^{2/3}.
\end{equation*}

Combining all terms yields the final bound:
\begin{equation*}
S \le H^{3/2} C_0 \sqrt{K_X} \left(\frac{3}{2}T^{2/3}\right) = \frac{3}{2} H^{3/2} \sqrt{HC+D} \sqrt{K_X} \cdot T^{2/3}.
\end{equation*}
\end{proof}

\begin{lemma}[Bound on the statistical error]\label{lem: bound_c}Let $\eta_t = t^{-5/6}$ and $\tau_t = t^{-1/6}$ for $t \ge 1$. Let $\{\delta_t\}_{t=1}^T$ be a sequence whose partial sums are bounded by $\sum_{i=1}^{T'} \delta_i \le C_B\sqrt{T'\log\frac{SAHT}{\delta'}}+\tilde{O}(S^{3/2}AH^2):=B(T')$, where $C_B$ is defined as before. Then for any $t \ge C''$, the following inequality holds:
\begin{flalign*}
&\text{(i) Per-episode error decay:} \quad \left(\sum_{j=1}^t\eta_j\delta_j\exp\left(-\sum_{k=j+1}^t \eta_k\tau_k\right)\right)^{1/2} \le \tilde{O}(t^{-1/6}), & \\
&\text{(ii) Cumulative error bound:} \quad \sum_{t=C''}^T \sqrt{2H^3}\left(\sum_{j=1}^t\eta_j\delta_j\exp\left(-\sum_{k=j+1}^t \eta_k\tau_k\right)\right)^{1/2} \le \tilde{O}(T^{5/6}). &
\end{flalign*}
\end{lemma}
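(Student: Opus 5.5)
The weighted sum has the structure of a discounted sum with weights $w_{t,j}:=\eta_j\exp(-\sum_{k=j+1}^t\eta_k\tau_k)$. With $\eta_k\tau_k=k^{-1}$, the exponent satisfies
\[
\sum_{k=j+1}^t k^{-1}\ge\ln\frac{t+1}{j+1},
\]
so $w_{t,j}\le j^{-5/6}\,\frac{j+1}{t+1}\le 2\,j^{1/6}/t$. The weights are therefore nonnegative, nondecreasing in $j$ (up to constants), and dominated by $2j^{1/6}/t$. Since only partial sums of $\delta_j$ are controlled, and individual $\delta_j$ may be of either sign, I will use Abel summation rather than a termwise bound.

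\textbf{Abel summation.} Write $S_j:=\sum_{i\le j}\delta_i$, so $S_j\le B(j)$. Then
\[
\sum_{j=1}^t w_{t,j}\delta_j = w_{t,t}S_t-\sum_{j=1}^{t-1}(w_{t,j+1}-w_{t,j})S_j .
\]
The one point needing care is monotonicity of $w_{t,j}$ in $j$. We have
\[
\frac{w_{t,j+1}}{w_{t,j}}=\Bigl(\tfrac{j}{j+1}\Bigr)^{5/6}e^{1/(j+1)}\ge 1
\]
for all $j\ge1$, because $e^{x}\ge(1+x)^{5/6}\cdot$ the corresponding correction. Concretely, $(5/6)\ln(1+1/j)\le 5/(6j)$, while $1/(j+1)\ge 5/(6j)$ exactly when $j\ge5$. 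So monotonicity holds for $j\ge5$. The first few indices contribute at most $O(1/t)\cdot\max_{j\le5}|\delta_j|=O(H\cdot\text{poly}/t)$, using the a priori bound $|\delta_j|=O(mH^2/\Xi)$ that comes from truncation.

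\textbf{Bounding the main part.} On $j\ge5$ the differences $w_{t,j+1}-w_{t,j}$ are nonnegative, so the subtracted sum involves $S_j$ with nonnegative coefficients. The difficulty is that I only have an upper bound on $S_j$, not a lower bound. I would obtain the lower bound from optimism: on the success event $\mathcal E$, each component $\hat V-V$ in $\delta_j$ is nonnegative by the optimistic construction, as in \citet{efroni2020exploration}, and the $|\hat\alpha-\alpha|$ terms are nonnegative. Hence $\delta_j\ge0$ and $S_j\ge0$, so the subtracted term is nonpositive and can be dropped. This gives
\[
\sum_j w_{t,j}\delta_j\le w_{t,t}S_t+O(1/t)\le t^{-5/6}B(t)+O(1/t)=\tilde O(C_B\,t^{-1/3}).
\]

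\textbf{Fallback and main obstacle.} Nonnegativity of $\delta_j$ is the main risk, since truncation could break optimism. If it fails, I would instead split the sum at $j=t/2$. For $j\le t/2$, use $w_{t,j}\le 2j^{1/6}/t$ together with the crude bound $|\delta_j|\lesssim$ const. For $j>t/2$, use $w_{t,j}\le 2^{5/6}t^{-5/6}$ and apply Abel summation only on that block, where the weights are nearly constant. This cruder route still yields $\tilde O(t^{-1/3})$ inside the square root, up to the constant-$\delta$ piece, and that piece is what forces the stated $\tilde O(t^{-1/6})$ rate.

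\textbf{Conclusion.} Taking square roots gives (i) as $\tilde O(t^{-1/6})$; the optimistic route in fact gives $t^{-1/6}$ with logarithmic factors. For (ii), multiply by $\sqrt{2H^3}$ and sum over $t$ from $C''$ to $T$. Since $\sum_{t\le T} t^{-1/6}\le\frac65T^{5/6}$, this gives $\tilde O(T^{5/6})$.
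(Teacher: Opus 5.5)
Your main argument is correct and follows the same skeleton as the paper's proof. You use Abel summation against the partial sums $\Delta_j=\sum_{i\le j}\delta_i$, monotonicity in $j$ of the weights $f_{t,j}=\eta_j\exp(-\sum_{k=j+1}^t\eta_k\tau_k)$, and a final bound of order $\eta_t B(t)=\tilde O(t^{-1/3})$ under the square root, which then sums to $\tilde O(T^{5/6})$. Where you deviate from the paper, you are more careful than it is.

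\emph{Monotonicity.} The paper claims $f_{t,j+1}/f_{t,j}=(j/(j+1))^{5/6}e^{1/(j+1)}>1$ for all $j\ge 1$. This fails at $j=1,2$; for example, $2^{-5/6}e^{1/2}\approx 0.93$. Your restriction to $j\ge 5$, with the first few indices costing only $O(1/t)$, repairs this.

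\emph{Two-sided bound on partial sums.} This is the more important point. After the triangle inequality, the paper uses $|\Delta_j|\le B(j)$, whereas the hypothesis only gives $\Delta_j\le B(j)$. With a one-sided bound the lemma is false for general sequences. Take $\delta_1=-M$, $\delta_t=M$, and all others zero: every partial sum is $\le 0$, yet the weighted sum is $M(\eta_t-f_{t,1})$, which is positive for large $t$ and unbounded in $M$. Your observation that $\delta_j\ge 0$ on $\mathcal{E}$ supplies exactly the missing ingredient. Optimism of the truncated estimates survives truncation, because the truncation levels sit at or above the largest possible true values. The threshold term is an absolute value. The price is that you use the specific structure of $\delta_j$ rather than only the stated partial-sum hypothesis, but that is unavoidable.

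\emph{A simplification.} Once you have $\delta_j\ge 0$, you need neither Abel summation nor monotonicity. Your own bound $w_{t,j}\le 2j^{1/6}/t\le 2t^{-5/6}$ gives $\sum_j w_{t,j}\delta_j\le 2t^{-5/6}\Delta_t\le 2t^{-5/6}B(t)$ directly.

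\emph{The fallback.} Discard it. On $j\le t/2$ the crude constant bound gives $\sum_{j\le t/2}\frac{2j^{1/6}}{t}\cdot O(1)=\Theta(t^{1/6})$, which does not decay. So without a lower bound on the partial sums, that route does not prove (i). Also, the $t^{-1/6}$ rate comes from $\sqrt{t^{-1/3}}$, not from a constant-$\delta$ piece.
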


\begin{proof}Let $S_1$ denote the sum. We can write it as $S_1 = \sqrt{2H^3} \sum_{t=C''}^T \sqrt{|Y_t|}$, where the inner term $Y_t$ is defined as:
\begin{equation*}
Y_t = \sum_{j=1}^t \eta_j \delta_j \exp\left(-\sum_{k=j+1}^t \eta_k\tau_k\right).
\end{equation*}
The presence of the sequence $\delta_j$, for which we only have a bound on its cumulative sum, necessitates the use of summation by parts. Let $f_{t,j} = \eta_j \exp\left(-\sum_{k=j+1}^t \eta_k\tau_k\right)$ and $\Delta_j = \sum_{i=1}^j \delta_i$. A critical property for this method is the monotonicity of $f_{t,j}$ with respect to $j$. We examine the ratio:$$\frac{f_{t,j+1}}{f_{t,j}} = \frac{\eta_{j+1}}{\eta_j} \exp(\eta_{j+1}\tau_{j+1}).$$Based on the choice of $\eta_j$ and $\tau_j$, this ratio is greater than 1 for all $j \ge 1$. Thus, $f_{t,j}$ is strictly increasing in $j$. Applying the summation by parts formula to $Y_t = \sum_{j=1}^t f_{t,j}(\Delta_j - \Delta_{j-1})$ with $\Delta_0=0$:$$Y_t = f_{t,t}\Delta_t - \sum_{j=1}^{t-1} \Delta_j (f_{t,j+1}-f_{t,j}).$$Using the triangle inequality and the established monotonicity $f_{t,j+1}-f_{t,j} \ge 0$:$$|Y_t| \le |f_{t,t}||\Delta_t| + \sum_{j=1}^{t-1} |\Delta_j| (f_{t,j+1}-f_{t,j}).$$We use the given bound $|\Delta_j| \le B(j)$. Since $B(j)$ is an increasing function of $j$, we can bound $|\Delta_j| \le B(t-1)$ for all terms inside the summation. The sum is a telescoping series equal to $f_{t,t} - f_{t,1}$. Since $B(t-1) < B(t)$ and $f_{t,1} > 0$:$$|Y_t| \le f_{t,t}B(t) + B(t-1)(f_{t,t} - f_{t,1}) \le f_{t,t}B(t) + B(t)f_{t,t} = 2f_{t,t}B(t).$$For $t \ge C''$, we have $f_{t,t}=\eta_t = t^{-5/6}$. Substituting the form for the bound $B(t)$:
\begin{equation*}
|Y_t| \le 2t^{-5/6} B(t) = 2C_Bt^{-1/3}\sqrt{\log\frac{SAHT}{\delta'}}+\tilde{O}(S^{3/2}AH^2)\cdot t^{-5/6}.
\end{equation*}
Taking the square root and factoring out the dominant term and using inequality $\sqrt{a+b}\le \sqrt{a}+\sqrt{b}$, we obtain a bound for $\sqrt{|Y_t|}$:
\begin{align*}
\sqrt{|Y_t|} &\le \sqrt{2C_{B}t^{-1/3}\sqrt{\log\frac{SAHT}{\delta'}} + \tilde{O}(S^{3/2}AH^2)\cdot t^{-5/6}} \\&\le \sqrt{2C_{B}t^{-1/3}\sqrt{\log\frac{SAHT}{\delta'}}} + \sqrt{\tilde{O}(S^{3/2}AH^2)t^{-5/6}} \\&= \sqrt{2C_{B}} \cdot t^{-1/6} \left(\log\frac{SAHT}{\delta'}\right)^{1/4} + \tilde{O}(S^{3/4}A^{1/2}H) \cdot t^{-5/12}.
\end{align*}
The main sum is thus bounded by:
\begin{align*}
S_{1} &\le \sqrt{8H^3C_{B}}\left(\log\frac{SAHT}{\delta'}\right)^{1/4}\cdot\sum_{t=C''}^Tt^{-1/6}  + \tilde{O}(S^{3/4}A^{1/2}H) \cdot\sum_{t=C''}^T t^{-5/12},\\&\le \frac{12}{5}\sqrt{2H^3C_B}\left(\log\frac{SAHT}{\delta'}\right)^{1/4}\cdot T^{5/6}+\tilde{O}(S^{3/4}A^{1/2}H) \cdot T^{7/12},\\&\le \tilde{O}(T^{5/6}).
\end{align*}
\end{proof}

\section{Regrets of Reward and Constraint Violations}\label{sec:regrets}

In this section, we prove the bounds for the strong reward regret and constraint violation of Algorithm \ref{alg:main}. We first establish the linear convergence of the primal-dual divergence potential functions.
\begin{customlem}{1}[Margin-regularized convergence]\label{lem:regularized convergence}
Let $\eta_{t}$, $\tau_{t}<1$ and a confidence parameter $\delta\in(0,1)$. With probability at least $1-\delta$, the policy-dual divergence potential of Algorithm \ref{alg:main} holds
\begin{align*}
   \Phi_{t+1}\le \exp\left(-\sum_{j=1}^t\eta_j\tau_j \right)\Phi_1+\frac{HC+D}{2}\sum_{j=1}^t\eta_j^2\exp\left(-\sum_{k=j+1}^t\eta_k\tau_k\right)+\sum_{j=1}^t\eta_j\delta_j\exp\left(-\sum_{k=j+1}^t\eta_k\tau_k\right).
\end{align*}
where $C=\exp{\left(\eta_{t} H\left(1+\frac{4mH}{\Xi}+\tau_{t}\log(A)\right)\right)}\left(2A^{\eta_{t}\tau_{t}}H^2\left(1+\frac{4mH}{\Xi}+\tau_{t}\log(A)\right)^2 + \frac{128\tau_{t}^2\sqrt{A}}{e^2}\right)$,\\  $D=m\bigl(H + \tau_{t}\,\left(\frac{4H}{\Xi}\right)\bigr)^2$ and $\delta_j=\hat{V}_{\bar{y}_j}^j-V_{y_j}^{\pi_j}+\sum_{i}\frac{4H}{\Xi}\left(\hat{V}_{\bar{d}_{j,i}}^j-V_{d_i}^{\pi_j}\right)$.
\end{customlem}
\begin{proof}
    Conditioned on success event $\mathcal{E}$, we first decompose the primal dual gap for every episode $t$:
    \begin{equation*}
    \mathcal{L}_{\tau_{t},t}(\pi^\star_{\tau_{t},\epsilon}, \lambda_t) - \mathcal{L}_{\tau_{t},t}(\pi_t, \lambda^\star_{\tau_{t},\epsilon}) = 
    \underbrace{\mathcal{L}_{\tau_{t},t}(\pi^\star_{\tau_{t},\epsilon}, \lambda_t) - \mathcal{L}_{\tau_{t},t}(\pi_t, \lambda_t)}_{\text{(1)}}
    + \underbrace{\mathcal{L}_{\tau_{t},t}(\pi_t, \lambda_t) - \mathcal{L}_{\tau_{t},t}(\pi_t, \lambda^\star_{\tau_{t},\epsilon})}_{\text{(2)}}.
\end{equation*}
\paragraph{Bounding term (1)} For term (1), by Lemmas \ref{lem:last-iterate convergence-0} and \ref{lem:q-value-bounds}, we have
\begin{align}\label{eq:term(1)-1}
    (\text{1})
&=\mathcal{L}_{\tau_{t},t}\bigl(\pi_{\tau_{t,\epsilon}}^\star,\lambda_t\bigr)-\mathcal{L}_{\tau_{t},t}\bigl(\pi_{t},\lambda_t\bigr)\nonumber
\\
&=V_{r+\lambda_t^Tg}^{\pi_{\tau_{t,\epsilon}}^\star}-
V_{r+\lambda_t^Tg}^{\pi_{t}}\nonumber\tag{$g=d-\frac{1}{H}\alpha$}\\
&\quad+\tau_{t}\sum_{s,a,h}q_{h}^{\pi_{t}}(s)\pi_{t,h}(a\!\mid\!s)\log\bigl(\pi_{t,h}(a\!\mid\!s)\bigr)-
\tau_{t}\sum_{s,a,h}q_{h}^{\pi_{\tau_{t,\epsilon}}^\star}(s)\pi_{\tau_{t},\epsilon,h}^\star(a\!\mid\!s)\,\log\bigl(\pi_{\tau_{t},\epsilon,h}^\star(a\!\mid\!s)\bigr)\nonumber\\
&=V_{r+\lambda_t^Tg+\tau_{t} \psi_t}^{\pi_{\tau_{t,\epsilon}}^\star}-
V_{r+\lambda_t^Tg+\tau_{t}\psi_t}^{\pi_{t}}\nonumber\\
&\quad+\tau_{t}\sum_{s,a,h}q_{h}^{\pi_{\tau_{t,\epsilon}}^\star}(s)\pi_{\tau_{t},\epsilon,h}^\star(a\!\mid\!s)\log\bigl(\pi_{t,h}(a\!\mid\!s)\bigr)-
\tau_{t}\sum_{s,a,h}q_{h}^{\pi_{\tau_{t,\epsilon}}^\star}(s)\pi_{\tau_{t},\epsilon,h}^\star(a\!\mid\!s)\,\log\bigl(\pi_{\tau_{t},\epsilon,h}^\star(a\!\mid\!s)\bigr)\nonumber\\
&=V_{r+\lambda_t^Tg+\tau_{t} \psi_t}^{\pi_{\tau_{t,\epsilon}}^\star}-
V_{r+\lambda_t^Tg+\tau_{t}\psi_t}^{\pi_{t}}-\tau_{t} \textup{KL}_t\nonumber\tag{$\mathrm{KL_t}=\sum_{s,a,h}q_{h}^{\pi_{\tau_{t,\epsilon}}^\star}(s)\pi_{\tau_{t},\epsilon,h}^\star(a\!\mid\!s)\log\left(\frac{\pi^\star_{\tau_t,\epsilon,h}(a\mid s)}{\pi_{t,h}(a\mid s)}\right)$}\\
&=V_{y_t}^{\pi_{\tau_{t,\epsilon}}^\star}-V_{y_t}^{\pi_t}-\tau_{t} \textup{KL}_t\nonumber\\
    &\le (\hat{V}_{\bar{y}_t}^t-V_{y_t}^{\pi_t})+\frac{\mathrm{KL}_{t}-\mathrm{KL}_{t+1}}{\eta_{t}}+\frac{\eta_{t} H}{2}C-\tau_{t}\mathrm{KL}_{t}\nonumber\\
    &\le (\hat{V}_{\bar{y}_t}^t-V_{y_t}^{\pi_t})+\frac{(1-\eta_{t}\tau_{t})\mathrm{KL}_{t}-\mathrm{KL}_{t+1}}{\eta_{t}}+\frac{\eta_{t} H}{2}C,
\end{align}
where $C=\exp{\left(\eta_{t} H\left(1+\frac{4mH}{\Xi}+\tau_{t}\log(A)\right)\right)}\left(2A^{\eta_{t}\tau_{t}}H^2\left(1+\frac{4mH}{\Xi}+\tau_{t}\log(A)\right)^2 + \frac{128\tau_{t}^2\sqrt{A}}{e^2}\right)$.
\paragraph{Bounding term (2)} 
\begin{align}\label{eq:term(1)-2}
    (2)&=\sum_{i}(\lambda_{t,i}-\lambda_{\tau_{t},\epsilon,i}^{\star})(\hat{V}_{\bar{d}_{t,i}}^{t}-\alpha_i-\epsilon_t+\tau_{t}\lambda_{t,i})+\sum_{i}(\lambda_{t,i}-\lambda_{\tau_{t},\epsilon,i}^{\star})(V_{d_i}^{\pi_t}-\hat{V}_{\bar{d}_{t,i}}^t)-\frac{\tau_{t}}{2}\|\lambda_t-\lambda_{\tau_{t},\epsilon}^{\star}\|^2\nonumber\\
    &\le \frac{\|\lambda_{\tau_{t},\epsilon}^{\star}-\lambda_t\|^2-\|\lambda_{\tau_{t},\epsilon}^{\star}-\lambda_{t+1}\|^2}{2\eta_{t}}+\frac{\eta_{t}}{2}\|\hat{V}_{\bar{d}}^t-\alpha-\epsilon_t+\tau_{t}\lambda_t\|^2\nonumber\\
    &\quad+\sum_{i}\left(\frac{4H}{\Xi}\right)\left|\hat{V}_{\bar{d}_{t,i}}^t-V_{d_i}^{\pi_t}\right|-\frac{\tau_{t}}{2}\|\lambda_t-\lambda_{\tau_{t},\epsilon}^{\star}\|^2\nonumber\\
    &=\frac{(1-\eta_{t}\tau_{t})\|\lambda_{\tau_{t},\epsilon}^{\star}-\lambda_t\|^2-\|\lambda_{\tau_{t},\epsilon}^{\star}-\lambda_{t+1}\|^2}{2\eta_{t}}+\frac{\eta_{t}}{2}D+\sum_{i}\left(\frac{4H}{\Xi}\right)\left|\hat{V}_{\bar{d}_{t,i}}^t-V_{d_i}^{\pi_t}\right|,
\end{align}
where $D\le m(H+\tau_{t}\left(\frac{4H}{\Xi}\right))^2$. Because $\Phi_{t}=\mathrm{KL}_t+\frac{1}{2}\|\lambda_t-\lambda_{\tau_{t},\epsilon}^{\star}\|^2$, combining equation 11 and \eqref{eq:term(1)-2}, it holds
\begin{equation*}
    \Phi_{t+1}\le (1-\eta_{t}\tau_{t})\Phi_t+\frac{\eta_{t}^2}{2}(HC+D)+\underbrace{\eta_{t}\left(\hat{V}_{\bar{y}_t}^t-V_{y_t}^{\pi_t}+\sum_{i}\left(\frac{4H}{\Xi}\right)\left|\hat{V}_{\bar{d}_{t,i}}^t-V_{d_i}^{\pi_t}\right|\right)}_{:=\eta_{t}\delta_t}.
\end{equation*}


Applying the recursion inductively, we get
\begin{align}\label{eq:term-sum}
    \Phi_{t+1}\le& \left(\prod_{j=1}^t \left(1-\eta_{j}\tau_{j}\right)\right)\Phi_1+\sum_{j=1}^t\left[\left(\frac{\eta_j^2}{2}\left(HC+D\right)+\eta_j\delta_j\right)\left(\prod_{k=j+1}^t \left(1-\eta_{k}\tau_{k}\right)\right)\right]\nonumber\\
    \le& \left(\prod_{j=1}^t \left(1-\eta_{j}\tau_{j}\right)\right)\Phi_1+\sum_{j=1}^t\left[\frac{\eta_j^2}{2}\left(HC+D\right)\left(\prod_{k=j+1}^t \left(1-\eta_{k}\tau_{k}\right)\right)\right]\nonumber\\
    &+\sum_{j=1}^t\left[\eta_j\delta_j\left(\prod_{k=j+1}^t \left(1-\eta_{k}\tau_{k}\right)\right)\right]\nonumber\\
    \le& \exp\left(-\sum_{j=1}^t\eta_j\tau_j\right)\Phi_1+\frac{HC+D}{2}\sum_{j=1}^t\left[\eta_j^2\exp\left(-\sum_{k=j+1}^t\eta_{k}\tau_{k}\right)\right]\nonumber\\
    &+\sum_{j=1}^t\eta_j\delta_j\exp\left(-\sum_{k=j+1}^t\eta_{k}\tau_{k}\right).
\end{align}
where the last inequality holds according to $(1-x)\le \exp{(-x)}$ for $x<1$. This completes the result.
\end{proof}

While we have established its theoretical convergence, it doesn't tell us how close our solutions truly are to the optimal policy. Therefore, we prove the error bounds to bridge this gap, linking our theoretical convergence directly to practical performance guarantees.
\begin{customlem}{2}[Per-episode trade-off]\label{lem:error-bounds}
For any $t\ge C''$, any constraint $i$ and any sequence $(\pi_t)_{t\in[T]}$, it holds
\begin{align*}
    \bigl[V_{r}^{\pi^\star}-V_{r}^{\pi_t}\bigr]_{+}&\le
H^{3/2}\,\bigl(2\Phi_t\bigr)^{1/2}+\tau_{t} H\log(A)+\frac{H}{\Xi}\epsilon_{i,t},\\
\max_{i\in[m]}\;\bigl[\alpha_{i}-V_{d_i}^{\pi_t}\bigr]_{+} &\le \bigl[H^{3/2}\,\bigl(2\Phi_t\bigr)^{1/2} +\tau_{t}\left(\frac{4H}{\Xi}\right)-\epsilon_{i,t}\bigr]_{+}.
\end{align*}
\end{customlem}

\begin{proof}We first bound the reward distance between the optimal policy and the actual policy.We present the decomposition as
\begin{equation*}
V_{r}^{\pi^\star}-V_{r}^{\pi_t}=
\underbrace{V_{r}^{\pi^\star}-V_{r}^{\pi_{\tau_{t},\epsilon}^{\star}}}_{(\text{1})}+
\underbrace{V_{r}^{\pi_{\tau_{t},\epsilon}^{\star}}-V_{r}^{\pi_t}}_{(\text{2})}.  
\end{equation*}
We bound terms (1) and (2) respectively. For term (1), to bound the difference between $V_r^{\pi^{\star}}$ and $V_{r}^{\pi_{\tau_{t},\epsilon}^{\star}}$, we first construct a feasible policy for the more constrained problem, i.e., $V_d^\pi(p)\ge \alpha+\epsilon_t$. We define a probabilistic mixed policy for any $t\ge C''$
\begin{equation*}
    \pi^{\text{mix}}=(1-B_t)\pi^\star+B_t\pi^0,
\end{equation*}
where $B_t$ is a Bernoulli distributed random variable with $\frac{\epsilon_{i,t}}{\Xi}$ and $\pi^0$ is the feasible policy under the original optimization problem (as shown in Assumption \ref{assumption}). Then, we have that for any constraint $i$
\begin{align*}
    V_{d_i}^{\pi^{\text{mix}}}&=\left(1-\frac{\epsilon_{i,t}}{\Xi}\right)V_{d_i}^{\pi^\star}+\frac{\epsilon_{i,t}}{\Xi}V_{d_i}^{\pi^0}\\
    &\ge \left(1-\frac{\epsilon_{i,t}}{\Xi}\right)\alpha_i+\frac{\epsilon_{i,t}}{\Xi}d_i^0\\
    &=\alpha_i+\epsilon_{i,t}.
\end{align*}
$\pi^{\text{mix}}$ may not a Markov policy. However, by Lemma \ref{lem:mix policy}, there exists a markov policy $\hat{\pi}^{\text{mix}}$, which has the same performance as $\pi^{\text{mix}}$, which is the feasible policy for the problem below 
\begin{equation}
\max_{\pi\in \Pi}\;V_{r}^\pi+\tau_{t} \mathcal{H}(\pi)
\quad
\text{s.t.}
\quad
V_{d_i}^\pi\ge
\alpha_i+\epsilon_{i,t} \quad(\forall i\in [m]).
\end{equation}
This indicates that
\begin{equation*}
    V_r^{\pi_{\tau_{t},\epsilon}^\star}+\tau_{t} \mathcal{H}(\pi_{\tau_{t},\epsilon}^\star)\ge V_r^{\pi^{\text{mix}}}+\tau_{t} \mathcal{H}(\pi^{\text{mix}}).
\end{equation*}
We then obtain 
\begin{equation*} 
V_r^{\pi_{\tau_{t},\epsilon}^\star}+\tau_{t} \mathcal{H}(\pi_{\tau_{t},\epsilon}^\star)\ge\left(\left(1-\frac{\epsilon_{i,t}}{\Xi}\right)V_r^{\pi^\star}+\frac{\epsilon_{i,t}}{\Xi}V_r^{\pi^0}\right)+\tau_{t} \mathcal{H}(\pi^{\text{mix}})
\end{equation*}
Putting the difference term on the right side, it thus holds that
\begin{align}\label{eq:error-r-1}
V_r^{\pi^{\star}}-V_r^{\pi_{\tau_{t},\epsilon}^\star}&\le \frac{\epsilon_{i,t}}{\Xi}\left(V_r^{\pi^\star}-V_r^{\pi^0}\right)+\tau_{t} \left(\mathcal{H}(\pi_{\tau_{t},\epsilon}^\star)-\mathcal{H}(\pi^{\text{mix}})\right)\nonumber\\
&\le \frac{\epsilon_{i,t}}{\Xi}\cdot H+\tau_{t} H \log(A).
\end{align}

In terms of Term (2), we have 
\begin{align}\label{eq:error-r-2}
(2)&= \sum_{h=1}^{H} \mathbb{E}\left[ \sum_a (\pi_{\tau_{t},h}^\star(a|s) - \pi_{t,h}(a|s)) Q_{r,h}^{\pi_t}(s,a) \mid s_0 \right] \nonumber\\
&= \sum_{h=1}^{H} \sum_{s \in \mathcal{S}} q_{h}^{\pi_{\tau_{t},\epsilon}^{\star}}(s) \sum_{a} (\pi_{\tau_{t},h}^\star(a|s) - \pi_{t,h}(a|s)) Q_{r,h}^{\pi_t}(s,a) \nonumber\\
&\le H \sum_{h=1}^{H} \sum_{s \in \mathcal{S}} q_{h}^{\pi_{\tau_{t},\epsilon}^{\star}}(s) \| \pi_{\tau_{t}}(\cdot|s) - \pi_t(\cdot|s) \|_1 \quad & (\text{since } |Q_{r,h}^{\pi_t}(s,a)| \le H) \nonumber\\
&\le H \sum_{h=1}^{H} \sum_{s \in \mathcal{S}} q_{h}^{\pi_{\tau_{t},\epsilon}^{\star}}(s) \sqrt{2 \mathrm{KL}(\pi_{\tau_{t},h}^\star(\cdot|s),\pi_{t,h}(\cdot|s))} \quad & (\text{by Pinsker's Inequality}) \nonumber\\
&\le \sqrt{2} H  \sqrt{\left( \sum_{s,h} q_{h}^{\pi_{\tau_{t},\epsilon}^{\star}}(s) \right) \left( \sum_{s,h} q_{h}^{\pi_{\tau_{t},\epsilon}^{\star}}(s) \mathrm{KL}(\pi_{\tau_{t},h}^\star(\cdot|s),\pi_{t,h}(\cdot|s))\right)}\nonumber\\
&\le \sqrt{2} H  \sqrt{H \sum_{s,h} q_{h}^{\pi_{\tau_{t},\epsilon}^{\star}}(s) \mathrm{KL}(\pi_{\tau_{t},h}^\star(\cdot|s),\pi_{t,h}(\cdot|s))} \quad & (\text{since } \sum_{s,h} q_{h}^{\pi_{\tau_{t},\epsilon}^{\star}}(s) \le H) \nonumber\\
&= H^{3/2}\sqrt{2\mathrm{KL}_{t}}.
\end{align}
Combining \eqref{eq:error-r-1} and \eqref{eq:error-r-2} together, we obtain
\begin{align*}
 \bigl[V_{r}^{\pi^\star}-V_{r}^{\pi_t}\bigr]_{+}&\le
H^{3/2}\,\bigl(2\mathrm{KL}_{t}\bigr)^{1/2}+\tau_{t} H\log(A)+\frac{H}{\Xi}\epsilon_{i,t}\\
&\le
H^{3/2}\,\bigl(2\Phi_t\bigr)^{1/2}+\tau_{t} H\log(A)+\frac{H}{\Xi}\epsilon_{i,t}.
\end{align*}
Next, we bound the maximum constraint violation between the thresholds and the policy. For any constraint $i\in[m]$, we give the decomposition as
\begin{equation*}
\alpha_i - V_{d_i}^{\pi_t}=
\underbrace{\alpha_i-V_{d_i}^{\pi_{\tau_{t},\epsilon}^{\star}}}_{(3)}+\underbrace{V_{d_i}^{\pi_{\tau_{t},\epsilon}^{\star}}-V_{d_i}^{\pi_t}}_{(4)}.
\end{equation*}
We then bound terms (3) and (4) respectively. The bound for Term (3) is derived from the properties of the saddle point $(\pi_{\tau_{t},\epsilon}^{\star},\lambda_{\tau_{t},\epsilon}^{\star})$. Specially, we use the second inequality from Lemma \ref{lem:SP ineq}, which states that for any $\lambda \in \mathcal{C}$: $ \lambda_{\tau_{t},\epsilon}^{\star\top} (V_{d}^{\pi_{\tau_{t},\epsilon}^{\star}} -\epsilon_t-\alpha) \le \lambda^\top (V_{d}^{\pi_{\tau_{t},\epsilon}^{\star}} -\epsilon_t-\alpha) + \frac{\tau_{t}}{2} (\|\lambda\|^2 - \|\lambda_{\tau_{t},\epsilon}^{\star}\|^2)$. Rearranging this inequality, it holds that: 
\begin{equation}\label{eq:sd-inequal}
    (\lambda-\lambda_{\tau_{t},\epsilon}^{\star})^\top (V_{d}^{\pi_{\tau_{t},\epsilon}^{\star}} -\epsilon_t-\alpha) + \frac{\tau_{t}}{2} (\|\lambda\|^2 - \|\lambda_{\tau_{t},\epsilon}^{\star}\|^2)\ge 0.
\end{equation}
For any constraint $j\in [m]$, we construct a specific $\lambda$ vector by choosing $\lambda_i=\lambda_{\tau_{t},\epsilon,i}^{\star}$ for all $i\neq j$, and $\lambda_j=z$ for some $z \in [0,\left(\frac{4H}{\Xi}\right)]$. Substituting this into \eqref{eq:sd-inequal} reduces it to terms concerning only the $j$-th dimension
\begin{equation*}
    \left(z-\lambda_{\tau_{t},\epsilon,j}^{\star}\right)\left(V_{d_j}^{\pi_{\tau_{t},\epsilon}^{\star}} -\epsilon_t-\alpha_j\right)+\frac{\tau_{t}}{2}\left(z^2-(\lambda_{\tau_{t},\epsilon,j}^{\star})^2\right)\ge 0.
\end{equation*}
The dual solution does not lie on the boundary of the feasible set (i.e. $\lambda_{\tau_{t},\epsilon,j}^{\star}<\frac{4H}{\Xi}$). Thus we can choose a value $z$ such that $\lambda_{\tau_{t},j}^\star<z\le \left(\frac{4H}{\Xi}\right)$, which means 
$z-\lambda_{\tau_{t},j}^\star>0$. After rearranging, we get 
\begin{align}\label{eq:error-d-1}
 (3)=\alpha_j-V_{d_j}^{\pi_{\tau_{t},\epsilon}^{\star}}&\le \frac{\tau_{t}}{2}(z+\lambda_{\tau_{t},\epsilon,j}^{\star})-\epsilon_{j,t}\nonumber\\
 &\le \tau_{t}\left(\frac{4H}{\Xi}\right)-\epsilon_{j,t}.
\end{align}

For term (4), a similar analysis to that of the reward gap (in equation \ref{eq:error-r-2}), we have
\begin{equation}\label{eq:error-d-2}
(4)=V_{d_i}^{\pi_{\tau_{t},\epsilon}^{\star}}-V_{d_i}^{\pi_t}\le H^{3/2}(2\mathrm{KL}_t)^{1/2}. 
\end{equation}
Then we combine the bounds from \eqref{eq:error-d-1} and \eqref{eq:error-d-2} together to get the upper bound below:
\begin{align*}
    \alpha_{i}-V_{d_i}^{\pi_t}&\le
H^{3/2}\,\bigl(2\mathrm{KL}_{t}\bigr)^{1/2}
+\tau_{t}\left(\frac{4H}{\Xi}\right)-\epsilon_{i,t}\\
&\le
H^{3/2}\,\bigl(2\Phi_t\bigr)^{1/2}
+\tau_{t}\left(\frac{4H}{\Xi}\right)-\epsilon_{i,t}.
\end{align*}
Since this holds for any constraint $i$, we can take the positive part on both sides and then the maximum over $i$ to obtain the final result:
\begin{equation*}
    \max_{i\in[m]}\;\bigl[\alpha_{i}-V_{d_i}^{\pi_t}\bigr]_{+} \le \boldsymbol{\bigl[} H^{3/2}\,\bigl(2\Phi_{t}\bigr)^{1/2} +\tau_{t}\left(\frac{4H}{\Xi}\right)-\epsilon_{i,t} \boldsymbol{\bigr]_{+}}.
\end{equation*}
\end{proof}

\subsection{Strong Regret Bounds}\label{app:strong regret bounds}
We are now ready to establish the bounds for strong reward regret and strong constraint violation of Algorithm \ref{alg:main}. 
\begin{customthm}{1}[Bounds for reward regret and constraint violation regret]\label{thm:strong bounds}
    Let $\eta_{t}=t^{-5/6}$, $\tau_{t}=t^{-1/6}$ for $t\ge 1$, and $\epsilon_{i,t} =\frac{18}{5}\sqrt{H^3C_B}\left(t^{-1/6}\cdot \log(SAHt/\delta')^{1/4}\right)$ for all constraint $i$. For a confidence parameter $\delta\in(0,1)$, with probability at least $1-\delta$, when $T$ is sufficiently large, Algorithm \ref{alg:main} achieves the following bounds:
    \begin{equation*}
        \mathcal{R}_{T}(r)\le\tilde{O}(T^{5/6}) \quad and \quad \mathcal{R}_{T}(d)=\tilde{O}(1).
    \end{equation*}
    where $T$ denotes the number of episodes, $C_B=\left(1+\frac{8mH}{\Xi}\right)\left(4H\sqrt{2SA}\left(H\sqrt{S}+H+1\right)\right)+\frac{4mH}{\Xi}\sqrt{2H}$ is a $T$-independent constant and $\tilde{O}$ hides polylogarithmic factors in $(S,A,H,m,\log(T),\log(\frac{1}{\delta}),\Xi)$.
\end{customthm}
\begin{proof}
    For episode $t\ge C''$, according to Lemmas \ref{lem:regularized convergence} and \ref{lem:error-bounds}, we can obtain that for any constraint $i$
    \begin{align*}
        [V_r^{\star}-V_r^{\pi_t}]_{+}&\le H^{3/2}\sqrt{2\Phi_t}+\tau_{t}\log(A)H+\frac{H}{\Xi}\epsilon_{i,t}\nonumber\\
        &\le H^{3/2}\exp{(-\sum_{j=1}^t\eta_{j}\tau_{j} /2)}\sqrt{2\Phi_1}
        +H^{3/2}\sqrt{HC+D}\left(\sum_{j=1}^t\eta_j^2\exp\left(-\sum_{k=j+1}^t \eta_k\tau_k\right)\right)^{1/2}\\
        &\quad+\sqrt{2H^3}\left(\sum_{j=1}^t\eta_j\delta_j\exp\left(-\sum_{k=j+1}^t \eta_k\tau_k\right)\right)^{1/2}
        +\tau_{t}\log(A)H+\frac{H}{\Xi}\epsilon_{i,t}.
    \end{align*}
    Since the strong reward regret $\mathcal{R}_{T}(r)=\sum_{t\in[T]}[V_{r}^{\pi^{\star}}-V_r^{\pi_t}]_+$, it can obtain by summing all the terms over $T$ episodes. Thus, we have
    \begin{align}
        \mathcal{R}_{T}(r)&\le \sum_{t=1}^{C''-1}[V_r^{\pi^\star}-V_r^{\pi_t}]_++\sum_{t=C''}^T \left(H^{3/2}\sqrt{2\Phi_t}+\tau_{t}\log(A)H+\frac{H}{\Xi}\epsilon_{i,t}\right)\nonumber\\
        &\le (C''-1)H\nonumber\\
        &\quad+\sum_{t=C''}^TH^{3/2}\exp{(-\sum_{j=1}^t\eta_{j}\tau_{j} /2)}\sqrt{2\Phi_1}\tag{a}\\
        &\quad+\sum_{t=C''}^TH^{3/2}\sqrt{HC+D}\left(\sum_{j=1}^t\eta_j^2\exp\left(-\sum_{k=j+1}^t \eta_k\tau_k\right)\right)^{1/2}\tag{b}\\
        &\quad+\sum_{t=C''}^T \sqrt{2H^3}\left(\sum_{j=1}^t\eta_j\delta_j\exp\left(-\sum_{k=j+1}^t \eta_k\tau_k\right)\right)^{1/2}\tag{c}\\
        &\quad+\sum_{t=C''}^T  \tau_{t}\log(A)H \tag{d}\\
        &\quad+\sum_{t=C''}^T \frac{H}{\Xi}\epsilon_{i,t}.\tag{e}
    \end{align}

Since $(C''-1)H$ is a constant that is $T$-independent, we will proceed to analysis the bound for term (a)-term (f) separately.
\paragraph{Bounding term (a)} According to the definition of $\Phi_1$, we have $\sqrt{2\Phi_1}\le \sqrt{2H\log(A)+m\left(\frac{4H}{\Xi}\right)^2}:=C'$. Thus, it holds
\begin{align}
(a)&\le H^{3/2}C'\sum_{t=C''}^T\exp(-\sum_{j=1}^t \eta_j\tau_j/2),\nonumber\\
&= H^{3/2}C'\sum_{t=C''}^T\exp(-\sum_{j=1}^t j^{-1}/2).
\end{align}
For the exponent, it yields that
$\sum_{j=1}^t j^{-1}\ge \int_{1}^{t+1} x^{-1}dx = \ln(t+1)$. Substituting this lower bound into the summand yields:
\begin{equation*}
\exp\left(-\frac{1}{2}\sum_{j=1}^t j^{-1}\right)\le \exp\left(-\frac{1}{2}\ln(t+1)\right) = (t+1)^{-1/2}.
\end{equation*}
The sum over $t$ is therefore bounded by
\begin{align*}
\sum_{t=C''}^T (t+1)^{-1/2}
&\le \int_{C''}^{T} (x+1)^{-1/2}dx,\\
&= \left[2(x+1)^{1/2}\right]_{C''}^T,\\
&\le 2\sqrt{2T}.
\end{align*}
Combining this result with the constant, it yields:
\begin{equation*}
(a)\le H^{3/2}C'\left(2\sqrt{2T}\right)=\tilde{O}(\sqrt{T}).
\end{equation*}

    \paragraph{Bounding term (b)} We first calculate the bound of the term $HC+D$ as follows:
    \begin{align*}
    HC+D=&H\exp{\left(\eta_{t} H\left(1+\frac{4mH}{\Xi}+\tau_{t}\log(A)\right)\right)}\left(2A^{\eta_{t}\tau_{t}}H^2\left(1+\frac{4mH}{\Xi}+\tau_{t}\log(A)\right)^2
    + \frac{128\tau_{t}^2\sqrt{A}}{e^2}\right)\\
    &+m\bigl(H + \tau_{t}\,\left(\frac{4H}{\Xi}\right)\bigr)^2\\
    \le&H\exp{\left(H\left(1+\frac{4mH}{\Xi}+\log(A)\right)\right)}\left(2AH^2\left(1+\frac{4mH}{\Xi}+\log(A)\right)^2
    + \frac{128\sqrt{A}}{e^2}\right)\\
    &+m\bigl(H + \frac{4H}{\Xi}\bigr)^2.
    \end{align*}
We can find that the right-hand side of the second inequality is the order of constant, i.e., $\tilde{O}(1)$. Moreover, by Lemma \ref{lem: bound b}, it yields
\begin{equation*}
    (b)\le K\cdot T^{2/3},
\end{equation*}
where $K = \frac{3\sqrt{7}}{2^{2/3}} H^{3/2} \sqrt{HC+D}$.
\paragraph{Bounding term (c)} For term (c), by Lemma \ref{lem: bound_c} and Lemma \ref{lem:bound on cumulative estimation}, we obtain
\begin{equation*}
(c)\le \tilde{O}(T^{5/6}).
\end{equation*}
\paragraph{Bounding term (d)} In terms of (d), it holds
\begin{equation*}
   (d)=\sum_{t=C''}^T\tau_{t} \log(A)H\le \log(A) H \sum_{t=C''}^T t^{-1/6}=\tilde{O}(T^{5/6}). 
\end{equation*}
\paragraph{Bounding term (e)} For term (e), by our setting, 
$$(e)= \frac{H}{\Xi}\sum_{t=C''}^T \epsilon_{i,t}=\tilde{O}(T^{5/6}).$$ 

We now calculate the regret bound for constraint violation. By Lemma \ref{lem:error-bounds}, for episode $t\ge C''$, the per-episode constraint violation is bounded by:
\begin{equation*}
    \max_{i\in[m]}\;\bigl[\alpha_{i}-V_{d_i}^{\pi_t}\bigr]_{+}\le [H^{3/2}\sqrt{2\Phi_t}+\tau_{t}\left(\frac{4H}{\Xi}\right)-\epsilon_{i,t}]_+.
\end{equation*}
Let $P_t:=H^{3/2}\sqrt{2\Phi_t}+\tau_{t}\left(\frac{4H}{\Xi}\right)$. The cumulative constraint violation is bounded by 
\begin{align*}
  R_T(d)&\le \max_{i\in [m]}\sum_{t=1}^{C''-1}[\alpha_i-V_{d_i}^{\pi_t}]_++\sum_{t=C''}^T[P_t-\epsilon_{i,t}]_+\\
  &\le (C''-1)H+\sum_{t=C''}^T[P_t-\epsilon_{i,t}]_+\\
  &\le (C''-1)H+\underbrace{\sum_{t=C''}^T\left[H^{3/2}\sqrt{2\Phi_1}\exp(-\sum_{j=1}^t\eta_{j}\tau_{j}/2)-\epsilon_{i,t}^{(1)}\right]_+}_{(a')}\\
  &\quad+\underbrace{\sum_{t=C''}^T\left[H^{3/2}\sqrt{HC+D}\left(\sum_{j=1}^t \eta_j^2\exp\left(-\sum_{k=j+1}^t \eta_k\tau_k\right)\right)^{1/2}-\epsilon_{i,t}^{(2)}\right]_+}_{(b')}\\
  &\quad+\underbrace{\sum_{t=C''}^T\left[\sqrt{2H^3}\left(\sum_{j=1}^t \eta_j\delta_j\exp{\left(-\sum_{k=j+1}^t\eta_k\tau_k\right)}\right)^{1/2}-\epsilon_{i,t}^{(3)}\right]_+}_{(c')}
  +\underbrace{\sum_{t=C''}^T\left[\left(\frac{4H}{\Xi}\right)\tau_{t}-\epsilon_{i,t}^{(4)}\right]_+}_{(d')},
\end{align*}
where $\epsilon_{i,t}\ge\epsilon_{i,t}^{(1)}+\epsilon_{i,t}^{(2)}+\epsilon_{i,t}^{(3)}+\epsilon_{i,t}^{(4)}$. We will show that with the appropriate choice of $\epsilon_{i,t}^{(i)}$ for any $i\in[4]$, the bound for each term (a')-(d') is $\tilde{O}(1)$. In Section \ref{sec:proof}, $c_1$, $c_2$, $c_3$, $c_4$ denote $H^{3/2}\sqrt{2\Phi_1}$, $H^{3/2}\sqrt{HC+D}$, $\sqrt{2H^3}$ and $\frac{4H}{\Xi}$, respectively.

\paragraph{Bounding term (d')} For term (d'), 
\begin{equation*}
    (d')=\sum_{t=C''}^T\left[\left(\frac{4H}{\Xi}\right)t^{-1/6}-\epsilon_{i,t}^{(4)}\right].
\end{equation*}
With the choice of $\epsilon_{i,t}^{(4)}=\left(\frac{4H}{\Xi}\right)t^{-1/6}$, we immediately get that $(d')=0$.

\paragraph{Bounding term (a')}  For term (a'), 
\begin{equation*}
    (a')=\sum_{t=C''}^T[H^{3/2}\sqrt{2\Phi_1}\exp(-\sum_{j=1}^t j^{-1}/2)-\epsilon_{i,t}^{(1)}]_+.
\end{equation*}
Since $H^{3/2}\sqrt{2\Phi_1}\exp(-\sum_{j=1}^t j^{-1}/2)$ is the same asymptotic order $t^{-1/2}$. We can choose an arbitrarily small constant to multiply by $t^{-1/6}$. By Lemma \ref{lem:limit_comparison_bound}, we obtain $(b')=\tilde{O}(1)$.

\paragraph{Bounding term (b')} For term (b') and any $t\ge C''$, 
\begin{equation*}
    (b')= \sum_{t=C''}^T[H^{3/2}\sqrt{HC+D}\left(\sum_{j=1}^t j^{-5/3}\exp\left(-\sum_{k=j+1}^t k^{-1}\right)\right)^{1/2}-\epsilon_{i,t}^{(2)}]_+.
\end{equation*}
By Lemma \ref{lem: bound b}, it yields that $\left(\sum_{j=1}^t j^{-5/3}\exp\left(-\sum_{k=j+1}^t k^{-1}\right)\right)^{1/2}$ is of the same asymptotic order $t^{-1/3}$. We can choose an arbitrarily small constant to multiply by $t^{-1/6}$. Since the term $t^{-1/3}$ decays strictly faster than $t^{-1/6}$, by Lemma \ref{lem:limit_comparison_bound}, we thus obtain $(b')=\tilde{O}(1)$.

\paragraph{Bounding term (c')} For term (c'), 
\begin{equation*}
    (c')\le\sum_{t=C''}^T\left[\sqrt{2H^3}\left(\sum_{j=1}^t \eta_j\delta_j\exp{\left(-\sum_{k=j+1}^t\eta_k\tau_k\right)}\right)^{1/2}-\epsilon_{i,t}^{(3)}\right]_+.
\end{equation*}
By Lemma \ref{lem: bound_c} and Lemma \ref{lem:limit_comparison_bound}, we pick $\epsilon_{i,t}^{(3)}=\frac{17}{5}\sqrt{H^3C_B}\left(t^{-1/6}\cdot \log(SAHt/\delta')^{1/4}\right)$, where $C_B=\left(1+\frac{8mH}{\Xi}\right)\left(4H\sqrt{2SA}\left(H\sqrt{S}+H+1\right)\right)+\frac{4mH}{\Xi}\sqrt{2H}$. It holds that $\lim_{t\to\infty} \frac{A_t}{\epsilon_{i,t}^{(3)}}< 1$, where $A_t=\sqrt{2H^3}\left(\sum_{j=1}^t \eta_j\delta_j\exp{\left(-\sum_{k=j+1}^t\eta_k\tau_k\right)}\right)^{1/2}$ and thus it holds $(b')=\tilde{O}(1)$.

Sum the terms from $\epsilon_{i,t}^{(1)}$ to $\epsilon_{i,t}^{(4)}$, we find that $\epsilon_{i,t}\ge\epsilon_{i,t}^{(1)}+\epsilon_{i,t}^{(2)}+\epsilon_{i,t}^{(3)}+\epsilon_{i,t}^{(4)}$. Moreover, sum all of the bounds for reward regret and constraint violation together, we get the final result:
$$\mathcal{R}_T(r)=\tilde{O}(T^{5/6})\quad\text{and}\quad\mathcal{R}_T(d)=\tilde{O}(1).$$
\end{proof}

\section{Last-Iterate Convergence}\label{app:last-iterate}
In this section, we present the property of Algorithm \ref{alg:main}, showing that the primal-dual iterates of FlexDOME converge in the last iterate. In contrast to Lemma \ref{lem:regularized convergence} which accounts for estimation errors, we analyze a more fundamental scenario. Here, we assume the model is known, thereby allowing us to neglect the effects of estimation errors. This setting enables a more direct proof of its intrinsic convergence guarantee, as shown in the following lemma.
\begin{customthm}{2}[Convergence for potential functions]\label{lem:last-iterate convergence} Let $\eta_{t}$, $\tau_{t}\le 1$. The policy-dual divergence potential holds
\begin{equation*}
    \Phi_{t+1}\le (1-\eta_{t}\tau_{t})\Phi_t+\frac{\eta_{t}^2}{2}(HC+D)
\end{equation*}
where $C=\exp{\left(\eta_{t} H\left(1+\frac{4mH}{\Xi}+\tau_{t}\log(A)\right)\right)}\left(2A^{\eta_{t}\tau_{t}}H^2\left(1+\frac{4mH}{\Xi}+\tau_{t}\log(A)\right)^2 + \frac{128\tau_{t}^2\sqrt{A}}{e^2}\right) $ and $D=m\bigl(H + \tau_{t}\,\left(\frac{4H}{\Xi}\right)\bigr)^2$.
\end{customthm}
\begin{proof} We recall the definition of the potential function $ \Phi_{t}
\;=\;
\sum_{s,h}\mathbb{P}_{\pi_{\tau_{t},\epsilon}^{\star}}[s_h = s]\,
\mathrm{KL}\Bigl(\pi_{\tau_{t},h}^\star(\cdot\!\mid\!s),\pi_{t,h}(\cdot\!\mid\!s)\Bigr)
+
\frac12\bigl\|\lambda_{\tau_{t},\epsilon}^\star - \lambda_{t}\bigr\|^2$. We first decompose the primal-dual gap,
\begin{equation*}
    \mathcal{L}_{\tau_{t},t}(\pi^\star_{\tau_{t},\epsilon}, \lambda_t) - \mathcal{L}_{\tau_{t},t}(\pi_t, \lambda^\star_{\tau_{t},\epsilon}) = 
    \underbrace{\mathcal{L}_{\tau_{t},t}(\pi^\star_{\tau_{t},\epsilon}, \lambda_t) - \mathcal{L}_{\tau_{t},t}(\pi_t, \lambda_t)}_{\text{(1)}}
    + \underbrace{\mathcal{L}_{\tau_{t},t}(\pi_t, \lambda_t) - \mathcal{L}_{\tau_{t},t}(\pi_t, \lambda^\star_{\tau_{t},\epsilon})}_{\text{(2)}}
\end{equation*}
and we next deal with (1) and (2), separately. 
\paragraph{Bounding term (1)}
\begin{align*}
(\text{1})
&=\mathcal{L}_{\tau_{t},t}\bigl(\pi_{\tau_{t,\epsilon}}^\star,\lambda_t\bigr)-\mathcal{L}_{\tau_{t},t}\bigl(\pi_{t},\lambda_t\bigr)
\\
&=V_{r+\lambda_t^Tg}^{\pi_{\tau_{t,\epsilon}}^\star}-
V_{r+\lambda_t^Tg}^{\pi_{t}}\\
&\quad+\tau_{t}\sum_{s,a,h}q_{h}^{\pi_{t}}(s)\pi_{t,h}(a\!\mid\!s)\log\bigl(\pi_{t,h}(a\!\mid\!s)\bigr)-
\tau_{t}\sum_{s,a,h}q_{h}^{\pi_{\tau_{t,\epsilon}}^\star}(s)\pi_{\tau_{t},\epsilon,h}^\star(a\!\mid\!s)\,\log\bigl(\pi_{\tau_{t},\epsilon,h}^\star(a\!\mid\!s)\bigr)\\
&=V_{r+\lambda_t^Tg+\tau_{t} \psi_t}^{\pi_{\tau_{t,\epsilon}}^\star}-
V_{r+\lambda_t^Tg+\tau_{t}\psi_t}^{\pi_{t}}\\
&\quad+\tau_{t}\sum_{s,a,h}q_{h}^{\pi_{\tau_{t,\epsilon}}^\star}(s)\pi_{\tau_{t},\epsilon,h}^\star(a\!\mid\!s)\log\bigl(\pi_{t,h}(a\!\mid\!s)\bigr)-
\tau_{t}\sum_{s,a,h}q_{h}^{\pi_{\tau_{t,\epsilon}}^\star}(s)\pi_{\tau_{t},\epsilon,h}^\star(a\!\mid\!s)\,\log\bigl(\pi_{\tau_{t},\epsilon,h}^\star(a\!\mid\!s)\bigr)\\
&=V_{r+\lambda_t^Tg+\tau_{t} \psi_t}^{\pi_{\tau_{t,\epsilon}}^\star}-
V_{r+\lambda_t^Tg+\tau_{t}\psi_t}^{\pi_{t}}-\tau_{t} \textup{KL}_t\\
&=V_{y_t}^{\pi_{\tau_{t,\epsilon}}^\star}-V_{y_t}^{\pi_t}-\tau_{t} \textup{KL}_t.
\end{align*}

By Lemma \ref{lem:last-iterate convergence-0}, we have
\begin{align*}
V_{y_t}^{\pi_{\tau_{t,\epsilon}}^\star}-V_{y_t}^{\pi_t}&=\sum_{s,h}q_h^{\pi_{\tau_{t},\epsilon}^{\star}}\bigl\langle Q_{y_t,h}^{\pi_{t}}(s,\cdot),\pi_{\tau_{t},h}^\star(\cdot\mid s)-\pi_{t,h}(\cdot\mid s)
\bigr\rangle\\
&\le\sum_{s,h}q_h^{\pi_{\tau_{t},\epsilon}^{\star}}\Bigl(\frac{\mathrm{KL}_{t,h}(s)-\mathrm{KL}_{t+1,h}(s)}{\eta_{t}}+
\frac{\eta_{t}}{2}
\sum_{a}\pi_{t,h}(a\!\mid\!s)\exp\bigl(Q_{y_t,h}^{\pi_t}(s,a)\bigr)\,
Q_{y_t,h}^{\pi_t}(s,a)^{2}\Bigl).
\end{align*}

According to Lemma \ref{lem:q-value-bounds}, it holds that $\sum_{a}\pi_{t,h}(a\!\mid\!s)\exp\bigl(Q_{y_t,h}^{\pi_t}(s,a)\bigr)\,
Q_{y_t,h}^{\pi_t}(s,a)^{2}\le C$, where $C=\exp{\left(\eta_{t} H\left(1+\frac{4mH}{\Xi}+\tau_{t}\log(A)\right)\right)}\left(2A^{\eta_{t}\tau_{t}}H^2\left(1+\frac{4mH}{\Xi}+\tau_{t}\log(A)\right)^2 + \frac{128\tau_{t}^2\sqrt{A}}{e^2}\right)$. Then we obtain 
\begin{align*}
V_{y_t}^{\pi_{\tau_{t,\epsilon}}^\star}-V_{y_t}^{\pi_t}
&\le\sum_{s,h}q_h^{\pi_{\tau_{t},\epsilon}^{\star}}\Bigl(\frac{\mathrm{KL}_{t,h}(s)-\mathrm{KL}_{t+1,h}(s)}{\eta_{t}}+
\frac{\eta_{t}}{2}C\Bigl)\\
&=
\frac{\mathrm{KL}_{t}-\mathrm{KL}_{t+1}}{\eta_{t}}+
\frac{\eta_{t} H}{2}C.
\end{align*}

Therefore, we get 
\begin{equation}\label{eq:convergence-1}
(\text{1})=V_{y_t}^{\pi_{\tau_{t,\epsilon}}^\star}-
V_{y_t}^{\pi_t}-\tau_{t}\mathrm{KL}_t\le\frac{\mathrm{KL}_t - \mathrm{KL}_{t+1}}{\eta_{t}}+\frac{\eta_{t} H}{2}C-\tau_{t}\mathrm{KL}_t=
\frac{1-\eta_{t}\tau_{t}}{\eta_{t}}\mathrm{KL}_t-\frac{\mathrm{KL}_{t+1}}{\eta_{t}}+\frac{\eta_{t} H}{2}C.
\end{equation}

\paragraph{Bounding term (2)}
\begin{align*}
(\text{2})&=\mathcal{L}_{\tau_{t},t}(\pi_{t},\lambda_{t})-\mathcal{L}_{\tau_{t},t}(\pi_{t},\lambda_{\tau_{t}, \epsilon}^\star)\\ 
&=\sum_{i\in[m]}\lambda_{t,i}(V_{d_i}^{\pi_t}-\epsilon_t-\alpha_i)-\sum_{i\in[m]}\lambda_{\tau_{t},\epsilon,i}^\star(V_{d_i}^{\pi_t}-\epsilon_t-\alpha_i)+\frac{\tau_{t}}{2}\|\lambda_t\|^2-\frac{\tau_{t}}{2}\|\lambda_{\tau_{t},\epsilon}^\star\|^2\\
&=\sum_{i\in[m]}(\lambda_{t,i}-\lambda_{\tau_{t},\epsilon,i}^\star)(V_{d_i}^{\pi_t}-\epsilon_t-\alpha_i+\tau_{t} \lambda_{t,i})-\frac{\tau_{t}}{2}\|\lambda_t-\lambda_{\tau_{t},\epsilon}^\star\|^2\\
&\le \frac{\|\lambda_{\tau_{t}, \epsilon}^\star - \lambda_{t}\|^2 - \|\lambda_{\tau_{t}, \epsilon}^\star - \lambda_{t+1}\|^2}{2\eta_{t}}+\frac{\eta_{t}}{2}\|V_{d}^{\pi_t}-\epsilon_t-\alpha+\tau_{t} \lambda_{t}\|^2-\frac{\tau_{t}}{2}\|\lambda_t-\lambda_{\tau_{t},\epsilon}^\star\|^2.
\end{align*}
Since $\|V_{d}^{\pi_t}-\epsilon_t-\alpha\|\le \sqrt{m}H$ and $\|\lambda_t\|\le \sqrt{m}\left(\frac{4H}{\Xi}\right)$, we have $\|V_{d_i}^{\pi_t}-\epsilon_t-\alpha_i+\tau_{t} \lambda_{t,i}\|^2\le D$, where $D=m\bigl(H + \tau_{t}\,\left(\frac{4H}{\Xi}\right)\bigr)^2$. Hence, it holds that
\begin{align}\label{eq:convergence-2}
(\text{2})&\le
\frac{\|\lambda_{\tau_{t}, \epsilon}^\star-\lambda_{t}\|^2 - \|\lambda_{\tau_{t}, \epsilon}^\star - \lambda_{t+1}\|^2}{2\eta_{t}}+
\frac{\eta_{t}}{2}D
-\frac{\tau_{t}}{2}\|\lambda_t-\lambda_{\tau_{t},\epsilon}^\star\|^2\nonumber\\
&=\frac{1-\eta_{t}\tau_{t}}{2\eta_{t}}\|\lambda_{\tau_{t}, \epsilon}^\star - \lambda_{t}\|^2-\frac{1}{2\eta_{t}}\|\lambda_{\tau_{t}, \epsilon}^\star - \lambda_{t+1}\|^2+\frac{\eta_{t}}{2}D.
\end{align}

By combining \eqref{eq:convergence-1} and \eqref{eq:convergence-2}, we obtain
\begin{align*}
    \Phi_{t+1}&=\mathrm{KL}_{t+1}+\frac{1}{2}\|\lambda_{t+1}-\lambda_{\tau_{t},\epsilon}^\star\|^2\\
    &\le (1-\eta_{t}\tau_{t})(\mathrm{KL}_t+\frac{1}{2}\|\lambda_{t}-\lambda_{\tau_{t},\epsilon}^\star\|^2)+\frac{\eta_{t}^2}{2}(HC+D)-\eta_{t}((1)+(2))\\
    &\le (1-\eta_{t}\tau_{t})\Phi_t+\frac{\eta_{t}^2}{2}(HC+D),\quad & (\text{since }(1)+(2)\ge 0) \nonumber
\end{align*}
where $HC+D=H\exp{\left(\eta_{t} H\left(1+\frac{4mH}{\Xi}+\tau_{t}\log(A)\right)\right)}\left(2A^{\eta_{t}\tau_{t}}H^2\left(1+\frac{4mH}{\Xi}+\tau_{t}\log(A)\right)^2 + \frac{128\tau_{t}^2\sqrt{A}}{e^2}\right)+m\bigl(H + \tau_{t}\,\left(\frac{4H}{\Xi}\right)\bigr)^2$.
\end{proof}

Building upon the linear convergence of the regularized scheme established in Lemma \ref{lem:last-iterate convergence} and error bounds guaranteed by Lemma \ref{lem:error-bounds}, we now demonstrate that the iterates converge to the optimal policy of the original and unregularised problem. We first prove the following lemma and then provide the last-iterate convergence guarantee.

\begin{lemma}[Asymptotic bound on the recursive error sum]\label{lem: bound for b}
Let $\varepsilon \in (0, 1)$ be a sufficiently small positive number. Assume the step-size and regularization parameters satisfy $\eta_j = \Theta(\varepsilon^3)$ and $\tau_j = \Theta(\varepsilon)$ for all $j \in [t]$. Further, assume the number of steps $t = \Omega(\varepsilon^{-4}\log(1/\varepsilon))$. Then the following bound holds:
\begin{equation*}
    \sum_{j=1}^{t} \eta_j^2 \exp\left(-\sum_{k=j+1}^{t} \eta_k\tau_k \right) = \Theta(\varepsilon^2).
\end{equation*}
\end{lemma}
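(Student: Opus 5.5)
The plan is to treat the sum as a discounted geometric-type series and bound it from above and below with the same order. Write $\eta_j\in[c_\eta\varepsilon^3, C_\eta\varepsilon^3]$ and $\tau_j\in[c_\tau\varepsilon, C_\tau\varepsilon]$ for absolute constants, so that each product satisfies $\eta_k\tau_k\in[\underline{\rho},\overline{\rho}]$ with $\underline{\rho}=c_\eta c_\tau\varepsilon^4$ and $\overline{\rho}=C_\eta C_\tau\varepsilon^4$. Substituting $l=t-j$ gives $\sum_{k=j+1}^t\eta_k\tau_k\in[l\underline{\rho},l\overline{\rho}]$, and hence
\begin{equation*}
c_\eta^2\varepsilon^6\sum_{l=0}^{t-1}e^{-l\overline{\rho}}\;\le\;\sum_{j=1}^{t}\eta_j^2\exp\Bigl(-\sum_{k=j+1}^{t}\eta_k\tau_k\Bigr)\;\le\;C_\eta^2\varepsilon^6\sum_{l=0}^{t-1}e^{-l\underline{\rho}}.
\end{equation*}
This reduces the problem to estimating two truncated geometric series with ratio $e^{-\rho}$, where $\rho=\Theta(\varepsilon^4)$.

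For the upper bound, drop the truncation: $\sum_{l\ge0}e^{-l\underline{\rho}}=1/(1-e^{-\underline{\rho}})$. Since $\underline{\rho}\le 1$ for small $\varepsilon$, the inequality $1-e^{-x}\ge x/2$ on $(0,1]$ gives at most $2/\underline{\rho}=O(\varepsilon^{-4})$. The sum is therefore $O(\varepsilon^6\cdot\varepsilon^{-4})=O(\varepsilon^2)$, and this direction needs no assumption on $t$.

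For the lower bound, the finite series equals $(1-e^{-t\overline{\rho}})/(1-e^{-\overline{\rho}})$. The denominator is at most $\overline{\rho}$ because $1-e^{-x}\le x$. For the numerator, the hypothesis $t=\Omega(\varepsilon^{-4}\log(1/\varepsilon))$ with a suitable constant gives $t\overline{\rho}\ge\log 2$, so $1-e^{-t\overline{\rho}}\ge 1/2$. (In fact $t\overline{\rho}$ grows like $\log(1/\varepsilon)$, which makes the truncation error only $\mathrm{poly}(\varepsilon)$ relative; this is where the logarithmic factor in the horizon enters.) The sum is then at least $c_\eta^2\varepsilon^6/(2\overline{\rho})=\Omega(\varepsilon^2)$.

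Combining the two directions gives $\Theta(\varepsilon^2)$. The only delicate step is the lower bound, where the hidden constants in the $\Theta(\cdot)$ and $\Omega(\cdot)$ assumptions must be compatible, namely $t\cdot C_\eta C_\tau\varepsilon^4\ge\log 2$. I would make this explicit by requiring the constant in $t=\Omega(\varepsilon^{-4}\log(1/\varepsilon))$ to be at least $1/(C_\eta C_\tau)$ and taking $\varepsilon\le 1/2$. Everything else is an elementary computation with geometric series.
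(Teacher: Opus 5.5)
Your proposal is correct and follows the paper's proof essentially step for step. Both arguments sandwich the sum between $\varepsilon^6$ times truncated geometric series with ratio $e^{-\Theta(\varepsilon^4)}$, and bound the upper series by the full series $1/(1-e^{-\Theta(\varepsilon^4)})=O(\varepsilon^{-4})$. For the lower series, both use $t=\Omega(\varepsilon^{-4}\log(1/\varepsilon))$ only to keep the numerator $1-e^{-t\overline{\rho}}$ bounded away from zero. Your explicit inequalities $x/2\le 1-e^{-x}\le x$ and the condition $t\overline{\rho}\ge\log 2$ make the paper's $\Theta(\cdot)$ bookkeeping quantitative. As your computation also shows, $t\ge c\,\varepsilon^{-4}$ with a suitable constant $c$ would already suffice for this lemma.
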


\begin{proof}
Let the expression be denoted by $S_t$. By the definitions of asymptotic notation, there exist positive constants $c_{\eta,1}, c_{\eta,2}, c_{\tau,1}, c_{\tau,2}$ such that for all $j \in [t]$:
\begin{align*}
    c_{\eta,1}\varepsilon^3 \le \eta_j \le c_{\eta,2}\varepsilon^3 \quad \text{and} \quad c_{\tau,1}\varepsilon \le \tau_j \le c_{\tau,2}\varepsilon.
\end{align*}
Let $C_1 := c_{\eta,1}c_{\tau,1}$. The sum in the exponent is bounded below by $\sum_{k=j+1}^{t} \eta_k\tau_k \ge (t-j)C_1\varepsilon^4$. We then have:
\begin{align*}
    S_t &\le \sum_{j=1}^{t} (c_{\eta,2}^2 \varepsilon^6) \exp\left(-(t-j)C_1\varepsilon^4\right) \\
    &= c_{\eta,2}^2 \varepsilon^6 \sum_{m=0}^{t-1} \left(e^{-C_1\varepsilon^4}\right)^m \tag{$m = t-j$} \\
    &\le c_{\eta,2}^2 \varepsilon^6 \left( \frac{1}{1 - e^{-C_1\varepsilon^4}} \right).
\end{align*}
Since $1-e^{-x} = \Theta(x)$ for small $x>0$, the term in the parenthesis is $O(\varepsilon^{-4})$. Thus, $S_t \le c_{\eta,2}^2 \varepsilon^6 \cdot O(\varepsilon^{-4}) = O(\varepsilon^2)$. 

Let $C_2 := c_{\eta,2}c_{\tau,2}$. The sum in the exponent is bounded above by $\sum_{k=j+1}^{t} \eta_k\tau_k \le mC_2\varepsilon^4$. We then have:
\begin{align*}
    S_t &\ge \sum_{j=1}^{t} (c_{\eta,1}^2 \varepsilon^6) \exp\left(-mC_2\varepsilon^4\right) \\
    &= c_{\eta,1}^2 \varepsilon^6 \sum_{m=0}^{t-1} \left(e^{-C_2\varepsilon^4}\right)^m.
\end{align*}
The finite geometric sum is $\frac{1 - r^t}{1-r}$, where $r=e^{-C_2\varepsilon^4}$. As $t=\Omega(\varepsilon^{-4}\log(1/\varepsilon))$, the term $tC_2\varepsilon^4 = \Omega(\log(1/\varepsilon))$ grows sufficiently large as $\varepsilon \to 0$. Therefore, $r^t = \exp(-tC_2\varepsilon^4)$ approaches 0, which implies $1-r^t$ is bounded below by a positive constant for sufficiently small $\varepsilon$. The denominator $1-r$ is $\Theta(\varepsilon^4)$. Thus, the sum is $\Omega(\varepsilon^{-4})$.
Combining these terms, we obtain the lower bound: $S_t \ge c_{\eta,1}^2 \varepsilon^6 \cdot \Omega(\varepsilon^{-4}) = \Omega(\varepsilon^2)$. This completes the proof.
\end{proof}

Building upon the lemmas above, we now give the guarantee of last-iterate convergence.

\begin{theorem}[Last-iterate convergence] Conditioned on Assumption \ref{assumption}, for small $\varepsilon>0$ and $t=\Omega(\varepsilon^{-4}\log(1/\varepsilon))$, if $\eta_{t}=\Theta(\varepsilon^{3})$, $\tau_{t}=\Theta(\varepsilon)$ and $\epsilon_{i,t}=\Theta(\varepsilon)$ for all constraint $i$, then we have
\begin{equation*}
\bigl[V_{r}^{\pi^\star}-V_{r}^{\pi_t}\bigr]_{+}\le \Theta(\varepsilon),
\quad
\bigl[\alpha_{i}-V_{d_i}^{\pi_t}\bigr]_{+}=0
\quad
(\forall\,i\in[m]).
\end{equation*}
\end{theorem}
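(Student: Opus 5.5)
We plan to combine the known-model contraction of the potential (Theorem~\ref{lem:last-iterate convergence}) with the per-episode trade-off of Lemma~\ref{lem:error-bounds}. We then choose the constants hidden in the $\Theta(\cdot)$ schedules so that the safety margin strictly dominates every positive term in the violation bound.

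First, we unroll the recursion $\Phi_{t+1}\le(1-\eta_t\tau_t)\Phi_t+\frac{\eta_t^2}{2}(HC+D)$. Using $1-x\le e^{-x}$, this gives
\begin{equation*}
\Phi_{t}\le \exp\Bigl(-\sum_{j<t}\eta_j\tau_j\Bigr)\Phi_1+\frac{HC+D}{2}\sum_{j<t}\eta_j^2\exp\Bigl(-\sum_{k=j+1}^{t-1}\eta_k\tau_k\Bigr).
\end{equation*}
Since $\eta_t\le1$ and $\tau_t\le1$, the quantity $HC+D$ is bounded by a $T$- and $\varepsilon$-independent constant, exactly as in the bound on term (b) in the proof of Theorem~\ref{thm:strong bounds}. $\Phi_1$ is bounded by $H\log A+\frac{m}{2}(4H/\Xi)^2$.

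With $\eta_j\tau_j=\Theta(\varepsilon^4)$ and $t\ge c\,\varepsilon^{-4}\log(1/\varepsilon)$, the first term is at most $\Phi_1\varepsilon^{c'}$. By choosing the constant $c$ large, we can make this $\le\varepsilon^2$. Lemma~\ref{lem: bound for b} shows the second term is $\Theta(\varepsilon^2)$. Hence $\Phi_t\le K_\Phi\varepsilon^2$, and so $H^{3/2}\sqrt{2\Phi_t}\le c_\Phi\varepsilon$. Here $c_\Phi$ depends only on $(S,A,H,m,\Xi)$ and on the constants in $\eta$. It scales like the constant in $\eta_t=c_\eta\varepsilon^3$ divided by the square root of the constant in $\tau_t$.

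Second, we ensure we are in the regime where Lemma~\ref{lem:error-bounds} applies. That lemma requires $\epsilon_{i,t}\le\Xi/2$ so that the tightened problem is feasible and strongly dual; this is the role played by $t\ge C''$. With $\epsilon_{i,t}=c_\epsilon\varepsilon$, this holds for $\varepsilon\le \Xi/(2c_\epsilon)$, which "small $\varepsilon$" permits. The reward bound then follows directly:
\begin{equation*}
[V_r^{\pi^\star}-V_r^{\pi_t}]_+\le c_\Phi\varepsilon+H\log(A)\,c_\tau\varepsilon+\tfrac{H}{\Xi}c_\epsilon\varepsilon=\Theta(\varepsilon).
\end{equation*}

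Third, for the violation, Lemma~\ref{lem:error-bounds} gives
\begin{equation*}
[\alpha_i-V_{d_i}^{\pi_t}]_+\le\Bigl[c_\Phi\varepsilon+\tfrac{4H}{\Xi}c_\tau\varepsilon-c_\epsilon\varepsilon\Bigr]_+.
\end{equation*}
We simply choose the margin constant $c_\epsilon> c_\Phi+\frac{4H}{\Xi}c_\tau$, which makes the bracket negative and the violation exactly $0$.

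The main obstacle is the consistency of these constant choices. Enlarging $c_\epsilon$ shrinks the admissible range of $\varepsilon$ through $c_\epsilon\varepsilon\le\Xi/2$, and $c_\Phi$ must not depend on $c_\epsilon$. We must check this through $\Phi_1$, $HC+D$, and Lemma~\ref{lem: bound for b}. The saddle point moves with $\epsilon_t$, but the potential bound uses only the bounded quantities $\Phi_1\le H\log A+\frac{m}{2}(4H/\Xi)^2$ and $D$, neither of which involves $c_\epsilon$. So we fix $c_\eta,c_\tau$ first, then $c_\epsilon$, and finally the upper threshold on $\varepsilon$.

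A further subtlety is that the schedule is constant over the run, so the saddle point $(\pi^\star_{\tau,\epsilon},\lambda^\star_{\tau,\epsilon})$ does not drift. This is what lets the unrolled recursion be applied to a single potential. We would state this explicitly, and note that the result is in the known-model setting of Theorem~\ref{lem:last-iterate convergence}.
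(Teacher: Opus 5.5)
Your proposal is correct and follows essentially the same route as the paper: you unroll the known-model contraction of the potential, bound the initial-potential term using $t=\Omega(\varepsilon^{-4}\log(1/\varepsilon))$ and the accumulated term via Lemma~\ref{lem: bound for b}, feed the result into Lemma~\ref{lem:error-bounds}, and choose the margin constant larger than the sum of the constants in front of the positive $\Theta(\varepsilon)$ terms. Your write-up is tighter than the paper's in three places the paper leaves implicit: the feasibility regime $c_\epsilon\varepsilon\le\Xi/2$ (handled in the paper only through $t\ge C''$), the order in which the constants are fixed, and the saddle point not drifting under a constant schedule; the one slip is cosmetic, since $c_\Phi$ scales like $\sqrt{c_\eta/c_\tau}$ rather than $c_\eta/\sqrt{c_\tau}$.
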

\begin{proof} According to Lemma \ref{lem:last-iterate convergence}, we have $\Phi_{t+1}\le (1-\eta_{t}\tau_{t})\Phi_t+\frac{\eta_{t}^2}{2}(HC+D)$, and thus it holds that
\begin{align*}
\Phi_{t+1}&\le \left(\prod_{j=1}^t \left(1-\eta_{j}\tau_{j}\right)\right)\Phi_1+\sum_{j=1}^t\left[\frac{\eta_j^2}{2}\left(HC+D\right)\left(\prod_{k=j+1}^t \left(1-\eta_{k}\tau_{k}\right)\right)\right]\\
&\le \left(\prod_{j=1}^t \left(1-\eta_{j}\tau_{j}\right)\right)\Phi_1+\frac{HC+D}{2}\sum_{j=1}^t\left[\eta_j^2\left(\prod_{k=j+1}^t \left(1-\eta_{k}\tau_{k}\right)\right)\right]\\
&\le \exp\left(-\sum_{j=1}^t\eta_j\tau_j\right)\Phi_1+\frac{HC+D}{2}\sum_{j=1}^t\left[\eta_j^2\exp\left(-\sum_{k=j+1}^t\eta_{k}\tau_{k}\right)\right].
\end{align*}
Combining Lemmas \ref{lem:convergence} and \ref{lem:error-bounds}, we obtain the reward distance between the optimal policy and the exact policy as
\begin{align}
\bigl[V_{r}^{\pi^\star}-V_{r}^{\pi_t}\bigr]_{+}&\lesssim H^{3/2}\exp{(-\sum_{j=1}^t\eta_{j}\tau_{j} /2)}\sqrt{\Phi_1}\tag{a}\\
&\quad+H^{3/2}\sqrt{HC+D}\left(\sum_{j=1}^t\eta_j^2\exp\left(-\sum_{k=j+1}^t \eta_k\tau_k\right)\right)^{1/2}\tag{b}\\
&\quad+\tau_{t} H\log(A)\tag{c}\\
&\quad+\frac{H}{\Xi}\epsilon_t.\tag{d}
\end{align}
We now analyze the order of each term with the chosen parameters $\tau_{t}=\Theta(\varepsilon)$, $\eta_{t}=\Theta(\varepsilon^{3})$ and $t=\Omega(\varepsilon^{-4}\log(1/\varepsilon))$. We discuss each term individually.

For term (a), the product in the exponent scales as:
\begin{equation*}
\sum_{j=1}^t\eta_j\tau_j = \Theta(\varepsilon^{3}) \cdot \Theta(\varepsilon) \cdot \Omega(\varepsilon^{-4}\log(1/\varepsilon)) = \Omega(\log(1/\varepsilon)).
\end{equation*}
As $\varepsilon$ goes to zero, the exponent $\sum_{j=1}^t\eta_j\tau_j$ grows sufficiently large, causing $\exp{(-\sum_{j=1}^t\eta_j\tau_j/2)}$ to decay to order $O(\varepsilon)$.
For potential term $\Phi_1$, we have $\Phi_1\le (H\log(A)+\frac{1}{2}m\left(\frac{4H}{\Xi}\right)^2)^{1/2}$. Thus, it shows
\begin{equation}
(a)= O(\varepsilon).
\end{equation}

For term (b), by Lemma \ref{lem: bound for b}, the term scales as $\sqrt{\eta_t/\tau_t}$. We have
\begin{equation*}
\left(\sum_{j=1}^t\eta_j^2\exp\left(-\sum_{k=j+1}^t \eta_k\tau_k\right)\right)^{1/2}=\Theta\left(\sqrt{\frac{\varepsilon^3}{\varepsilon}}\right)=\Theta(\varepsilon).
\end{equation*}
Note that $H^{3/2}\sqrt{HC+D}$ is a constant. Thus, we have
\begin{equation}
(b)= \Theta(\varepsilon).
\end{equation}

In terms of (c), with the new parameter choice, we get
\begin{equation}
(c)=\tau_{t} H\log(A)=\Theta(\varepsilon).
\end{equation}

For term (d), it holds
\begin{equation}
(d)=\frac{H}{\Xi}\epsilon_{i,t}=\Theta(\varepsilon).
\end{equation}

By Lemma \ref{lem:convergence} and Lemma \ref{lem:error-bounds}, the constraint violation between the thresholds and the policy satisfies
\begin{align}
\bigl[\alpha_{i}-V_{d_i}^{\pi_t}\bigr]_{+}&\lesssim \big[H^{3/2}\exp{(-\sum_{j=1}^t\eta_{j}\tau_{j} /2)}\sqrt{\Phi_1}\tag{a'}\\
&\quad+H^{3/2}\sqrt{HC+D}\left(\sum_{j=1}^t\eta_j^2\exp\left(-\sum_{k=j+1}^t \eta_k\tau_k\right)\right)^{1/2}\tag{b'}\\
&\quad+\left(\frac{4H}{\Xi}\right)\tau_{t}\tag{c'}\\
&\quad-\epsilon_{i,t}\big]_+.\tag{d'}
\end{align}

The bounds of terms (a') and (b') are equal to those in the case of reward analysis. For term (c'), with the choice of regularized parameter $\tau_{t}$, we have
\begin{equation}
(c')=\tau_{t}\left(\frac{4H}{\Xi}\right)=\Theta(\varepsilon).
\end{equation}

Let $E_t = (a') + (b') + (c')$. Here, $(b')$ and $(c')$ are dominant terms of order $\Theta(\varepsilon)$, and $(a')$ is upper bounded by $O(\varepsilon)$. Specifically, let $(b') \le C_{b}\varepsilon$, $(c') \le C_{c}\varepsilon$, and $(a') \le C_{a}\varepsilon$. The total error is bounded by $E_t \le (C_{a} + C_{b} + C_{c})\varepsilon$.

The safety margin is chosen as $\epsilon_{i,t} = C_\epsilon \varepsilon$ for a constant $C_\epsilon > 0$. We select $C_\epsilon$ strictly larger than the sum of these constants, i.e., $C_\epsilon > C_{a} + C_{b} + C_{c}$. The expression is therefore bounded as:\begin{equation*}E_t - \epsilon_{i,t} \le (C_{a} + C_{b} + C_{c} - C_\epsilon)\varepsilon < 0.\end{equation*}

Consequently, for all sufficiently small $\varepsilon$, the term $E_t - \epsilon_{i,t}$ is strictly negative. This leads to the following:
\begin{equation*}
\bigl[\alpha_{i}-V_{d_i}^{\pi_t}\bigr]_{+} \lesssim [ E_t - \epsilon_{i,t} ]_{+} = 0.
\end{equation*}
This means $\bigl[\alpha_{i}-V_{d_i}^{\pi_t}\bigr]_{+}= 0$. This completes the proof.
\end{proof}

\section{Additional Details of Experiments}\label{app:experiment}

All experiments are conducted in a randomly generated CMDP, with results averaged over five distinct random seeds. The environment is defined by a state space of $S=20$ states, an action space of $A=5$ actions, a finite horizon of $H=5$ steps and $m=1$ constraint. The environment's dynamics are stochastic; for each state-action pair $(s,a)$ and step $h$, the transition probabilities $\tilde{p}_h(\cdot|s, a)$ are sampled from a Dirichlet distribution with a low concentration parameter of 0.1 to foster sparse transitions.

The learning challenge is shaped by the conflicting design of the reward and constraint functions. 
The reward $\tilde{r}_h(s,a)$ is binary. At initialization, we independently draw $\tilde r_h(s,a)\sim\mathrm{Bernoulli}(0.5)$ for each step $h$ and state-action pair $(s,a)$. Hence, each $\tilde{r}_h(s,a)\in\{0,1\}$. The constraint value $\tilde{d}_h(s,a)$ is defined in opposition to the reward function: $\tilde{d}_h(s,a)=1-\tilde{r}_h(s,a)$. This design creates a challenging learning problem where the agent must balance the conflicting objectives of maximizing rewards while satisfying the constraint~\citep{moskovitz2023reload}. At the beginning of each run, the initial state $s_0$ is selected uniformly at random and remains fixed for all subsequent episodes.

We assess algorithm performance under two threshold scenarios. In the fixed-threshold setting, the threshold $\alpha$ is set to half of the maximum achievable expected constraint value: $\alpha=\frac{1}{2}\max_{\pi\in \Pi} V_{d}^\pi$. This ensures the constraint is both feasible and non-trivial. To model more dynamic conditions, the stochastic-threshold setting samples a constraint value $\alpha_t$ for each episode $t$ from a Normal distribution, $\alpha_t \sim \mathcal{N}(\alpha,(0.5\alpha)^2)$, centered at the fixed threshold value, with a standard deviation equal to half of its mean.

Each algorithm is executed for $T=80000$ episodes, with the confidence parameter $\delta=0.1$. To effectively translate theoretical guarantees into practical performance, we introduce empirically tuned scaling factors. The exploration bonus is scaled by a factor of $10^{-3}$, akin to that of~\cite{kitamura2024policy}. Similarly, the safety margin is scaled by a factor of $10^{-5}$ to mitigate the over-conservatism of the theoretical bound and observe the algorithms' behavior in relatively smaller episodes. To validate these choices, we conducted a sensitivity analysis on the exploration bonus ($c_b$) and safety margin ($c_\epsilon$) scalers, as shown in Figure \ref{fig:sensitivity}. The results reveal a trade-off: increasing  $c_\epsilon$ effectively suppresses strong constraint violations but induces over-conservatism that inflates reward regret. Our selected margin strikes an optimal balance, clamping violations near zero without the significant regret penalties associated with looser theoretical bounds. Similarly, the exploration bonus scaler calibrates the magnitude of optimism with empirical uncertainty.

\begin{figure}[htp]
    \centering
    \includegraphics[width=0.88\linewidth]{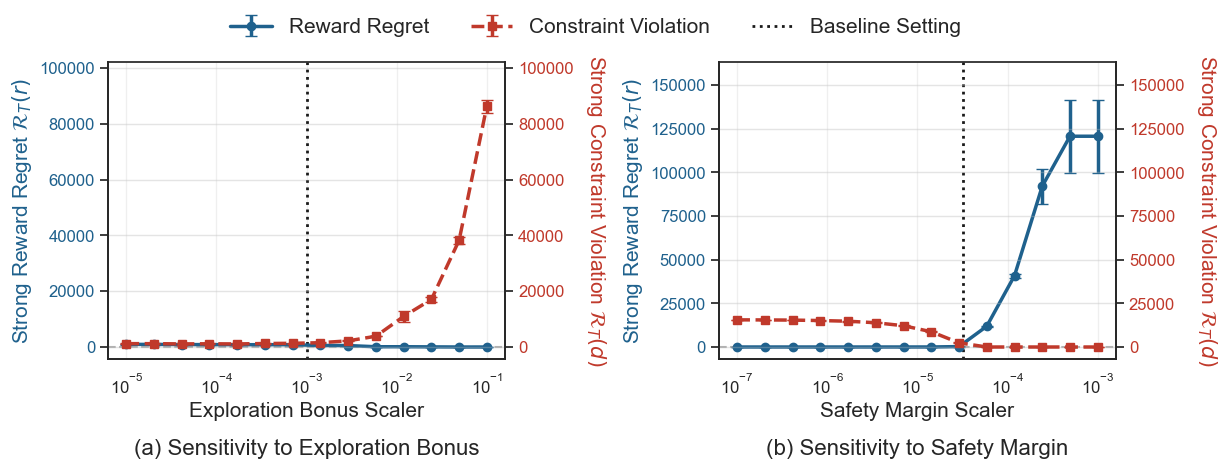}
    \caption{Impact of (a) exploration bonus scaler and (b) safety margin scaler on strong regret and violation. Vertical lines denote the selected baseline parameters. Results are averaged over 5 random seeds with standard error bands.}
    \label{fig:sensitivity}
\end{figure}

All experiments were performed on a Lenovo ThinkBook 14 G5+ APO with an AMD Ryzen 7 7840H.

\section{Declaration on Large Language Models}

Large Language Models were used for (1) polishing the wording of the manuscript for clarity and readability, (2) brainstorming about algorithm names and their abbreviations, and (3) assisting in formalizing proof sketches into some lemma statements, which is later manually checked to ensure correctness.

\end{document}